\theoremstyle{plain}
\theoremstyle{remark}
\newcommand{\lebesgue}{\lambda}  
\newcommand{\Cjack}{\what{C}^{\textup{jk-mmx}}_{m,\alpha,\delta}}
\newcommand{\Csplit}{\what{C}^{\textup{split}}_{m,\alpha,\delta}}
\newcommand{\Chcp}{\what{C}^{\textup{hcp}}_{m,\alpha}}
\newcommand{\Chjk}{\what{C}^{\textup{hjk+}}_{m,\alpha}}
\newcommand{\Cresizedsplit}{\what{C}^{\textup{resized}}_{m,\alpha,\delta}}
\newcommand{\Bsplit}{B^{\textup{split}}_{n,\tau}}
\newcommand{\Bsplitn}{B^{\textup{split}}_{n,\tau(n)}}
\newcommand{\alg}{\mathsf{A}}
\newtheorem{intassumption}{Assumption}
\begin{document}

\begin{frontmatter}
\title{Predictive Inference in Multi-environment Scenarios}
\begin{aug}
\author[A]{\fnms{John C. }~\snm{Duchi}\ead[label=e1]{jduchi@stanford.edu}},
\author[B]{\fnms{Suyash }~\snm{Gupta}\ead[label=e2]{suyash028@gmail.com}},
\author[C]{\fnms{Kuanhao}~\snm{Jiang}\ead[label=e3]{kuanhaojiang@g.harvard.edu}}
\and
\author[D]{\fnms{Pragya}~\snm{Sur}\ead[label=e4]{pragya@fas.harvard.edu}}

\address[A]{John C. Duchi is Associate Professor, Department of Statistics \& Department of Electrical Engineering, Stanford University, Stanford 94305, USA\printead[presep={\ }]{e1}.}

\address[B]{Suyash Gupta is Senior Applied Researcher, LinkedIn, Sunnyvale 94085, USA\printead[presep={\ }]{e2}.}

\address[C]{Kuanhao Jiang is Ph.D. Student, Department of Statistics, Harvard University, Cambridge 02138, USA\printead[presep={\ }]{e3}.}

\address[D]{Pragya Sur is Assitant Professor, Department of Statistics,
Harvard University, Cambridge 02138, USA\printead[presep={\ }]{e4}.}

\end{aug}

\begin{abstract}
We address the challenge of constructing valid confidence intervals and sets
in problems of prediction across multiple environments.
We investigate two types of coverage suitable for these problems, extending
the jackknife and split-conformal methods to show how to obtain
distribution-free coverage in such non-traditional, potentially hierarchical
data-generating scenarios.
We demonstrate a novel
resizing method to adapt to problem difficulty, which applies both to
existing approaches for predictive inference and the
methods we develop; this reduces prediction set sizes using limited
information from the test environment, a key to the methods' practical
performance, which we evaluate through neurochemical sensing and species
classification datasets.
Our contributions also include extensions for
settings with non-real-valued responses, a theory of consistency for
predictive inference in these general problems, and
insights on the limits of conditional coverage
\end{abstract}

\begin{keyword}
\kwd{Conformal prediction}
\kwd{distribution-free inference}
\kwd{hierarchical sampling}
\end{keyword}

\end{frontmatter}


\section{Introduction}
\label{sec:intro}

In the predictive inference problem, a statistician observes a training
sample $\{(X_i, Y_i)\}_{i = 1}^n$ of size $n$ and wishes to predict the
unknown value of $Y_{n+1}$ at a test point $X_{n + 1}$, where in the
classical setting, $\{(X_i,Y_i)\}_{i=1}^{n+1}$ are exchangeable random
variables.  Vovk et al.'s \emph{conformal
prediction}~\citep{VovkGaSh05} addresses this problem, even in finite sample
and distribution-free settings, constructing a prediction band $\what{C}$
such that $\what{C}(X_{n+1})$ covers $Y_{n+1}$ with a desired probability
on average over the draw $(X_{n + 1}, Y_{n + 1})$.

In contemporary problems, however, the statistician rarely observes data
from a single set of identically distributed training examples,
instead receiving data implicitly or explicitly arising from
multiple environments.
For instance, a neuroscientist investigating diseases
of the nervous system may use multiple electrodes to measure
neurotransmitter levels, with a goal to predict these levels at future time
points. Variations---whether in voltage potentials, experimental conditions,
build of the electrode, or otherwise---yield data from different electrodes
that follow distinct distributions~\cite{Loewinger856385}.
To understand the
neurobiological underpinnings of decision making, the statistician must
leverage information from multiple electrodes to develop a prediction
model that alleviates spurious electrode-to-electrode variations.
Additionally, even in
cases in which one tries to exactly replicate data generating methodology,
distribution shift means that prediction methods lose
substantial accuracy~\cite{RechtRoScSh19,TaoriDaShCaReSc20}.
We investigate and develop methodology for constructing prediction intervals
for such multi-environment problems.

\subsection{Problem Setting}

Data from multiple environments should improve predictions on a target only
if the training and test data share common characteristics.
We begin by considering a framework of hierarchical sampling
\cite{LoewingerPa22, LeeBaWi23, DunnWaRa23}, where one assumes that data
from different environments arise from a common hierarchical
model.
Let $\mc{P}$ be a set of distributions on $\mc{X}
\times \mc{Y}$, where $\mc{Y}$ is the outcome/response space and $\mc{X}$ is
the space of covariates, and $\mu$ denote a probability distribution on
$\mc{P} \times \N$.
Consider a sequence of exchangeable pairs generated from $\mu$,
\begin{equation}
  \label{eqn:exchangeable-generation}
  (P_{XY}^{1},n_1),\ldots,
  (P_{XY}^{m+1},n_{m+1}),
\end{equation}
where in training we observe i.i.d.\ samples
\begin{equation}
  \label{eqn:iid-within-environment}
  (X^i, Y^i) \defeq
  \{X_j^i,Y_j^i\}_{j=1}^{n_i},
  ~~~
  (X^i_j, Y_j^i) \simiid P_{XY}^i,
\end{equation}
for $j \in [n_i]$ and
$i \in [m]$, treating $P^i_{XY}$ as the $i$th environment.  In the test,
we observe an i.i.d.\ sample $\{X_j^{m+1}\}_{j=1}^{n_{m+1}}$ generated from
the marginal distribution on $X$ according to the $(m + 1)$st
$P_{XY}^{m+1}$, and we wish to construct confidence sets for the unknown
responses $\{Y_j^{m+1}\}_{j=1}^{n_{m+1}}$ with valid coverage.
We depart
from the traditional predictive inference literature in the sense that
individual observations need not be exchangeable.
Instead, hierarchical sampling assumes that
within each environment, the observations are
exchangeable, and the environments themselves are exchangeable as well.
This
weaker assumption allows addressing scenarios where the data may exhibit
variations across different environments, but defining valid
coverage requires careful consideration.

We consider two coverage notions
for multi-environment settings.
The first considers properties close to those conformal prediction
provides~\cite{VovkGaSh05, AngelopoulosBa23}, relying on the
i.i.d.\ sampling from $P^i$ in the
model~\eqref{eqn:exchangeable-generation}--\eqref{eqn:iid-within-environment}:
we seek $\what{C}$ covering a single new example with a
prescribed probability.
\begin{definition}
  \label{definition:marginal-new-coverage}
  A confidence set mapping $\what{C} : \mc{X} \toto \mc{Y}$
  provides $1 - \alpha$ hierarchical coverage in
  the setting~\eqref{eqn:exchangeable-generation} if
  for the single observation $(X_1^{m + 1}, Y_1^{m + 1}) \sim P^{m+1}_{XY}$,
  \begin{equation}
    \label{eqn:marginal-new-coverage}
    \P\left(Y_1^{m + 1} \in \what{C}(X_1^{m + 1})\right)
    \ge 1 - \alpha.
  \end{equation}
\end{definition}
\noindent
Dunn et al.~\citep{DunnWaRa23} and Lee et al.~\cite{LeeBaWi23} both adopt the
guarantee~\eqref{eqn:marginal-new-coverage} as a notion of coverage.

Instead of this marginal guarantee over a single observation from the new
environment $m + 1$ and the i.i.d.\ within environment
setting~\eqref{eqn:iid-within-environment}---and given that we expect to
collect multiple observations from each environment---it is also natural to
consider coverage notions over entire new samples.
Thus, we consider data generation where only the full samples are
exchangeable: let $\mu$ be a probability distribution on $\environments \times
\N$, where $\environments$ is a collection of environments; then we receive
exchangeable samples
\begin{equation}
  \label{eqn:exchangeable-arbitrary-sample}
  (\env_i, n_i) \sim \mu
  ~~
  \mbox{and} ~~
  (X^i, Y^i) \defeq \{X_j^i, Y_j^i\}_{j = 1}^{n_i},
\end{equation}
where $(X^i, Y^i)$ is generated
via a Markov kernel from $\mc{E} \times \N$ to $(\mc{X} \times \mc{Y})^{n_i}$.
This makes no assumptions about the relationships between individual
observations $(X_j^i, Y_j^i)$, so we consider a stronger coverage property:
\begin{definition}
  \label{definition:valid-coverage}
  A confidence set mapping $\what{C} : \mc{X} \toto \mc{Y}$ provides
  distribution-free level $(\alpha, \delta)$-coverage in
  the setting~\eqref{eqn:exchangeable-arbitrary-sample} if
  \begin{equation}
    \label{eqn:coverage-def}
    \P\bigg(\frac{1}{n_{m+1}}
    \sum_{j = 1}^{n_{m+1}} \indic{Y_j^{m + 1} \in \what{C}(X_j^{m + 1})}
    \ge 1 - \alpha \bigg) \ge 1 - \delta.
  \end{equation}
\end{definition}
\noindent
That is, $\what{C}$ covers a
$1 - \alpha$ fraction of observed examples in the new environment
with probability at least $1 - \delta$.

\subsection{Main Contributions}

Our main contributions include the following.
\begin{enumerate}[1.]
\item We introduce the multi-environment jackknife and split conformal
  methods (Algorithms \ref{alg:multi-env} and \ref{alg:multi-env-split},
  respectively). Theorems \ref{theorem:main-coverage}
  and \ref{theorem:main-coverage-split} establish that these provide
  distribution-free level $(\alpha, \delta)$-coverage~\eqref{eqn:coverage-def} when
  the response space $\mc{Y} = \mathbb{R}$.
  These algorithms
  straightforwardly extend
  (see Algorithms \ref{alg:nested-multi-env} and
  \ref{alg:nested-multi-env-split}) to general response spaces $\mc{Y}$,
  and they continue to provide
  $(\alpha, \delta)$-coverage (see Theorems
  \ref{theorem:nested-coverage} and \ref{theorem:nested-coverage-split}
  in Sec.~\ref{sec:general-confidence}).
\item We investigate the behavior of algorithms targeting $(\alpha,
  \delta)$-coverage~\eqref{eqn:coverage-def} alongside previous work on
  hierarchical conformal inference~\cite{LeeBaWi23}, evaluating methods on
  neurochemical sensing~\cite{Loewinger856385} and species
  classification~\cite{KohSaMaXiZhBaHuYaPhBeLeKuPiLeFiLi21,beery2020iwildcam}
  data.
  Our experiments (Sec.~\ref{sec:real_data})
  reveal that the multi-environment jackknife tends to
  yield smaller confidence sets than the split-conformal methodology when
  there are fewer training environments. Conversely, the multi-environment
  split conformal method demonstrates better performance when the number of
  training environments is large.
\item The experiments indicate that multi-environment algorithms can be
  conservative, and to mitigate this, we propose and evaluate
  a resizing method
  (Algorithm \ref{alg:resized-nested-multi-env-split} in
  Sec.~\ref{sec:resizing_algos}) to reduce the size of prediction sets given
  access to a limited amount of information from the test environment.
  This strategy appears to be useful for predictive inference more
  generally, including to hierarchical conformal prediction
  (HCP)~\cite{LeeBaWi23}, where it yields notable set size reductions.
\item We develop new consistency theory (see Theorems
  \ref{theorem:general-consistency} and
  \ref{theorem:general-consistency-split} in
  Sec.~\ref{sec:consistency-results}) for multi-environment predictive
  inference, showing how jackknife and split
  conformal methods produce consistent confidence sets.
\item We also touch briefly (see
  Section~\ref{sec:environment-conditional-coverage}) on
  environment-conditional coverage; in analogy to standard single
  environment settings~\cite{Vovk12, BarberCaRaTi21a}, we show that
  conditional coverage is impossible, extending these ideas to reflect that
  environments may be discrete. Approximate environment-conditional
  coverage can, however, hold.
\end{enumerate}

\subsection{Related Work}

Standard predictive inference methods include
split-conformal~\cite{VovkGaSh05, LeiWa14, CauchoisGuDu21, RomanoSeCa20, CauchoisGuAlDu20, CauchoisGuAlDu22, Gupta22} and
modified jackknife procedures~\cite{BarberCaRaTi21}.  The current paper
extends these to multi-environment problems. Split conformal prediction
separates the data into a training and a calibration set, using the training
data to fit a model and the calibration data to set a threshold for
constructing prediction intervals. Since it only splits the data once, the
method may sacrifice statistical efficiency for computational
gains. Addressing this issue, jackknife-style procedures use all available
data for training and calibration by fitting leave-one-out models,
increasing computational cost for accuracy. Both methods require
exchangeability of the entire observed data to ensure valid coverage, while
multi-environment methods work under the weaker assumption that
within (but not across) environments, observations are exchangeable,
and the environments themselves are exchangeable. 

Recognizing the challenges inherent in collecting data, implicitly or
explicitly, across multiple environments, a recent literature considers
conformal prediction under hierarchical models, assuming the
multi-environment
setting~\eqref{eqn:exchangeable-generation}--\eqref{eqn:iid-within-environment}.
Among these, Dunn et al.~\citep{DunnWaRa23} and Lee et al.~\citep{LeeBaWi23}
study conformal prediction under hierarchical sampling and propose methods
satisfying the marginal coverage
guarantee~\eqref{eqn:marginal-new-coverage}, as well as a few other
distribution-free guarantees, which may be a satisfying coverage guarantee
for many applications.
The $(\alpha, \delta)$-coverage condition~\eqref{eqn:coverage-def} requires
coverage for multiple examples in the test environment simultaneously
without assuming i.i.d.\ sampling within each environment, and
it is unclear if existing multi-level conformal approaches satisfy it.


\section{Methods for regression}
\label{sec:basic-method}

To introduce our basic methods, we assume the target space $\mc{Y} = \R$.
In
this case, we wish to return a confidence set $C : \mc{X} \toto \R$, where
typically $C(x)$ is an interval.  We define a fitting algorithm $\alg$ to be
a function that takes a collection of samples as input and outputs an
element of $\mc{F} \subset \mc{X} \to \mc{Y}$.
To describe our algorithms formally, we introduce two quantile-type
mappings, where for $v \in \R^n$ we let $v_{(1)} \le
v_{(2)} \le \cdots \le v_{(n)}$ be its order statistics.

\begin{definition}[Quantile mappings]
  \label{def:quantiles}
  For $v \in \R^n$, 
  \begin{align*}
    \quantplus(\{v_i\}) & \defeq
    v_{(\ceil{(1 - \alpha)(n + 1)})} \\
    \quantminus(\{v_i\}) & \defeq v_{(\floor{\alpha(n + 1)})}
    = -\quantplus(\{-v_i\}).
  \end{align*}
\end{definition}
\noindent
That is, $\quantplus$ gives the $\ceil{(1 - \alpha)(n+1)}$th smallest
value of its argument, and similarly for $\quantminus$.

\subsection{Multi-environment Jackknife-minmax}

We first introduce a multi-environment version of
Barber et al.'s jackknife-minmax~\cite{BarberCaRaTi21}. The
idea is simple: we repeatedly fit a predictor $\what{f}_{-i}$ to all
environments \emph{except} environment $i$, then evaluate residuals on
environment $i$ to gauge the variability of predicting on a new
environment.
We define $\what{f} = \alg((X^1, Y^1), \ldots, (X^m, Y^m))$ to
be the predictor we would fit given each sample
$(X^i, Y^i) = \{(X_j^i, Y_j^i)\}_{j = 1}^{n_i}$,
and consider the leave-one-out predictors
\begin{equation}
  \label{eqn:loo-predictor}
  \what{f}_{-i}
  \defeq \alg \left(\left\{(X^k, Y^k)\right\}_{k \neq i, k \le m}
  \right) 
\end{equation}
From these, we construct the leave-one-out residuals
for each example $j = 1, \ldots, n_i$ in environment $i$,
letting
\begin{equation*}
  R^i_j = |Y^i_j -\what{f}_{-i}(X^i_j)|
  ~~ \mbox{for} ~ j \in [n_i],
  ~ i \in [m].
\end{equation*}

We then pursue a blocked confidence set construction. Within
each environment, we let $S^i_{1 - \alpha}
= \quantplus[n_i,\alpha](\{R_j^i\}_{j=1}^{n_i})$ be the
$1 - \alpha$ quantile of the residuals for predicting
in environment $i$ using $\what{f}_{-i}$. To obtain
an interval that is likely to cover, we use quantiles
of these residuals \emph{across} environments, as the environments
are exchangeable. We therefore construct intervals of the form
\begin{align*}
  C(x) \defeq & \left[f_{\textup{low}}(x) - \quantplus[m,\delta](\{S_{1-\alpha}^i\}_i), \right. \\
    & \left. \qquad
    f_{\textup{high}}(x) + \quantplus[m,\delta](\{S_{1-\alpha}^i\}_i) \right],
\end{align*}
where it remains to choose $f_{\textup{low}}$ and
$f_{\textup{high}}$ to obtain valid coverage.
Algorithm~\ref{alg:multi-env} achieves this,
using the minimum and maximum values of the held-out predictions
to construct the confidence set.

\begin{center}
  \algbox{
    \label{alg:multi-env}
    \textbf{Multi-environment Jackknife-minmax:} the regression case
  }{
    \textbf{Input:} samples $\{X^i_j, Y^i_j\}_{j = 1}^{n_i}$,
    $i = 1, \ldots, m$, confidence levels $\alpha, \delta$ \\
    
    \textbf{For} $i = 1, \ldots, m$,

    \hspace{1em} \textbf{set}
    $\what{f}_{-i}$ to the leave-one-out predictor~\eqref{eqn:loo-predictor}

    \hspace{1em} \textbf{construct} residuals
    \begin{equation*}
      R^i_j = |Y_j^i - \what{f}_{-i}(X_j^i)|,
      ~~ j = 1, \ldots, n_i,
    \end{equation*}
    \hspace{1em} and residual quantiles
    \begin{equation*}
      S^i_{1 - \alpha}
      = \quantplus[n_i,\alpha] \left(R_1^i, R_2^i,
      \ldots, R_{n_i}^i\right).
    \end{equation*}
    
    \textbf{Return} confidence interval mapping
    \begin{align*}
      \Cjack(x) \defeq
      &\Big[\min_{i \in [m]} \what{f}_{-i}(x) - \quantplus[m,\delta]\left(
        \{S^i_{1-\alpha}\}_{i=1}^m\right),\\
       & \qquad \max_{i \in [m]} \what{f}_{-i}(x) + \quantplus[m,\delta]\left(
        \{S^i_{1-\alpha}\}_{i=1}^m\right)
        \Big].
    \end{align*}
  }
\end{center}

\begin{theorem}
  \label{theorem:main-coverage}
  The multi-environment confidence mapping
  $\Cjack$
  Algorithm~\ref{alg:multi-env} returns
  provides level $(\alpha, \delta)$-coverage~\eqref{eqn:coverage-def}.
\end{theorem}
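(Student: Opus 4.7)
The plan is to reduce the coverage event to a single quantile inequality, lift that inequality to a permutation-symmetric leave-two-out structure, and close the argument with a deterministic counting bound.

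\emph{Reduction to a quantile comparison.} For each $i\in[m]$, let
\[
T^i_{1-\alpha} := \quantplus[n_{m+1},\alpha]\bigl(\{|Y^{m+1}_j - \what{f}_{-i}(X^{m+1}_j)|\}_{j=1}^{n_{m+1}}\bigr),
\]
and set $Q:=\quantplus[m,\delta](\{S^i_{1-\alpha}\}_{i=1}^m)$. The event $\{\min_{i\in[m]}T^i_{1-\alpha}\le Q\}$ already implies the desired $(1-\alpha)$-fraction coverage on environment $m+1$: picking any minimizer $i^\star$, at least $\lceil(1-\alpha)(n_{m+1}+1)\rceil$ of the residuals $|Y^{m+1}_j-\what{f}_{-i^\star}(X^{m+1}_j)|$ are bounded by $Q$, and for each such $j$ the value $\what{f}_{-i^\star}(X^{m+1}_j)$ trivially lies between $\min_i\what{f}_{-i}(X^{m+1}_j)$ and $\max_i\what{f}_{-i}(X^{m+1}_j)$, so $Y^{m+1}_j\in\Cjack(X^{m+1}_j)$. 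It thus suffices to show $\P(\min_i T^i_{1-\alpha}\le Q)\ge 1-\delta$.

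\emph{Symmetric embedding.} To prove this I would lift to leave-two-out quantities. For each ordered pair $(i,j)$ with $i\neq j$ in $[m+1]$, set $\what{f}_{-(i,j)} := \alg(\{(X^k,Y^k)\}_{k\in[m+1]\setminus\{i,j\}})$ and
\[
S_{i,j} := \quantplus[n_i,\alpha]\bigl(\{|Y^i_l - \what{f}_{-(i,j)}(X^i_l)|\}_{l=1}^{n_i}\bigr),
\]
so that $S^i_{1-\alpha}=S_{i,m+1}$ and $T^i_{1-\alpha}=S_{m+1,i}$. Exchangeability of the $m+1$ environments makes the random matrix $M=(S_{i,j})_{i\neq j}$ permutation-symmetric in the strong sense $(M_{i,j})\stackrel{d}{=}(M_{\pi(i),\pi(j)})$ for every $\pi$ on $[m+1]$. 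The sufficient event rewrites as: the minimum of row $m+1$ of $M$ is at most the $k$-th smallest entry of column $m+1$, where $k:=\lceil(1-\delta)(m+1)\rceil$.

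\emph{Counting bad indices.} Call $i_0\in[m+1]$ \emph{bad} if the analogous row/column inequality fails at $i_0$. Permutation symmetry gives $\P(\text{bad }m+1)=\E[\#\text{bad}]/(m+1)$, so the theorem reduces to the almost-sure bound $\#\text{bad}\le m+1-k$ on every realization of $M$. I would establish this deterministically by ordering $u_1\le\cdots\le u_{m+1}$ with $u_i:=\min_{j\neq i}M_{i,j}$: since $M_{i,j}\ge u_i$, the $k$-th smallest entry of column $i_0$ dominates the $k$-th smallest of $\{u_i\}_{i\neq i_0}$, which equals $u_{k+1}$ whenever $i_0\le k$. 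For such $i_0$, $u_{i_0}\le u_k\le u_{k+1}$ prevents badness, so at most the top $m+1-k\le\delta(m+1)$ indices can be bad, yielding $\P(\text{bad }m+1)\le\delta$. The main obstacle is selecting the right symmetrization: because $\what{f}_{-i}$ implicitly always leaves out the test environment, one really needs the leave-two-out lift in order to obtain a fully permutation-symmetric $(m+1)\times(m+1)$ matrix on which the combinatorial counting argument produces the sharp $\delta$-level guarantee.
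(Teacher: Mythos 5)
Your proposal is correct and follows essentially the same route as the paper's proof: the leave-two-out predictors $\wt{f}_{-(i,k)}$, the $(m+1)\times(m+1)$ matrix of per-environment residual quantiles, block exchangeability, and the deterministic bound $\#\{\text{bad}\}\le \delta(m+1)$ are exactly the paper's Steps 1--2, and your ``bad'' indices coincide with the paper's ``strange'' environments $\mc{S}(A)$. The only difference is presentational: you verify the sufficiency implication $\{\min_i T^i_{1-\alpha}\le Q\}\Rightarrow\{\text{coverage}\}$ directly by picking the minimizing $i^\star$, which is the contrapositive of the paper's Step 3 and lets you bypass its two auxiliary quantile-comparison lemmas.
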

\noindent
See Appendix~\ref{sec:proof-main-coverage} for the proof. In words, with
probability at least $1 - \delta$, the prediction intervals from Algorithm
\ref{alg:multi-env} cover at least $(1 - \alpha) \times 100\%$ of the
examples in the test environment.

\begin{remark}
  In Algorithm \ref{alg:multi-env}, it may appear that taking the minimum
  and maximum of the held-out predictions could yield conservative
  prediction intervals; intuitively, suitably corrected $\delta$ and
  $(1-\delta)$ quantiles (as in the jackknife+~\cite{BarberCaRaTi21})
  of $\widehat{f}_{-i}(x) \pm S^{i}_{1-\alpha}$
  should yield a confidence set satisfying the
  guarantee~\eqref{eqn:coverage-def}.
  The arbitrariness in the process~\eqref{eqn:exchangeable-arbitrary-sample}
  and strength of guarantee~\eqref{eqn:coverage-def} appear to preclude
  this, and experimentally, this approach fails to provide $(\alpha,
  \delta)$-coverage; see Appendix~\ref{app:jkquantile}.
\end{remark}

\subsection{A multi-environment split conformal method}

We also introduce a multi-environment version of split conformal
inference. Our algorithm partitions the environment index set $\{1,\hdots,m
\}$ into subsets $D_1$ and $D_2$ uniformly.
We use the data in environments
indexed by $D_1$ to fit a model $\widehat{f}_{D_1}=\alg(\{X^j,Y^j \}_{j \in
  D_1})$ with which we construct residuals for each example $j =
1,\hdots, n_i$ in each environment $i \in D_2$. Then for each environment in
$D_2$, we construct the $(1-\alpha)$-th quantile of its $n_i$
residuals. This yields a set $\{\scorerv^i_{1-\alpha}\}_{i \in D_2}$ of quantiles.
To obtain a likely-to-cover interval, we consider quantiles of
these quantiles, yielding algorithm~\ref{alg:multi-env-split} below.

\begin{center}
  \algbox{
  \label{alg:multi-env-split}
    Multi-environment Split Conformal Inference: the regression case
  }{
    \textbf{Input:} samples $\{X^i_j, Y^i_j\}_{j = 1}^{n_i}$,
    $i \in [m]$, confidence levels $\alpha, \delta$, split ratio $\gamma$ \\

    Randomly partition $[m]$ into $D_1$ and $D_2$ with $ \frac{|D_1|}{m} = \gamma$.

    \textbf{set} $$ \widehat{f}_{D_1}=\alg(\{X^i,Y^i \}_{i \in D_1}). $$

    \textbf{For} $i \in D_2$, construct residuals
    \begin{equation*}
      R_j^i=\left|Y_j^i-\widehat{f}_{D_1}\left(X_j^i\right)\right|, \quad j=1, \ldots, n_i, 
    \end{equation*}
    \hspace{1em} and quantiles
    \begin{equation*}
    S_{1-\alpha}^i=\widehat{q}_{n_i, \alpha}^{+}\left(R_1^i, R_2^i, \ldots, R_{n_i}^i\right).
    \end{equation*}
    \textbf{return} confidence interval mapping 
    \begin{align*}
      \Csplit(x)
      & \defeq
      \left[ \widehat{f}_{D_1}(x)-\widehat{q}_{m, \delta}^{+}\left(\left\{S_{1-\alpha}^i\right\}_{i \in D_2}\right), \right.\\
        &\left. \qquad\qquad
        \what{f}_{D_1}(x)+\widehat{q}_{m, \delta}^{+}\left(\left\{S_{1-\alpha}^i\right\}_{i \in D_2}\right)\right] .
    \end{align*}
  }
\end{center}

\begin{theorem}
  \label{theorem:main-coverage-split}
  The multi-environment confidence mapping
  $\Csplit$
  Algorithm~\ref{alg:multi-env-split}
  returns provides level $(\alpha, \delta)$-coverage~\eqref{eqn:coverage-def}.
  If the observations $Y_j^i$ are a.s.\ distinct,
  \begin{align*}
    & \P\bigg[\sum_{j=1}^{n_{m+1}}
      \indic{Y_j^{m+1} \in \Csplit\left(X_j^{m+1}\right)} \geq \\
      & \qquad \ceil{(1-\alpha)(n_{m+1}+1)}
    \bigg] \leq 1 - \delta + \frac{1}{m(1- \gamma)+1}.
  \end{align*}
\end{theorem}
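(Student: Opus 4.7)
The plan is to condition on the training split $D_1$ and the data $\{(X^i, Y^i)\}_{i \in D_1}$, thereby freezing $\widehat{f}_{D_1}$ as a fixed function, and then run a two-level conformal rank argument: one level within each environment (producing $S^i_{1-\alpha}$) and one across the environments in $D_2 \cup \{m+1\}$. The core exchangeability step is that, after this conditioning, $\{(X^i, Y^i)\}_{i \in D_2 \cup \{m+1\}}$ is exchangeable---this combines the data-independence of the random split with the exchangeability of $\{(X^i, Y^i)\}_{i=1}^{m+1}$ inherited from~\eqref{eqn:exchangeable-arbitrary-sample}. Since each $S^i_{1-\alpha}$ for $i \in D_2 \cup \{m+1\}$ is the same symmetric functional of $(X^i, Y^i)$ evaluated against the frozen $\widehat{f}_{D_1}$, the $|D_2| + 1$ quantiles $\{S^i_{1-\alpha}\}_{i \in D_2 \cup \{m+1\}}$ are themselves exchangeable.

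For the lower bound, write $Q := \widehat{q}^+_{m,\delta}(\{S^i_{1-\alpha}\}_{i \in D_2})$. Since $m \geq |D_2|$, $Q$ is at least the $\lceil(1-\delta)(|D_2|+1)\rceil$-th order statistic of the calibration set, so the standard conformal rank argument on $|D_2| + 1$ exchangeable variables yields $\P(S^{m+1}_{1-\alpha} \leq Q) \geq 1 - \delta$. On this event, the definition of $S^{m+1}_{1-\alpha}$ as the $\lceil(1 - \alpha)(n_{m+1} + 1)\rceil$-th smallest residual in environment $m+1$ forces at least a $(1 - \alpha)$ fraction of residuals $R^{m+1}_j$ to be $\leq Q$, which is equivalent to $Y^{m+1}_j \in \Csplit(X^{m+1}_j)$, giving~\eqref{eqn:coverage-def}.

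For the upper bound, a.s.\ distinctness of the $Y^i_j$ propagates to a.s.\ distinctness of the residuals $R^i_j$ and hence of the within-environment quantiles $\{S^i_{1-\alpha}\}_{i \in D_2 \cup \{m+1\}}$, conditional on $\widehat{f}_{D_1}$. Distinctness then identifies the event $\{\sum_j \indic{Y^{m+1}_j \in \Csplit(X^{m+1}_j)} \geq \lceil(1 - \alpha)(n_{m+1} + 1)\rceil\}$ with $\{S^{m+1}_{1-\alpha} \leq Q\}$ almost surely. By exchangeability and distinctness, the rank of $S^{m+1}_{1-\alpha}$ in $\{S^i_{1-\alpha}\}_{i \in D_2 \cup \{m+1\}}$ is uniform on $\{1, \ldots, |D_2| + 1\}$, so $\P(S^{m+1}_{1-\alpha} \leq Q) = k/(|D_2| + 1)$ where $k$ is the order-statistic index defining $Q$ within the calibration set; bounding $k \leq (1 - \delta)(|D_2| + 1) + 1$ (using the conformal-corrected quantile appropriate to a set of size $|D_2|$) then yields $1 - \delta + 1/(m(1-\gamma) + 1)$.

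The hardest part, I expect, will be cleanly establishing the conditional exchangeability of $\{S^i_{1-\alpha}\}_{i \in D_2 \cup \{m+1\}}$ after freezing $\widehat{f}_{D_1}$: the argument must combine independence of $D_1$ from the data, permutation-invariance of $\alg$ on its training inputs, and exchangeability of the environment sequence, while carefully tracking the $\sigma$-algebra being conditioned on. The distinctness step for the upper bound is then routine but needs one to rule out ties among the $S^i_{1-\alpha}$, which follows from the continuity implicit in ``a.s.\ distinct $Y^i_j$.''
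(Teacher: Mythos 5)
Your proposal is correct and follows essentially the same route as the paper: freeze $\what{f}_{D_1}$, observe that the within-environment quantiles $\{S^i_{1-\alpha}\}_{i \in D_2 \cup \{m+1\}}$ are exchangeable (the paper packages this as an abstract score construction in Lemma~\ref{lemma:burrito}), and apply the standard split-conformal rank lemma at the environment level, with the deterministic order-statistic equivalence between $\{S^{m+1}_{1-\alpha} \le \what{q}^+_{m,\delta}(\{S^i_{1-\alpha}\}_{i\in D_2})\}$ and the coverage event. The only cosmetic difference is that the identification of these two events holds without distinctness; distinctness is needed only for the uniform-rank upper bound, exactly as in the paper.
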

\noindent
See Appendix~\ref{sec:proof-main-coverage-split} for the proof.

Both algorithms provide valid coverage, as
Theorems~\ref{theorem:main-coverage} and \ref{theorem:main-coverage-split}
demonstrate.
Multi-environment split conformal fits
the model once, and is therefore computationally attractive.
We expect this method to be less conservative as it takes no maxima or
minima over predictions coming from multiple models.
In contrast, the jackknife+ uses (almost) the entire sample set to fit the
model and construct residual quantiles.
Thus, we expect it to outperform Alg.~\ref{alg:multi-env-split} when there
are few training environments.
We investigate these points via real data experiments in Section
\ref{sec:real_data}.

\subsection{Methods to achieve marginal coverage
  in hierarchical predictive inference}

As we discuss earlier, Dunn et al. and Lee et al. provide predictive
inference methods for the hierarchical (multi-environment)
setting~\eqref{eqn:exchangeable-generation}, focusing on guarantees that
provide coverage for a single new
observation~\eqref{eqn:marginal-new-coverage}, as in
Definition~\ref{definition:marginal-new-coverage}.  Focusing on the more
recent paper~\cite{LeeBaWi23}, we review (with a correction to avoid an
accidental infinite quantile~\cite[App.~A.2.1]{LeeBaWi23}) their
hierarchical jackknife+ and hierarchical conformal prediction algorithms. In
the procedures, we let $\pointmass_z$ denote a point mass at $z$, and for a
distribution $P$ on $\R$, define the left quantile mapping $Q_\alpha(P)
\defeq \inf\{t \mid P(Z \le t) \ge \alpha\}$ and (non-standard) right
quantile $Q_\alpha^{\textup{r}}(P) \defeq \sup\{t \mid P(Z \le t) < \alpha\}$.

\begin{center}
  \algbox{
    \label{alg:hcp_jackknife+}
    Hierarchical Jackknife+~\cite{LeeBaWi23}
  }{
    \textbf{Input:} samples $\{X^i_j, Y^i_j\}_{j = 1}^{n_i}$,
    $i \in [m]$, level $\alpha$\\

    \textbf{For} $i = 1, \ldots, m$,

    \hspace{1em}\textbf{set}
    $\what{f}_{-i}$ to the leave-one-out predictor~\eqref{eqn:loo-predictor}
    
    \hspace{1em}\textbf{construct} residuals
    \begin{equation*}
      R^i_j = |Y_j^i - \what{f}_{-i}(X_j^i)|,
      ~~ j = 1, \ldots, n_i.
    \end{equation*}
    
    \textbf{Return} confidence interval mapping
    \begin{equation*}
      \Chjk(x)\defeq
      \left[ \text{low}(x),  \text{high}(x)\right],
    \end{equation*}
    where
    \begin{align*}
      \textup{low}(x)
      & \defeq
      Q_\alpha^{\textup{r}}
      \bigg(\sum_{i = 1}^m \sum_{j = 1}^{n_i}
      \frac{\pointmass_{\what{f}_{-i}(x) - R_j^i}}{
        (m + 1) n_i}
      + \frac{\pointmass_{-\infty}}{m+1}\bigg), \\
      \textup{high}(x)
      & \defeq
      Q_\alpha
      \bigg(\sum_{i = 1}^m \sum_{j = 1}^{n_i}
      \frac{\pointmass_{\what{f}_{-i}(x) + R_j^i}}{
        (m + 1) n_i}
      + \frac{\pointmass_{-\infty}}{m+1}\bigg).
    \end{align*}
  }
\end{center}

\begin{center}
  \algbox{
  \label{alg:hcp_split}
  Hierarchical Conformal Prediction~\cite{LeeBaWi23}}{
    \textbf{Input:} samples $\{X^i_j,
    Y^i_j\}_{j = 1}^{n_i}$, $i \in [m]$, confidence level $\alpha$, split
    ratio $\gamma$\\

    Randomly partition $[m]$ into $D_1$ and $D_2$ with $ \frac{|D_1|}{m} = \gamma$.
    
    \textbf{set} $\widehat{f}_{D_1}=\alg(\{X^i,Y^i \}_{i \in D_1})$.

    \textbf{For} $i \in D_2$, construct residual quantiles
    \begin{equation*}
      R_j^i=\left|Y_j^i-\widehat{f}_{D_1}\left(X_j^i\right)\right|, \quad j=1, \ldots, n_i
    \end{equation*}

    \textbf{set}
    \begin{align*}
      T = Q_{1-\alpha}\bigg(\sum_{i=m \gamma + 1}^{m}
      \sum_{j=1}^{n_k} \frac{\pointmass_{R_j^i}}{\left(|D_1|+1\right) n_i}
      + \frac{\pointmass_{+\infty}}{|D_1|+1}\bigg).
    \end{align*}
    
    \textbf{return} confidence interval mapping $$ \Chcp(x):=\left[ \widehat{f}_{D_1}(x)-T,  \widehat{f}_{D_1}(x)+T\right] .$$
  }
\end{center}

These algorithms construct prediction intervals for a single observation
$(X_1^{m+1}, Y_1^{m+1})$ in the test environment $m + 1$,
guaranteeing marginal coverage~\eqref{eqn:marginal-new-coverage}:
\begin{corollary}[Lee et al. \cite{LeeBaWi23}, Theorems 1 and 5]
  The jackknife+ mapping $\Chjk$ that
  Algorithm~\ref{alg:hcp_jackknife+} returns provides
  $1 - 2 \alpha$ hierarchical coverage,
  and the conformal mapping $\Chcp$ that
  Algorithm~\ref{alg:hcp_split} returns provides
  $1 - \alpha$ hierarchical coverage~\eqref{eqn:marginal-new-coverage}.
\end{corollary}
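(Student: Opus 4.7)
The plan is to invoke Lee et al.'s Theorems 1 and 5 directly, since Algorithms \ref{alg:hcp_jackknife+} and \ref{alg:hcp_split} as stated are essentially their hierarchical jackknife+ and HCP procedures (with the small correction to the point-mass placement to avoid the infinite quantile artifact in~\cite[App.~A.2.1]{LeeBaWi23}). So the proof reduces to checking that our presentation matches the hypotheses of their theorems and then quoting them. Nonetheless, to make the corollary self-contained it is worth sketching the mechanism.

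For the split statement (HCP, Algorithm~\ref{alg:hcp_split}), I would condition on the partition $D_1,D_2$ and on the fitted predictor $\widehat f_{D_1}$. The remaining environments $\{(P_{XY}^i,n_i,(X^i,Y^i))\}_{i\in D_2\cup\{m+1\}}$ are exchangeable under the hierarchical model~\eqref{eqn:exchangeable-generation}--\eqref{eqn:iid-within-environment}. Define, for each such $i$, the mixture distribution $\nu^i$ that is uniform over the residuals $\{R_j^i\}_{j=1}^{n_i}$ (with $R_j^{m+1}=|Y_j^{m+1}-\widehat f_{D_1}(X_j^{m+1})|$). By exchangeability of the environments, the vector $(\nu^i)_{i\in D_2\cup\{m+1\}}$ is exchangeable, and the i.i.d.\ within-environment assumption makes $R_1^{m+1}$ a sample from $\nu^{m+1}$. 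A standard conformal/exchangeability argument over environments then yields that $R_1^{m+1}$ exceeds the quantile $T$ with probability at most $\alpha$, which is precisely the marginal coverage guarantee~\eqref{eqn:marginal-new-coverage}. The inflation of the empirical mixture by $\pointmass_{+\infty}/(|D_1|+1)$ (or the analogous $|D_2|+1$ normalization) is the standard adjustment making $T$ a valid $(1-\alpha)$-quantile once the unobserved environment $m+1$ is accounted for.

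For the jackknife+ statement (Algorithm~\ref{alg:hcp_jackknife+}), the argument is the same in spirit but relies on the symmetry of $\alg$: for $i\in[m]$, the leave-one-out predictor $\widehat f_{-i}$ applied to environment $i$ corresponds, after swapping environment $i$ with environment $m+1$, to $\widehat f_{-(m+1)}$ applied to environment $m+1$. This swap-invariance is what upgrades exchangeability to a usable statement about the residuals $\{R_j^i\}$ and the test residual. The standard jackknife+ miscoverage argument then controls both tails of the interval separately, each at level $\alpha$, yielding $1-2\alpha$ coverage; the factor of $2$ reflects that the lower and upper endpoints are built from different leave-one-out predictors and the envelope cannot be tightened without further assumptions.

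The only genuinely delicate step in the underlying proofs is the transition from ``one representative residual per environment'' (which is where the exchangeability argument lives) to the weighted empirical distribution $\sum_{i,j}\pointmass_{\cdot}/((|D_1|+1)n_i)$ that uses every observation; this is handled by Lee et al.\ via the observation that a uniform random draw from the mixture has the same law as first picking an environment uniformly and then a residual uniformly within it, so the quantile $T$ stochastically dominates any quantile built from a single-representative scheme. Given that all of this is carried out in~\cite{LeeBaWi23}, the corollary follows by verification of hypotheses.
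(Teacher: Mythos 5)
Your proposal is correct and matches the paper's approach: the paper offers no proof of this corollary beyond the citation to Lee et al.'s Theorems 1 and 5, which is exactly the reduction you make (your sketch of the underlying exchangeability and swap arguments is a faithful summary of their proofs but is not required here).
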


These results are not completely comparable to $(\alpha, \delta)$-coverage guarantees
(Definition~\ref{definition:valid-coverage}).  We do so somewhat
heuristically in our experiments in Section~\ref{sec:compare_hcp}, where for
values of $\alpha \in (0, 1)$---the coverage guarantee within an
environment---we may vary $\delta$ to compare performance of the methods.
Previewing our results, it appears that both the hierarchical jackknife+ and
split-conformal methods generate prediction sets with comparable size and
coverage properties to the multi-environment methods in
Algorithms~\ref{alg:multi-env} and~\ref{alg:multi-env-split}.


\section{General confidence sets and extensions}
\label{sec:general-confidence}

To this point, we have described our algorithms for real-valued predictions,
where confidence intervals $C(x) = [a, b]$ are most practicable. Here, we
generalize the algorithms beyond regression, where the
target space may not be the real line and the prediction sets may be
asymmetric. We first present the general formulation and abstract algorithms.
Subsequently, we specialize our construction to
demonstrate implementation in a few cases of interest: (i) when we
represent general target spaces $\mc{Y}$ and confidence sets by labels $y$
that suffer small loss under a prediction $f(x)$, i.e., $\{y \in \mc{Y} \mid
\loss(y, f(x)) \le \tau\}$, and (ii) for quantile regression-type
approaches, which allow asymmetric confidence sets in regression
problems~\cite{RomanoPaCa19}.

\subsection{General nested confidence sets}

We begin with our most general formulation.
Here, we treat confidence sets
themselves as the objects of interest (adopting the
interpretation~\cite{GuptaKuRa22}), rather than any particular prediction
method $\what{f}$, and assume that confidence sets are indexed by a
threshold $\tau$ and nested in that
\begin{equation*}
  C_\tau(x) \subset C_{\tau + \delta}(x) ~~ \mbox{for~all~} \delta \ge 0.
\end{equation*}
We also require the sets are
right-continuous (which is tacitly implicit in the
work~\cite{GuptaKuRa22}), so that
\begin{equation*}
  C_\tau(x) = \bigcap_{\delta > 0} C_{\tau + \delta}(x)
  ~~ \mbox{for~all~} \tau.
\end{equation*}
We assume now that the algorithm $\alg$ returns a collection of
confidence
set mappings $\{\what{C}_\tau\}_{\tau \in \R}$, where
each $\what{C}_\tau : \mc{X} \toto \mc{Y}$ is a set-valued function.
To see how this generalizes the initial Algorithm~\ref{alg:multi-env},
note that we may write
\begin{equation*}
\begin{aligned}
  \widehat{C}_\tau(x) &= \left[f(x) - \tau, f(x) + \tau\right] \\
  \text{or} \quad \widehat{C}_\tau(x) &= \left[f_{\text{low}}(x) - \tau, f_{\text{high}}(x) + \tau\right].
\end{aligned}
\end{equation*}
Assuming $\alg$ can perform this calculation, the immediate
extension of Algorithm~\ref{alg:multi-env} follows.

\begin{center}
  \algbox{
    \label{alg:nested-multi-env}
    Multi-environment Jackknife-minmax via nested
    confidence sets
  }{
    \textbf{Input:} samples $\{X^i_j, Y^i_j\}_{j = 1}^{n_i}$,
    $i \in [m]$, levels $\alpha, \delta$,
    predictive set algorithm $\alg$ \\
    
    \textbf{For} $i = 1, \ldots, m$, \textbf{set}
    \begin{align*}
      \{\what{C}^{-i}_\tau\}_{\tau \in \R}
      = \alg\left(\{X^k, Y^k\}_{k \neq i, k \le m}\right)
    \end{align*}
    \hspace{1em} and construct residuals
    \begin{equation*}
      R^i_j = \inf \left\{\tau \mid
      Y_j^i \in \what{C}^{-i}_\tau(X_j^i) \right\},
      ~~ j = 1, \ldots, n_i,
    \end{equation*}
    \hspace{1em}
    and quantiles
    \begin{equation*}
     \scorerv^i_{1 - \alpha}
      = \quantplus[\alpha] \left(R_1^i, R_2^i,
      \ldots, R_{n_i}^i\right).
    \end{equation*}
    \textbf{return} confidence set mapping
    \begin{equation*}
      \Cjack(x) \defeq
      \bigcup_{i \in [m]} \what{C}_{\what{\tau}}^{-i}(x)
      ~~ \mbox{for} ~~
      \what{\tau} = \quantplus[\delta]\left(\left\{S^i_{1-\alpha}\right\}_{i=1}^m
      \right).
    \end{equation*}
  }
\end{center}

\begin{theorem}
  \label{theorem:nested-coverage}
  The multi-environment confidence mapping $\Cjack$ 
  Algorithm~\ref{alg:nested-multi-env} returns
  provides level $(\alpha, \delta)$-coverage~\eqref{eqn:coverage-def}.
\end{theorem}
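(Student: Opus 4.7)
The plan is to extend the approach of Theorem~\ref{theorem:main-coverage} from scalar residuals to the general nested-sets setting by (i) reducing the coverage event to a single quantile comparison, (ii) embedding the algorithm's residual quantiles into a symmetric, exchangeable family built from pair-indexed leave-two-out predictive sets, and (iii) closing with a standard rank argument. Using right-continuity of $\tau \mapsto \what{C}^{-i}_\tau(x)$, define the cross-residuals
\begin{equation*}
  M_j \defeq \inf\Big\{\tau : Y_j^{m+1} \in \bigcup_{i \in [m]} \what{C}^{-i}_\tau(X_j^{m+1})\Big\}, \qquad j = 1, \ldots, n_{m+1},
\end{equation*}
so that $Y_j^{m+1} \in \Cjack(X_j^{m+1})$ if and only if $M_j \le \what{\tau}$. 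Setting $V^{m+1} = \quantplus[n_{m+1},\alpha](\{M_j\}_{j=1}^{n_{m+1}})$, the bound $\lceil (1-\alpha)(n_{m+1}+1)\rceil \ge (1-\alpha) n_{m+1}$ shows that on $\{V^{m+1} \le \what{\tau}\}$ at least a $(1-\alpha)$ fraction of the $M_j$ lie at or below $\what{\tau}$, so it suffices to prove $\P(V^{m+1} \le \what{\tau}) \ge 1 - \delta$.

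The core of the argument is a symmetric embedding. For each ordered pair $k \ne i$ in $[m+1]$, introduce the leave-two-out nested sets
\begin{equation*}
  \{\what{C}^{(k,i)}_\tau\}_{\tau \in \R}
  \defeq \alg\bigl(\{(X^\ell, Y^\ell) : \ell \in [m+1] \setminus \{k, i\}\}\bigr),
\end{equation*}
and, for each $k \in [m+1]$, set
\begin{equation*}
  \tilde{M}_j^{(k)} \defeq \inf\Big\{\tau : Y_j^k \in \bigcup_{i \ne k} \what{C}^{(k,i)}_\tau(X_j^k)\Big\},
  \qquad
  V^{(k)} \defeq \quantplus[n_k,\alpha]\bigl(\{\tilde{M}_j^{(k)}\}_{j=1}^{n_k}\bigr).
\end{equation*}
Since $\alg$ is symmetric in its inputs, $V^{(k)}$ is a symmetric functional of the environments indexed by $[m+1] \setminus \{k\}$; combined with exchangeability of the environments, this forces $(V^{(1)}, \ldots, V^{(m+1)})$ to be exchangeable. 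The bookkeeping identity $\what{C}^{(m+1,i)}_\tau = \what{C}^{-i}_\tau$ (both are trained on $[m] \setminus \{i\}$) gives $V^{(m+1)} = V^{m+1}$, while for $k \in [m]$ the union defining $\tilde{M}_j^{(k)}$ contains the single set $\what{C}^{(k,m+1)}_\tau = \what{C}^{-k}_\tau$, so $\tilde{M}_j^{(k)} \le R_j^k$ pointwise and hence $V^{(k)} \le S^k_{1-\alpha}$ for every $k \in [m]$.

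From here the proof closes quickly. Monotonicity of order statistics gives $\what{\tau} = \quantplus[m,\delta](\{S^i_{1-\alpha}\}_{i=1}^m) \ge \quantplus[m,\delta](\{V^{(i)}\}_{i=1}^m) =: W$. By the standard exchangeability-rank lemma, the rank of $V^{(m+1)}$ in the exchangeable tuple $(V^{(1)}, \ldots, V^{(m+1)})$ is at most $\lceil (1-\delta)(m+1)\rceil$ with probability at least $\lceil (1-\delta)(m+1)\rceil / (m+1) \ge 1-\delta$, and on that event a direct rank computation gives $V^{(m+1)} \le W \le \what{\tau}$.

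The main obstacle is the symmetric embedding: because each $\what{C}^{-i}_\tau$ is trained on only $m-1$ environments (omitting both $i$ and the test index $m+1$) rather than on $m$, the naive move of placing $V^{m+1}$ alongside $S^1_{1-\alpha}, \ldots, S^m_{1-\alpha}$ in a single exchangeable tuple fails outright. The pair-indexed leave-two-out families $\what{C}^{(k,i)}_\tau$ resolve this: the identity $\what{C}^{(m+1,i)}_\tau = \what{C}^{-i}_\tau$ keeps the embedding agnostic of whether we view $m+1$ as the test environment, while the domination $V^{(k)} \le S^k_{1-\alpha}$ for $k \le m$ is automatic because enlarging the union inside the infimum can only decrease it.
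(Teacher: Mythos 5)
Your proof is correct, and it takes a genuinely different route from the paper's. Both arguments rest on the same leave-two-out symmetrization ($\what{C}^{(k,i)}_\tau$ is exactly the paper's $\wt{C}^{-(i,k)}_\tau$, with the key identity $\wt{C}^{-(i,m+1)}_\tau = \what{C}^{-i}_\tau$), but from there the paper follows Barber et al.'s comparison-matrix machinery: it forms the $(m+1)\times(m+1)$ matrix $R$ of per-pair residual quantiles, defines $A_{ik} = \indic{\min_{k'}R_{ik'} > R_{ki}}$, bounds the number of ``strange'' environments deterministically, and needs two auxiliary lemmas (the paper's Lemmas~\ref{lemma:quantile-comparison} and~\ref{lemma:min-max-comparison}) precisely to pass between the quantile-of-minima statistic that controls coverage and the min-of-quantiles statistic appearing in $A$. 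You instead collapse each environment to a single scalar $V^{(k)}$ --- the $(1-\alpha)$-quantile of the row-wise minima $\tilde M^{(k)}_j$ --- observe that $(V^{(1)},\ldots,V^{(m+1)})$ is exchangeable by symmetry of $\alg$, dominate $V^{(k)} \le S^k_{1-\alpha}$ for $k \le m$ (enlarging the union only shrinks the infimum), and finish with the standard rank lemma (the paper's Lemma~\ref{lemma:standard-exchangability}, which it uses only for the split-conformal theorems). Your route buys a shorter argument that works directly with the quantity controlling coverage and avoids the strange-set counting entirely; it exploits the specific minmax/union structure of $\Cjack$, whereas the comparison-matrix framework is the one that generalizes to jackknife+-style intervals that are not unions over all leave-one-out models. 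Two small points worth making explicit if you write this up: right-continuity of $\tau \mapsto \bigcup_{i}\what{C}^{-i}_\tau(x)$ (needed for the ``if and only if'' defining $M_j$) follows from right-continuity of each $\what{C}^{-i}_\tau$ because the union is finite, and the exchangeability of the $V^{(k)}$ requires $\alg$ to be invariant to permutations of its input environments, which the paper assumes implicitly throughout.
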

\noindent
The proof of Theorem~\ref{theorem:nested-coverage} mimics
that of Theorem~\ref{theorem:main-coverage}, and we
present it in Section~\ref{sec:proof-nested-coverage}.

Similarly, the extension of Algorithm \ref{alg:multi-env-split} follows.

\begin{center}
  \algbox{
  \label{alg:nested-multi-env-split}
    Multi-environment Split Conformal via nested confidence sets
  }{
    \textbf{Input:} samples $\{X^i_j, Y^i_j\}_{j = 1}^{n_i}$,
    $i \in [m]$, confidence levels $\alpha, \delta$, split ratio $\gamma$, predictive set algorithm $\alg$ \\

    Randomly partition $[m]$ into $D_1$ and $ D_2$
    with $\frac{|D_1|}{m} = \gamma$.\\

    \textbf{set} $\{\what{C}^{D_1}_\tau\}_{\tau \in \mathbb{R}} = \alg( \{(X^{i}, Y^{i})\}_{i \in D_1} )$.\\

    \textbf{For} $i \in D_2$, construct residuals
    \begin{equation*}
      R_j^i=\inf \left\{\tau \mid Y_j^i \in \widehat{C}^{D_1}_\tau\left(X_j^i\right)\right\}, \quad j=1, \ldots, n_i, 
    \end{equation*}
    \hspace{1em}
    and quantiles
    \begin{equation*}
      \scorerv_{1-\alpha}^i=
      \quantplus[\alpha]\left(R_1^i, R_2^i, \ldots, R_{n_i}^i\right).
     \end{equation*}
    \textbf{Return} confidence set mapping
    \begin{equation*}
      \Csplit(x) \defeq \what{C}^{D_1}_{\what{\tau}}(x)
      ~~ \mbox{for} ~~
      \what{\tau} = \quantplus[\delta]\left(\left\{S_{1-\alpha}^i\right\}_{i \in D_2}
      \right).
    \end{equation*}
  }
\end{center}

\begin{theorem}
  \label{theorem:nested-coverage-split}
  The multi-environment confidence mapping
  $\Csplit$
  Algorithm~\ref{alg:nested-multi-env-split} returns
  provides level $(\alpha, \delta)$-coverage~\eqref{eqn:coverage-def}.
  If the scores $S_{1-\alpha}^i$ are a.s.\ distinct, then
  \begin{align*}
    & \P\bigg[\sum_{j=1}^{n_{m+1}} 1\left\{Y_j^{m+1} \in \Csplit\left(X_j^{m+1}\right)\right\} \geq\\
      & \qquad \left\lceil(1-\alpha)\left(n_{m+1}+1\right)\right\rceil\bigg] \leq 1 - \delta + \frac{1}{m(1- \gamma)+1}. 
   \end{align*}
  
\end{theorem}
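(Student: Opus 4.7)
The plan is to mirror the proof of Theorem~\ref{theorem:main-coverage-split}, reducing both the lower and upper coverage claims to a single event about the per-environment scores, namely $\{\what{\tau} \geq S_{1-\alpha}^{m+1}\}$, and then invoking the standard exchangeability/quantile argument. Since the nested family $\{\what{C}^{D_1}_\tau\}_\tau$ is right-continuous, $Y_j^i \in \what{C}^{D_1}_\tau(X_j^i)$ if and only if $\tau \geq R_j^i$. Extending the algorithm's bookkeeping to the test environment, define $R_j^{m+1} = \inf\{\tau : Y_j^{m+1} \in \what{C}^{D_1}_\tau(X_j^{m+1})\}$ and $S_{1-\alpha}^{m+1} = \quantplus[\alpha](R_1^{m+1}, \ldots, R_{n_{m+1}}^{m+1})$. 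By construction $S_{1-\alpha}^{m+1}$ is the $\lceil (1-\alpha)(n_{m+1}+1)\rceil$th smallest residual, so
\[
  \what{\tau} \geq S_{1-\alpha}^{m+1}
  \quad \Longleftrightarrow \quad
  \sum_{j=1}^{n_{m+1}} \indic{R_j^{m+1} \leq \what{\tau}} \geq \lceil (1-\alpha)(n_{m+1}+1)\rceil.
\]
This event implies the $(1-\alpha)$-fraction coverage in~\eqref{eqn:coverage-def} and is exactly the count event appearing in the upper bound.

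Next I would invoke exchangeability. Conditional on the random partition $(D_1, D_2)$ and on $\{(X^i, Y^i)\}_{i \in D_1}$, the family $\{\what{C}^{D_1}_\tau\}_\tau$ is fixed, and the exchangeability of the samples in~\eqref{eqn:exchangeable-arbitrary-sample} ensures $\{(X^i, Y^i)\}_{i \in D_2 \cup \{m+1\}}$ remain exchangeable. Since each $S_{1-\alpha}^i$ is a deterministic function of $(X^i, Y^i)$ and the (fixed) trained family, the scores $\{S_{1-\alpha}^i\}_{i \in D_2 \cup \{m+1\}}$ inherit exchangeability. Writing $k = |D_2|+1 = m(1-\gamma)+1$, the classical calibration-quantile lemma yields
\[
  \P\bigl(S_{1-\alpha}^{m+1} > \quantplus[\delta](\{S_{1-\alpha}^i\}_{i \in D_2})\bigr) \leq \delta,
\]
and integrating over the partition and training data gives the $(\alpha, \delta)$-coverage guarantee.

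For the upper bound, the a.s.\ distinctness assumption promotes the preceding inequality to a near-equality via rank uniformity: the rank of $S_{1-\alpha}^{m+1}$ among the $k$ exchangeable, a.s.\ distinct scores $\{S_{1-\alpha}^i\}_{i \in D_2 \cup \{m+1\}}$ is uniform on $\{1,\ldots,k\}$. By the equivalence from the first paragraph, the test coverage count attains $\lceil(1-\alpha)(n_{m+1}+1)\rceil$ exactly when $S_{1-\alpha}^{m+1} \leq V_{(r)}$, where $r = \lceil(1-\delta)k\rceil$ and $V_{(r)}$ is the $r$th order statistic of $\{S_{1-\alpha}^i\}_{i \in D_2}$; this occurs precisely when the test rank is at most $r$, i.e., with probability $r/k \leq 1 - \delta + 1/(m(1-\gamma)+1)$. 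The main obstacle is this initial reduction, since the lower bound needs only one direction of the displayed equivalence while the upper bound requires both; care is therefore needed so that the right-continuity of $\tau \mapsto \what{C}^{D_1}_\tau$ and the ceiling conventions inside $\quantplus$ handle ties and non-strict comparisons correctly. Once this bookkeeping is in place, the remainder is the standard split-conformal calibration argument.
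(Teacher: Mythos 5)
Your proposal is correct and follows essentially the same route as the paper: the reduction of both coverage claims to the single event $\{S^{m+1}_{1-\alpha} \le \what{\tau}\}$ via right-continuity of $\tau \mapsto \what{C}^{D_1}_\tau$ and the definition of $\quantplus[\alpha]$ is exactly the paper's argument in Appendix~\ref{sec:proof-nested-coverage-split}, and your exchangeability-plus-rank-uniformity step is the content of the paper's Lemma~\ref{lemma:burrito} (itself a consequence of Lemma~\ref{lemma:standard-exchangability}). No gaps.
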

\noindent
See Appendix~\ref{sec:proof-nested-coverage-split} for the proof.

\subsection{Specializations and
  examples of the nested confidence set approach}

We specialize the general nested prediction set
Algorithms~\ref{alg:nested-multi-env} and~\ref{alg:nested-multi-env-split}
to a few special cases where implementation is direct and natural.

\subsubsection{General loss functions and targets}
\label{sec:general-losses}

In extension to the preceding section, we consider the following
scenario:
we have targets $y \in \mc{Y}$, covariates $x \in \mc{X}$, and
prediction functions
$f \in \mc{F} \subset \mc{X} \to \R^\preddim$. Then
for a loss $\loss : \mc{Y} \times \R^\preddim \to \R_+$,
we consider predictive sets of the form
\begin{equation*}
  C_\tau(x) = \left\{y \in \mc{Y} \mid \loss(y, f(x)) \le \tau \right\}
\end{equation*}
where, for now, $f \in \mc{F}$ and $\tau \in \R$ are left implicit.
These are nested, allowing application of
Theorem~\ref{theorem:nested-coverage} and
Algorithm~\ref{alg:nested-multi-env}.
A slight specialization allows easier presentation:
define the residual losses on environment $i$ by
\begin{equation*}
  R\sups{i}_j
  \defeq \loss\left(Y\sups{i}_j, \what{f}_{-i}(X\sups{i}_j)\right),
\end{equation*}
where as previously
$\what{f}_{-i}$ is the leave-one-out predictor
$\what{f}_{-i} = \alg((X^k, Y^k)_{k \neq i})$.
Setting $S\sups{i}_{1 - \alpha} =
\quantplus[\alpha](\{R\sups{i}_j\}_j)$,
the nested union in Algorithm~\ref{alg:nested-multi-env}
is exactly
 \begin{align*}
 & \Cjack(x)
  \defeq\\
  & \left\{y \in \mc{Y} \mid \min_{k \le m}
  \loss(y, \what{f}_{-k}(x)) \le
  \quantplus[\delta]\left(\{S\sups{i}_{1-\alpha}\}_{i=1}^m\right)\right\}.
\end{align*}

\begin{corollary}
  The loss-based set $\Cjack$ provides $(\alpha, \delta)$-coverage~\eqref{eqn:coverage-def}.
\end{corollary}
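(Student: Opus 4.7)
The plan is to observe that this corollary is really just a matter of showing that the loss-based construction falls under the general nested confidence set framework of Algorithm~\ref{alg:nested-multi-env}, and then applying Theorem~\ref{theorem:nested-coverage} as a black box. Concretely, I would define, for each $i \in [m]$, the family
\[
\what{C}^{-i}_\tau(x) \defeq \{y \in \mc{Y} \mid \loss(y, \what{f}_{-i}(x)) \le \tau\},
\]
where $\what{f}_{-i}$ is the leave-one-out predictor, and verify that $\alg$ returning $\{\what{C}^{-i}_\tau\}_{\tau \in \R}$ satisfies the two structural hypotheses of Section~\ref{sec:general-confidence}.

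First I would check monotonicity: if $\tau \le \tau'$, then $\loss(y,\what{f}_{-i}(x)) \le \tau$ implies $\loss(y,\what{f}_{-i}(x)) \le \tau'$, so $\what{C}^{-i}_\tau(x) \subset \what{C}^{-i}_{\tau'}(x)$. Next, right-continuity: for any $\tau$,
\[
\bigcap_{\delta>0}\what{C}^{-i}_{\tau+\delta}(x) = \{y : \loss(y,\what{f}_{-i}(x)) \le \tau+\delta \text{ for all }\delta>0\} = \what{C}^{-i}_\tau(x),
\]
since the infimum of $\tau+\delta$ over $\delta>0$ is $\tau$. Both properties are immediate from the sublevel-set definition.

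Second, I would match the algorithmic quantities. The residuals of Algorithm~\ref{alg:nested-multi-env} reduce to
\[
R^i_j = \inf\{\tau \mid Y^i_j \in \what{C}^{-i}_\tau(X^i_j)\} = \loss(Y^i_j,\what{f}_{-i}(X^i_j)),
\]
recovering exactly the $R^{(i)}_j$ in the corollary statement; consequently $S^i_{1-\alpha}$ and $\what{\tau} = \quantplus[\delta](\{S^i_{1-\alpha}\}_{i=1}^m)$ coincide with the quantities inside the corollary's displayed formula. For the final set,
\[
\bigcup_{i \in [m]} \what{C}^{-i}_{\what{\tau}}(x) = \Bigl\{y \in \mc{Y} \,\Big|\, \min_{k \le m}\loss(y,\what{f}_{-k}(x)) \le \what{\tau}\Bigr\},
\]
which is exactly $\Cjack(x)$ as defined in the corollary.

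Having identified the loss-based construction with an instance of Algorithm~\ref{alg:nested-multi-env}, the $(\alpha,\delta)$-coverage guarantee~\eqref{eqn:coverage-def} follows directly from Theorem~\ref{theorem:nested-coverage}. There is no real obstacle here beyond bookkeeping; the only subtle point is the right-continuity verification, which is why I would state it explicitly, since the general nested-set framework tacitly assumes it and without it the infimum defining $R^i_j$ need not be attained in the sense required for the theorem.
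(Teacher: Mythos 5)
Your proposal is correct and matches the paper's argument: the paper likewise treats the loss-based sets as sublevel sets, notes they are nested (right-continuity being immediate), identifies the residuals with $\loss(Y_j^i, \what{f}_{-i}(X_j^i))$ and the union $\cup_i \what{C}^{-i}_{\what{\tau}}(x)$ with the min-loss set, and then invokes Theorem~\ref{theorem:nested-coverage}. Your explicit right-continuity check is a minor (and harmless) addition to what the paper leaves tacit.
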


\subsubsection{Quantile regression}
\label{sec:quantile}

Romano et al. \cite{RomanoPaCa19} highlight how moving beyond symmetric confidence sets
to use quantile-based regressoin functions allows more accurate and tighter
confidence bands even for $\R$-valued responses $Y$.
Algorithm~\ref{alg:nested-multi-env} and
Theorem~\ref{theorem:nested-coverage} let us adapt their technique to obtain
quantile-type confidence sets in multi-environment settings.  Imagine we
have two algorithms fitting lower and upper predictors
\begin{equation*}
  \what{l}_{-i} = \alg\low\left((X^k, Y^k)_{k \neq i}\right),
  ~~~
  \what{u}_{-i} = \alg\high\left((X^k, Y^k)_{k \neq i}\right),
\end{equation*}
where we leverage the idea that the methods
target that $Y$ lies in $[\what{l}_{-i}(x),
  \what{u}_{-i}(x)]$ with a prescribed probability $1 - \alpha$.  To
construct nested confidence sets from $\what{l}, \what{u}$, we set
\begin{equation*}
  \what{C}^{-i}_\tau(x) = \left[\what{l}_{-i}(x) - \tau,
    \what{u}_{-i}(x) + \tau \right].
\end{equation*}
Specializing the generic construction
in Algorithm~\ref{alg:nested-multi-env} to this case,
set the residuals
\begin{equation*}
  R_j^i \defeq \max\left\{\what{l}_{-i}(X_j^i) - Y_j^i,
  Y_j^i - \what{u}_{-i}(X_j^i)\right\},
\end{equation*}
which by inspection satisfies
\begin{equation*}
  R_j^i = \inf\left\{\tau \in \R \mid
  \what{l}_{-i}(X_j^i) - \tau
  \le Y_j^i \le
  \what{u}_{-i}(X_j^i) + \tau \right\}.
\end{equation*}
We then construct $S^i_{1-\alpha} = \quantplus[\alpha](\{R_j^i\}_{j =
  1}^{n_i})$ (as in Algorithm \ref{alg:nested-multi-env}), and
setting $\what{\tau}
=\quantplus[\delta]\left(\{S^k_{1-\alpha}\}_{k = 1}^m\right)$,
the
multi-environment jackknife-minmax confidence set becomes
\begin{equation*}
  \Cjack(x) \defeq
  \bigcup_{i = 1}^m
  \left[\what{l}_{-i}(x)
    - \what{\tau},
    \what{u}_{-i}(x) + \what{\tau}\right].
\end{equation*}
Theorem~\ref{theorem:nested-coverage} shows this set provides
valid coverage:
\begin{corollary}
  The lower/upper set $\Cjack$ provides $(\alpha, \delta)$-coverage~\eqref{eqn:coverage-def}.
\end{corollary}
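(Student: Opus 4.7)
The plan is to reduce this corollary directly to Theorem~\ref{theorem:nested-coverage} by verifying that the quantile-regression construction is a valid instance of the nested confidence set framework underlying Algorithm~\ref{alg:nested-multi-env}. Thus the proof is essentially a bookkeeping argument: check the nestedness and right-continuity hypotheses, check that the given residual is the correct infimum, and then invoke the general theorem.

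First I would fix an environment index $i$ and define, for each $\tau \in \R$, the set $\what{C}^{-i}_\tau(x) = [\what{l}_{-i}(x) - \tau, \what{u}_{-i}(x) + \tau]$. Nestedness is immediate: if $\tau \le \tau'$, then $\what{l}_{-i}(x) - \tau' \le \what{l}_{-i}(x) - \tau$ and $\what{u}_{-i}(x) + \tau \le \what{u}_{-i}(x) + \tau'$, so $\what{C}^{-i}_\tau(x) \subset \what{C}^{-i}_{\tau'}(x)$. Right-continuity follows because each $\what{C}^{-i}_\tau(x)$ is a closed interval whose endpoints depend continuously on $\tau$, so $\bigcap_{\delta > 0}[\what{l}_{-i}(x) - \tau - \delta, \what{u}_{-i}(x) + \tau + \delta]$ coincides with $[\what{l}_{-i}(x) - \tau, \what{u}_{-i}(x) + \tau]$.

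Next I would verify that the residual $R_j^i$ given in the specialization matches the generic definition
\begin{equation*}
  R_j^i = \inf\{\tau \in \R \mid Y_j^i \in \what{C}^{-i}_\tau(X_j^i)\}.
\end{equation*}
The condition $Y_j^i \in \what{C}^{-i}_\tau(X_j^i)$ is equivalent to the simultaneous pair of inequalities $\what{l}_{-i}(X_j^i) - \tau \le Y_j^i$ and $Y_j^i \le \what{u}_{-i}(X_j^i) + \tau$, i.e., $\tau \ge \what{l}_{-i}(X_j^i) - Y_j^i$ and $\tau \ge Y_j^i - \what{u}_{-i}(X_j^i)$. The infimum of the $\tau$ satisfying both is therefore exactly $\max\{\what{l}_{-i}(X_j^i) - Y_j^i,\, Y_j^i - \what{u}_{-i}(X_j^i)\}$, which is the stated $R_j^i$.

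Having identified $\what{C}^{-i}_\tau$, $R_j^i$, and consequently $S^i_{1-\alpha}$ and $\what{\tau}$ with the objects constructed in Algorithm~\ref{alg:nested-multi-env}, the output $\bigcup_{i \in [m]} \what{C}^{-i}_{\what{\tau}}(x)$ of that algorithm is exactly the set $\Cjack(x)$ displayed in the corollary. Theorem~\ref{theorem:nested-coverage} then yields $(\alpha, \delta)$-coverage~\eqref{eqn:coverage-def} with no further work. There is no real obstacle here; the only thing to watch is that the $\inf$ in the residual is actually attained (so that the nested set contains $Y_j^i$ when $\tau = R_j^i$), which is why right-continuity of $\tau \mapsto \what{C}^{-i}_\tau$ enters.
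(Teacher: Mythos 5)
Your proposal is correct and matches the paper's own treatment: the paper likewise verifies that $\what{C}^{-i}_\tau(x) = [\what{l}_{-i}(x) - \tau, \what{u}_{-i}(x) + \tau]$ is nested with the residual $R_j^i = \max\{\what{l}_{-i}(X_j^i) - Y_j^i,\, Y_j^i - \what{u}_{-i}(X_j^i)\}$ equal to the generic infimum, and then invokes Theorem~\ref{theorem:nested-coverage}. Your explicit checks of right-continuity and attainment of the infimum are exactly the points the paper leaves tacit.
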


\section{Real Data Examples}
\label{sec:real_data}

With the ``basic'' setting of the preceding sections, we move to real-data
experiments to evaluate the methods; these methods motivate some of the
extensions and further theory to come in the sequel.

\subsection{Neurochemical Sensing}
\label{exp:neurochemical_sensing}

We apply our algorithms to the prediction of neurotransmitter concentration levels, with a specific focus on dopamine.
Estimating dopamine levels in awake, functioning humans at a relatively high
frequency is notoriously challenging, but because dopamine governs critical
human behavior, understanding stimuli that maintain healthy dopamine levels
is of crucial importance~\cite{doi:10.1073/pnas.1513619112}.

Scientists now have access to extensive lab generated multi-environment data that can aid in improving  human dopamine level predictions~\cite{LoewingerPa22,BANG2020999}.
Scientists expose electrodes to various known dopamine concentrations and collect measurements of currents that pass through the electrodes at different voltage potentials (say $p$ different potentials).
From each electrode, the scientist obtains a matrix, where each row records
the different current levels in the electrode, when one changes its
potential over a set of $p$ values while exposing it to a specific dopamine
concentration level.
For each electrode exposed to a certain concentration
level, the scientist collects multiple $p$-dimensional measurements
across different time points.
By changing the concentration over $\ell \ge 1$ distinct levels and
collecting $t \gg 1$ observations at each level, the
scientist obtains $n=\ell \times t$ observations, resulting in an
$n \times p$ covariate matrix from each electrode.
The $t$ measurements
corresponding to each level may exhibit weak correlations, but since
state-of-the-art scientific work in the area \cite{LoewingerPa22} treats
these to be independent, we adopt this convention.
Row outcomes thus measure the dopamine level that generated that row's
current values.
Variations in electrode construction and the experimental
setup under which measurements are obtained mean that the data from different
electrodes follow different distributions.

To map this application to our setting, we consider each electrode to be one of our environments.
We use data from multiple electrodes to train our algorithms, hoping that such multi-environment learning would create robust prediction models that generalize better when applied in a different context, e.g.\ while predicting dopamine levels on the human brain.  
The training data comprises 15 environments corresponding to $15$
electrodes. Each includes roughly 20,000 observations~\cite{LoewingerPa22,
  Loewinger856385}.  of current measured in nanoamps (nA) collected at 1000
discrete voltage potentials.
For each observation, the outcome is a measurement of nanomolar (nM) dopamine concentration.
The outcomes lie in $[0, 2000]$, so we intersect our predictions with this
range before producing intervals.

In each experiment, we select 5 environments at random for training and use
the rest for testing. We train Algorithms \ref{alg:multi-env} and
\ref{alg:multi-env-split} and examine their
coverage on the test data under the $(\alpha, \delta)$-coverage notion
in Definition \ref{definition:valid-coverage}. We use ridge regression for
the base model $\what{f}$ and leave-one-out cross validation for choosing
the ridge parameter, repeating the experiment $100$ times and plotting the
average coverage and the average set length (defined below).

For the $k$th of 100 experiments, we let $ \{e_{k,i}\}_{1
  \leq i \leq 5}$ and $ \{e_{k,i}\}_{6 \leq i \leq 15}$ denote the train and
test environments, respectively.  For $k \in [100], i \in [15]$,
we use $n_{k,i}$ to denote the sample size in
environment $e_{k,i}$ and $\{X_j^{k,i}, Y_{j}^{k,i}\}_{1 \leq j
  \leq n_{k,i} } $ the observations. We define
\begin{equation*}
  A_{j}^{k, i } \defeq \indic{Y_j^{k,i} \in
    \what{C}\left(X_j^{k,i}\right)}
\end{equation*}
to indicate whether, in experiment $k$, the confidence set
covers the outcome in the $j$th sample in environment $i$.

We say a test environment is covered if the fraction of covered samples in the environment is at least $1 - \alpha$. Theorems~\ref{theorem:nested-coverage} and \ref{theorem:nested-coverage-split} show that
we expect to cover at least $1 - \delta$ fraction of the test environments.
We define ``empirical $1 - \delta$'' as the fraction of test environments
covered across our experiments, ``empirical set length'' as the average
length of constructed confidence sets averaged over the test environments,
and ``empirical $1 - \alpha$" as the average fraction of covered samples
over the covered test environments: letting $\wb{A}^{k,i}
= \frac{1}{n_{k,i}} \sum_{j = 1}^{n_{k,i}} A_j^{k,i}$, these become
\begin{align*}
  &  ``\text{Empirical $1 - \delta$}" := \\
  &  \quad \frac{1}{1000} \sum_{k=1}^{100} \sum_{i=6}^{15} \indicbigg{ \sum_{j=1}^{n_{k,i}} A_{j}^{k, i } \geq
    \ceil{(1-\alpha)\left(n_{k,i}+1\right)}},\\
  & ``\text{Empirical $1 - \alpha$}" := \\
  & \frac{\sum_{k=1}^{100} \sum_{i=6}^{15}
    \indic{\sum_{j=1}^{n_{k,i}} A_{j}^{k, i } \geq
      \ceil{(1-\alpha)\left(n_{k,i}+1\right)}}
    \wb{A}^{k,i}}{
    \sum_{k=1}^{100} \sum_{i=6}^{15} \indic{ \sum_{j=1}^{n_{k,i}} A_{j}^{k, i } \geq\left\lceil(1-\alpha)\left(n_{k,i}+1\right)\right \rceil }} ,\\
  & ``\text{Empirical Set Length}" := \\
  & \qquad \frac{1}{1000} \sum_{k=1}^{100} \sum_{i=6}^{15}    \frac{ 1}{n_{k,i}}\sum_{j=1}^{n_{k, i}}
  \left| \what{C}\left(X_j^{k, i}\right) \right|.
\end{align*}

\subsubsection{Influence of Input \texorpdfstring{$\delta$}{delta}}\label{subsec:inputdelta}

To examine the influence of the input $\delta$ on the performance of our
algorithms, we set $\alpha = 0.05$, and the split ratio to be 0.5 for
Algorithm \ref{alg:multi-env-split}. We vary the values of $\delta$ and
display the results in Figure \ref{fig:vary_delta}. The plots show that both
multi-environment split conformal and jackknife-minmax produce valid
coverage. Multi-environment split conformal tends to generate more
conservative prediction intervals than jackknife-minmax.
The relationship between the empirical $1- \alpha$ and the input
$1-\delta$ is non-monotone, because an increase in the input $1-
\delta$ tends to increase the set length for both algorithms, which in turn
may increase the empirical $1- \alpha$.
On the other hand, we may achieve higher $1 - \delta$
by including more environments with low
coverage while decreasing empirical
$(1-\alpha)$. 

\begin{figure}
  \centering
  \includegraphics[width=9cm]{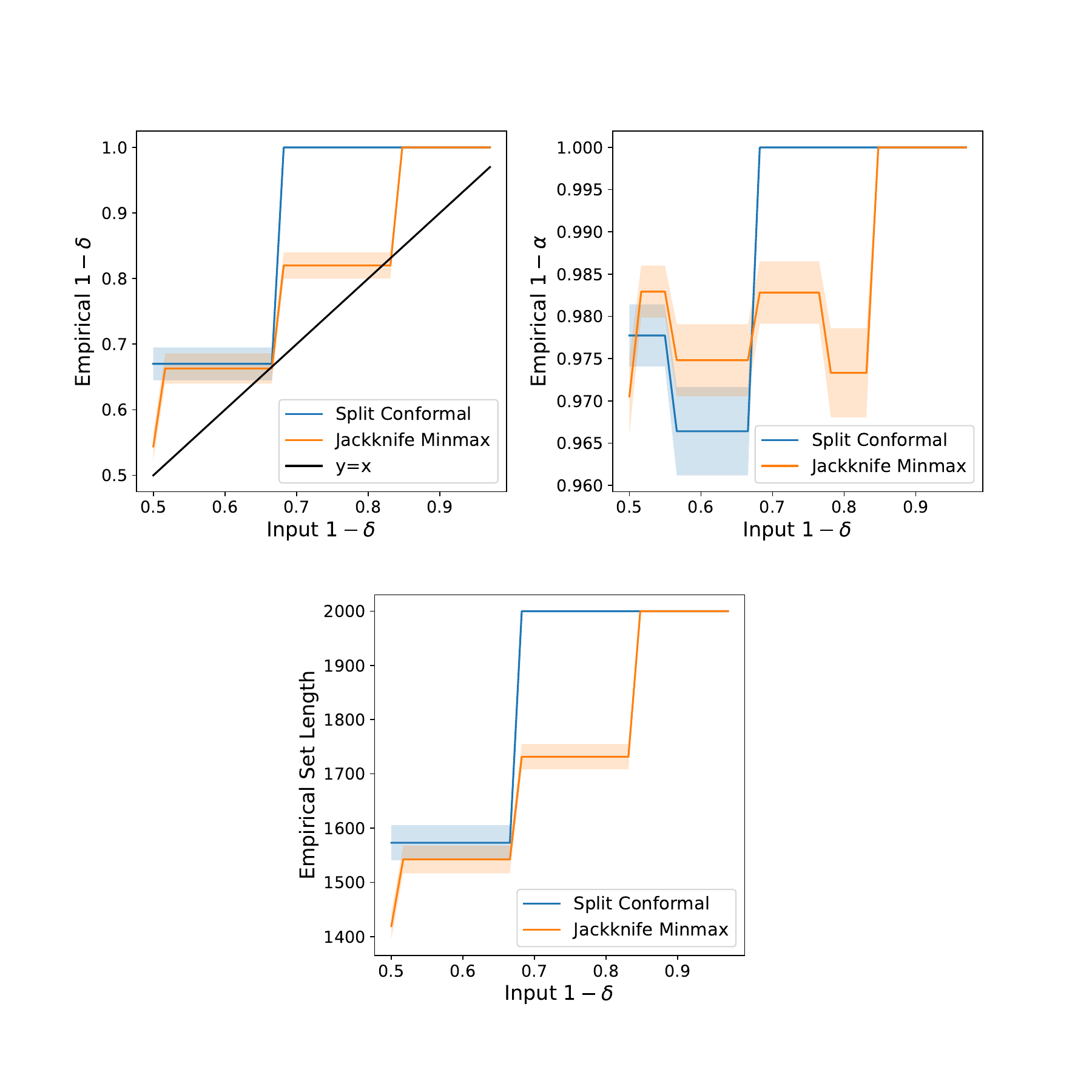}
  \vspace{-1cm}
\caption{\label{fig:vary_delta}
Influence of input $\delta$ on the performance of conformal algorithms applied to the neurochemical sensing data, with $\alpha = .05$. The plots show the empirical $1-\delta$, empirical $1-\alpha$, and empirical set length for both the split conformal and jackknife-minmax algorithms with various input $\delta$.}
\end{figure}

\subsubsection{Influence of Input \texorpdfstring{$\alpha$}{alpha}}\label{subsec:alpha}

To examine the influence of the input $\alpha$ on the performance of our conformal algorithms, we set $\delta = 0.33$, and the split ratio to be 0.5 for Algorithm \ref{alg:multi-env-split},
choosing $\delta=0.33$ as because for smaller values, multi-environment conformal outputs [0, 2000] as the prediction interval regardless of the input $\alpha$.
We vary the $\alpha$ values and display the results in Figure \ref{fig:vary_alpha}.
We observe that for both algorithms, the empirical $1- \alpha$ tends to increase as the input $1- \alpha$ increases. However, the relationship between the empirical $1- \delta$ and the input $1-\alpha$ is less clear. Two factors influence this relationship. As $1- \alpha$ increases, the set length of conformal intervals will increase so that each sample is more likely to be covered. Nonetheless, as the input $1-\alpha$ increases, more samples in each test environment need to be covered, and the fraction of environments satisfing the condition may decrease.

\begin{figure}
\centering
\includegraphics[width=9cm]{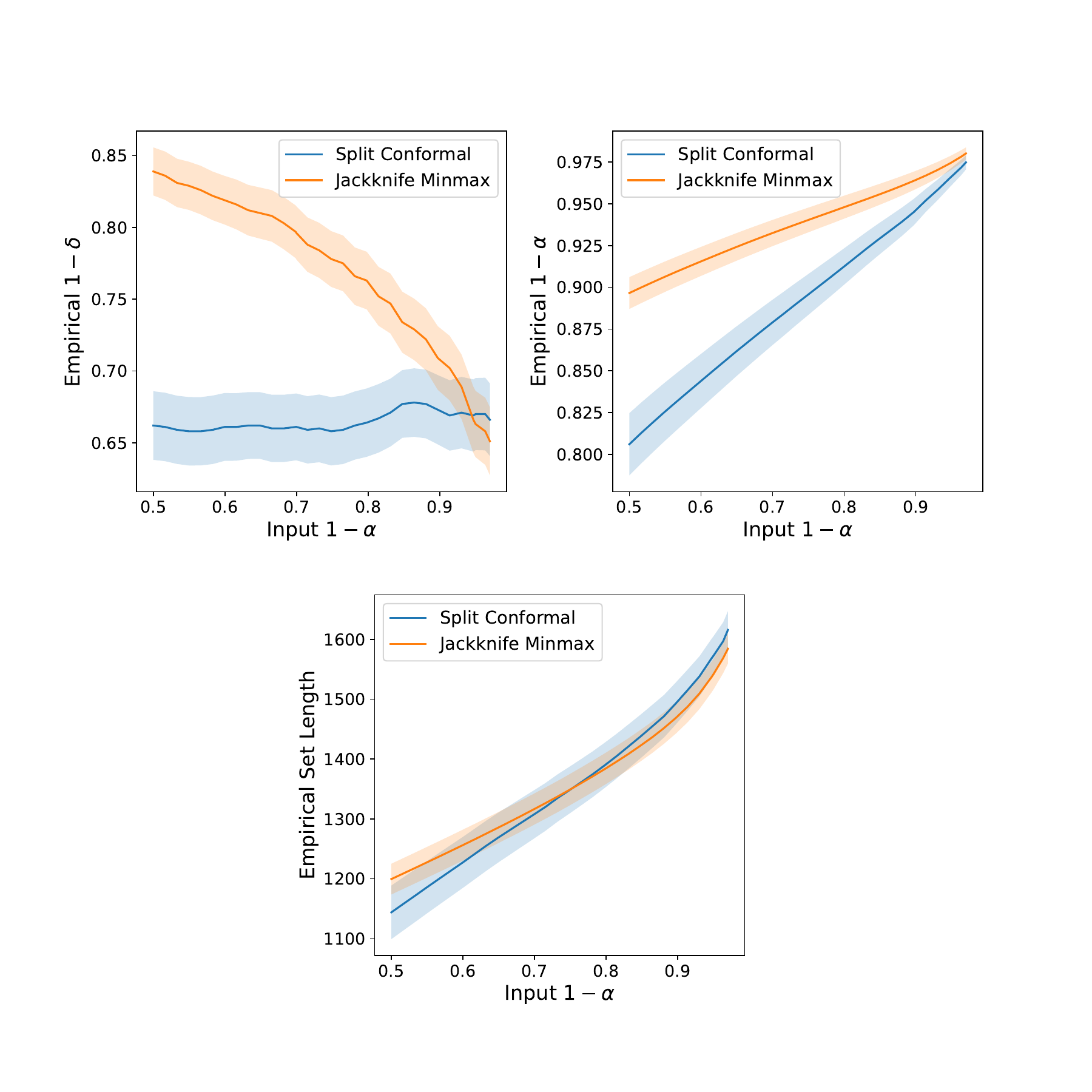}
  \vspace{-1cm}
\caption{\label{fig:vary_alpha}
  Influence of input $\alpha$ on the performance of conformal algorithms the neurochemical sensing data. For these experiments, $\delta$ is set to be 0.33. The plots show the empirical $1-\delta$, empirical $1-\alpha$, and empirical set length for both the split conformal and jackknife-minmax algorithms with various input $\alpha$.}
\end{figure}

\subsection{Species Classification}
\label{exp:species_classification}

We next apply our algorithms in the context of species classification.  To
monitor wildlife biodiversity, ecologists use camera traps---heat or
motion-activated cameras placed in the wild---which exhibit variation in
illumination, color, camera angle, background, vegetation, and relative
animal frequencies, so we consider each camera trap an
environment.
Ecologists seek to use existing camera trap shots to train
machine learning models that classify wildlife species accurately in new
deployments~\cite{KohSaMaXiZhBaHuYaPhBeLeKuPiLeFiLi21}.

The covariates of this species classification data are 2D images, and the
targets are species of animals present in the
images~\cite{beery2020iwildcam}. We pre-process the data by removing
environments with at most 100 observations, removing labels that
appear in less than 5 percent of the remaining environments. After the
pre-processing, we obtain 219 environments and 57 labels. On average, each
environment consists of 874 images. We then randomly choose 50 environments
for training and keep the remaining 169 for testing.
For the base model, we use a ResNet-50 model pretrained
on ImageNet using a learning rate of $3 \cdot 10^{-5}$ and no $\ell_2$-regularization~\cite{He2015DeepRL}. Since the pretrained model takes in images of size $448 \times 448$, we rescale the inputs to the same size. We repeat the experiment 20 times, and then plot the average coverage and the average set length.

\subsubsection{Influence of Input \texorpdfstring{$\delta$}{delta}}
We examine the performance of our algorithms as the input $\delta$ varies (Figure \ref{fig:iwilds_vary_delta}) similar to Section \ref{subsec:inputdelta} earlier.  We observe that the performance is now flipped, the multi-environment jackknife-minmax is now more conservative. With an increased number of training environments, the multi-environment split conformal outperforms jackknife-minmax also in terms of empirical set length.

\begin{figure}
\centering
\includegraphics[width=9cm]{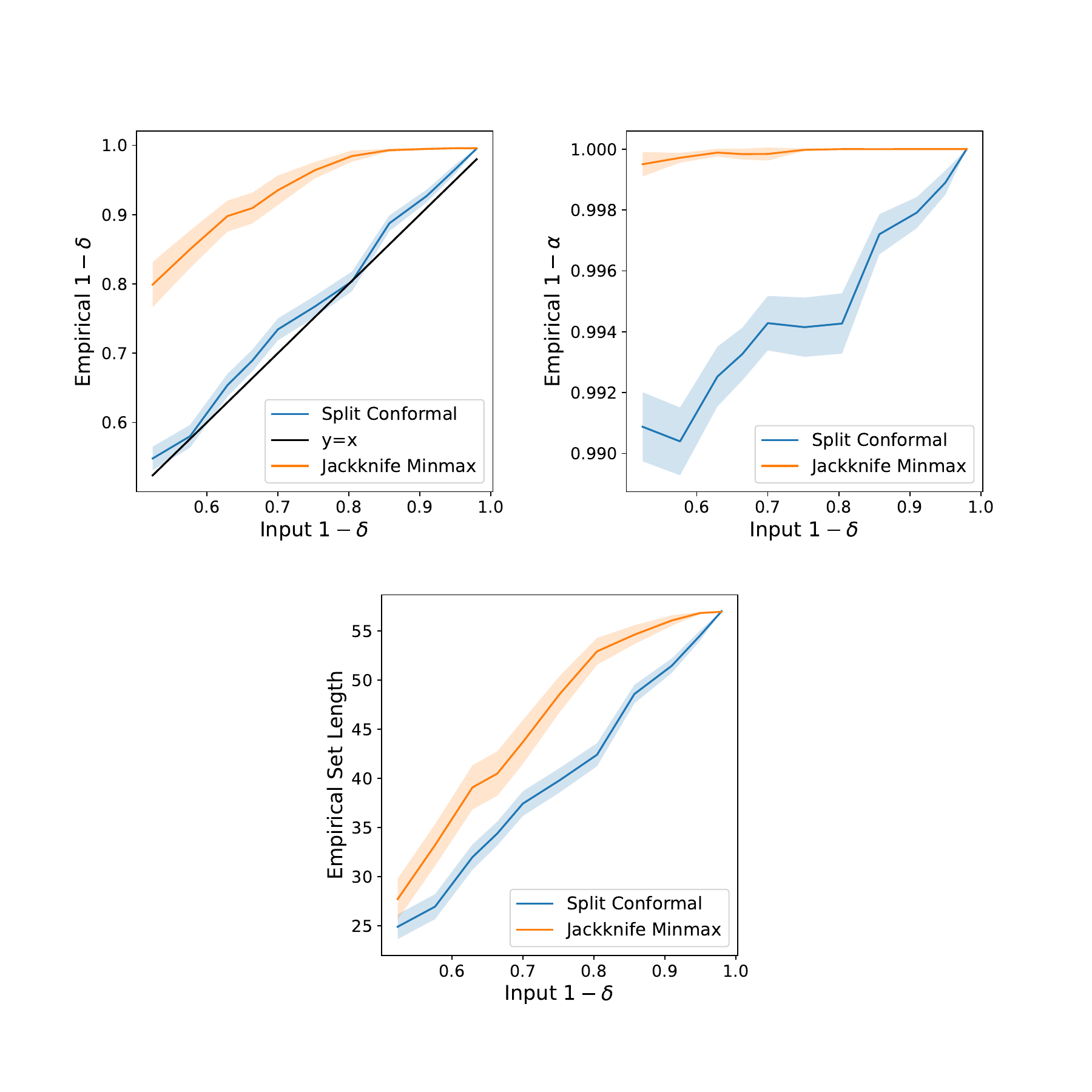}
  \vspace{-1cm}
\caption{\label{fig:iwilds_vary_delta}
  Influence of input $\delta$ on the performance of conformal algorithms applied to the species classification data, with $\alpha = .05$. The plots show the empirical $1-\delta$, empirical $1-\alpha$, and empirical set length for both the split conformal and jackknife-minmax algorithms with various input $\delta$.}
\end{figure}

\subsubsection{Influence of Input \texorpdfstring{$\alpha$}{alpha}}
We examine the performance of our algorithms as the input $\alpha$ varies (Figure \ref{fig:iwilds_vary_alpha}) similar to Section \ref{subsec:alpha}.  We observe that the empirical $1-\alpha$ and empirical set length both increase as the input $1-\alpha$ increases. Interestingly, the conformal sets output by jackknife-minmax are not much larger than those output by split conformal under this setting.

\begin{figure}
\centering
\includegraphics[width=9cm]{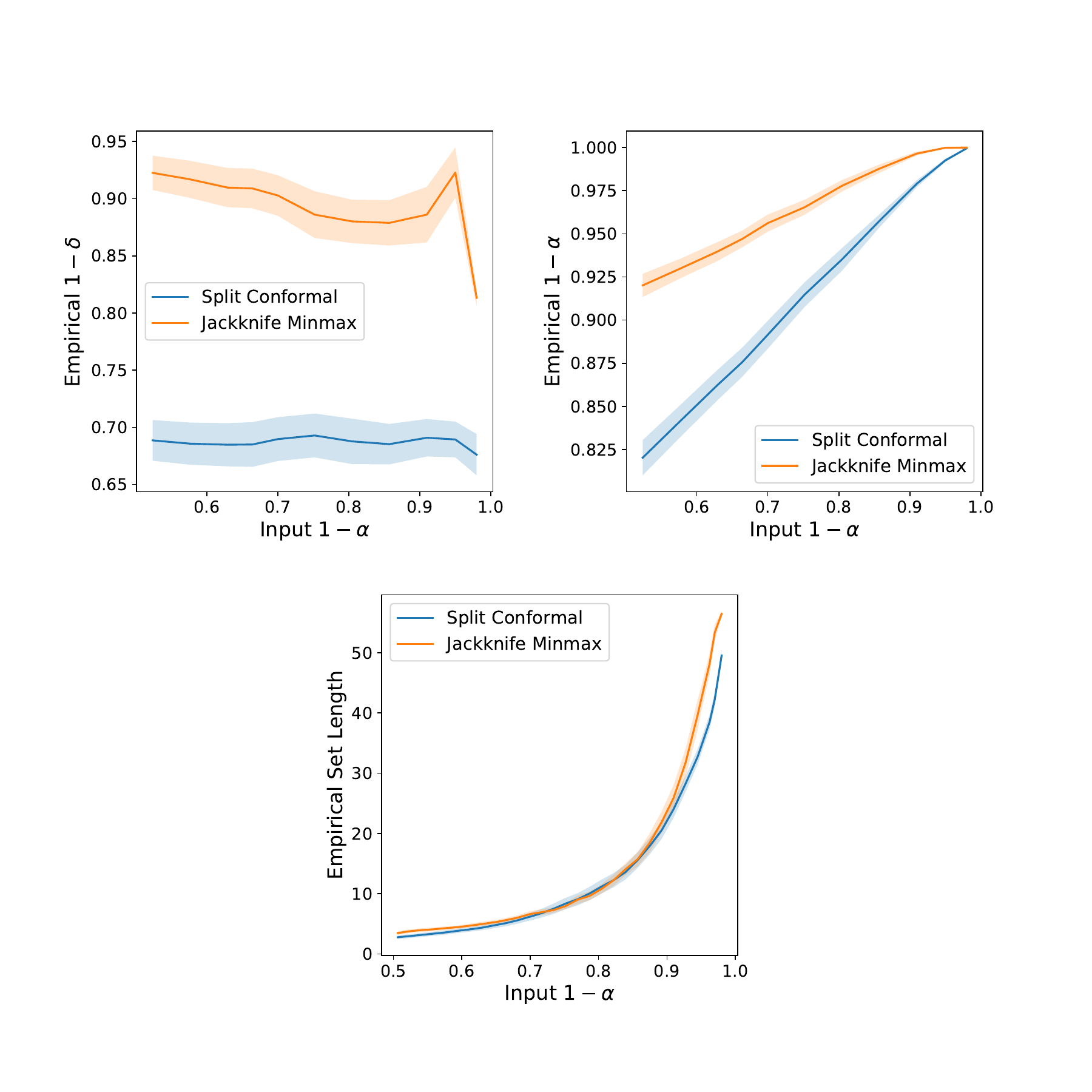}
  \vspace{-1cm}
\caption{\label{fig:iwilds_vary_alpha}
  Influence of input $\alpha$ on the performance of conformal algorithms the species classification data. For these experiments, $\delta$ is set to be 0.33. The plots show the empirical $1-\delta$, empirical $1-\alpha$, and empirical set length for both the split conformal and jackknife-minmax algorithms with various input $\alpha$.}
\end{figure} 

\subsection{Comparison with Hierarchical
Conformal Prediction and Jackknife+} \label{sec:compare_hcp} To conclude, we
provide a comparison of our algorithms with HCP (Alg.~\ref{alg:hcp_split})~\cite{LeeBaWi23}. Setting up
this comparison is non-trivial since HCP provides  different
coverage guarantees. Nonetheless, since they work under similar
hierarchical models, we believe a comparison to be instructive. To set this
up, for each fixed value of $\alpha$, we find the largest $\delta$ such that
the fraction of overall test samples covered by multi-environment split
conformal exceeds that of HCP. We then compare the performance of
multi-environment split conformal with parameters $\alpha, \delta$ and HCP
with parameter $\alpha$. We apply the two methods on the neurochemical
sensing and species classification data. We display the results in Figures
\ref{fig:bio_compare_Hj} and \ref{fig:iwilds_compare_HCP}, respectively.

The $\delta$ selection procedure means that multi-environment split
conformal and jackknife-minmax produce slightly larger prediction sets than
HCP and hierachical jackknife+, respectively.
Moreover, we observe that for all the conformal algorithms considered, coverage (i.e. empirical $1-\delta$ and empirical $1- \alpha$) generally increase as the input $1 - \alpha$ increases.
However, as Figure~\ref{fig:iwilds_compare_HCP} shows, this relation does not always hold.
As the input $1-\alpha$ becomes larger, the average set size of the produced prediction sets also becomes larger, which tends to increase the empirical $1-\delta$.
On the other hand, as the input $1-\alpha$ becomes larger, fewer environments will have at least $1-\alpha$ fraction of samples covered, which tend to decrease the empirical $1-\delta$,
making relation between empirical $1-\delta$ and input $1-\alpha$ potentially
non-monotone.
The empirical $1-\alpha$ and input $1-\alpha$ similarly have a non-monotonic
relationship.

In sum, with an appropriate choice of $\delta$, we observe that the multi-environment split conformal and jackknife-minmax produce prediction sets with similar size and coverage properties as the HCP and hierarchical jackknife+, respectively.

\begin{figure}
\centering
\includegraphics[width=9cm]{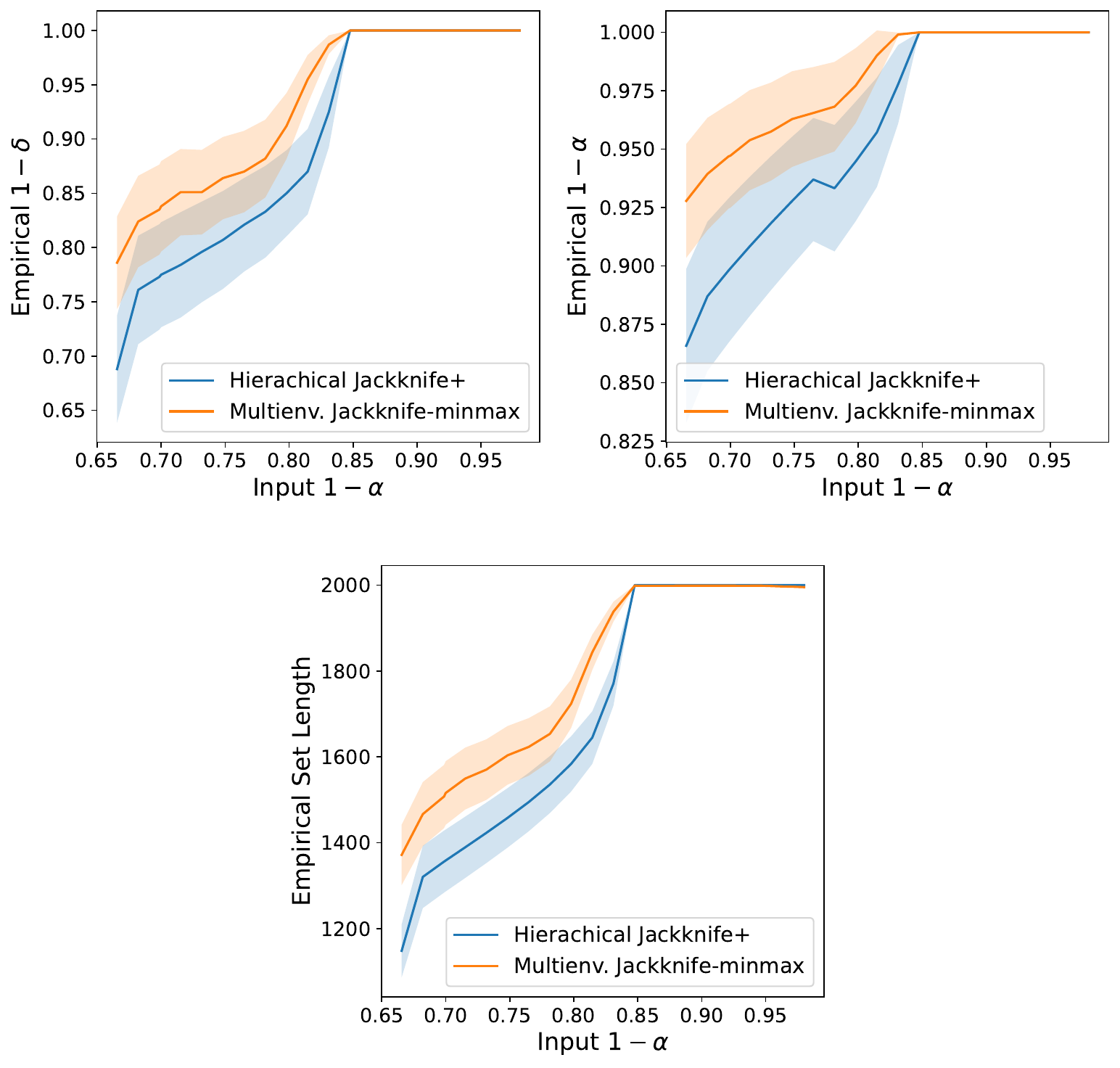}
\vspace{-.2cm}
\caption{\label{fig:bio_compare_Hj}
  Performance of multi-environment jackknife-minmax and hierarchical jackknife+ applied to the neurochemical sensing data. Multi-environment jackknife-minmax takes in the parameters $\alpha, \delta$, and hierarchical jackknife+ takes in the parameter $\alpha$. For each value of $\alpha$, we find the largest $\delta$ such that the fraction of test samples covered by multi-environment split conformal exceeds that of hierarchical jackknife+.}
\end{figure}

\begin{figure}
\centering
\includegraphics[width=9cm]{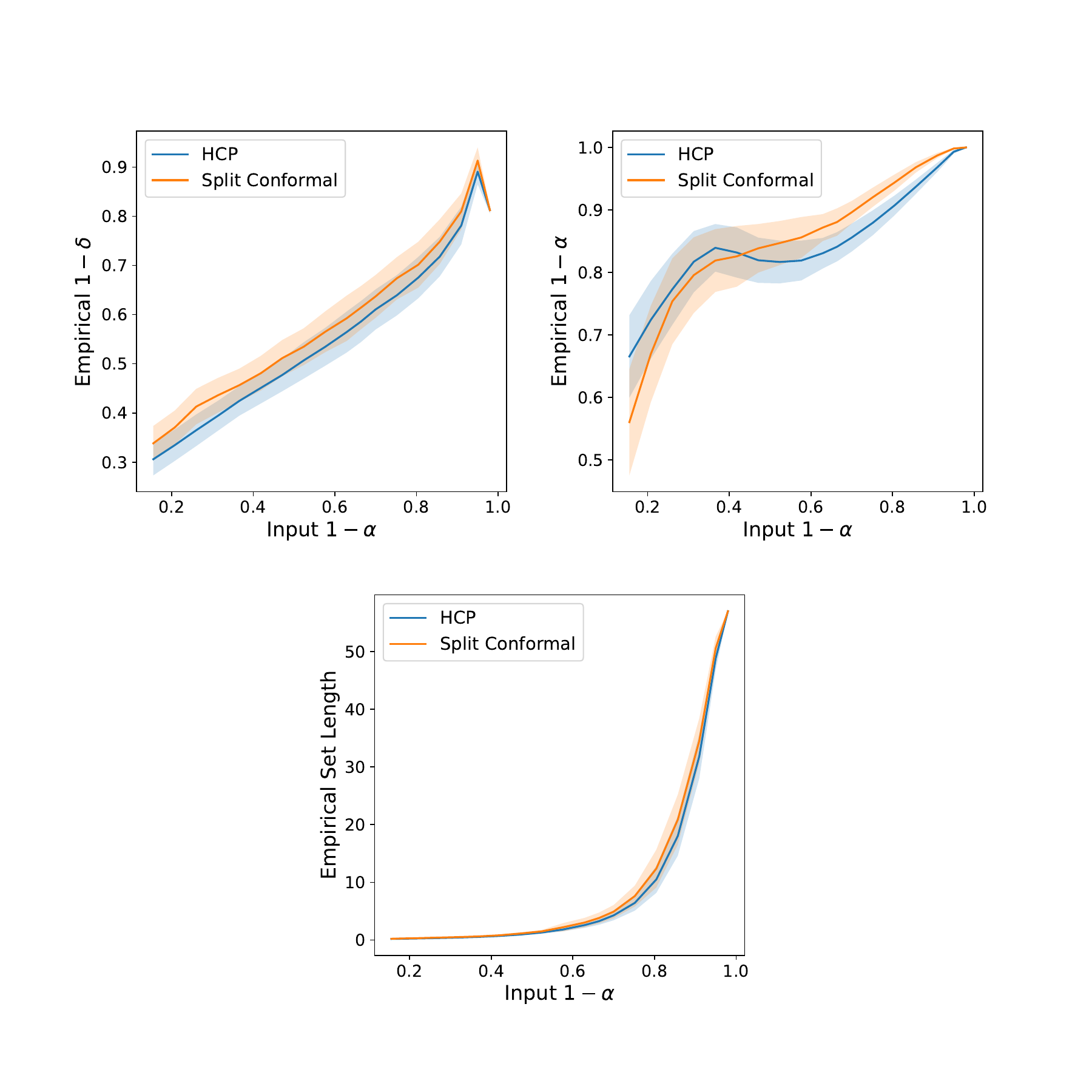}
\vspace{-1cm}
\caption{\label{fig:iwilds_compare_HCP} Performance of multi-environment split conformal and HCP applied to the species classification data. Multi-environment split conformal takes in the parameters $\alpha, \delta$, and HCP takes in the parameter $\alpha$. For each value of $\alpha$, we find the largest $\delta$ such that the fraction of test samples covered by multi-environment split conformal exceeds that of HCP.}
\end{figure}

\section{Resizing residuals to reduce interval lengths}
\label{sec:resizing_algos}

Our experimental results in Section~\ref{sec:real_data} make clear that
naive application of multi-environment algorithms often produces wide
prediction intervals. The bottom plots in Figures \ref{fig:vary_delta} and
\ref{fig:iwilds_vary_delta} show that the average size of the confidence
sets are particularly large when the input parameter $\delta$ is small.
This conservativeness is a natural consequence of the constructions of the
multi-environment confidence sets, just as Romano et al. \cite{RomanoPaCa19} note that
naive symmetric and uniform prediction intervals (i.e., those of the form
$\what{C}(x) = [f(x) - \tau, f(x) + \tau]$) can be overly conservative as
they must typically cover even values $X$ for which $Y$ is highly
non-symmetric or has high variance.  In our context, this presents when an
environment $i$ has residual quantile $S_{1-\alpha}^i$ much larger than the
rest---it is an outlier environment. For small $\delta$, such outlier
environments mostly determine the $1 - \delta$ quantile of the training
environments' scores $S^i_{1-\alpha}$, and consequently, these outlier
training environments govern the size of the confidence set for test samples
regardless of whether the test environment is outlying.

To mitigate this issue, we scale the residual quantiles by a resizing factor
so that the adjusted quantiles (i) remain exchangeable, and (ii) are
supported on a similar range for \emph{all} environments. We compute the $1
- \delta$ quantile of these adjusted residual quantiles. Finally, we
multiply this $1 - \delta$ quantile by the test environment's resizing
factor to construct valid confidence sets. This way, the constructed
confidence set length remains small if the test environment is not an
outlier. Since the probability of any environment being an outlier is small,
the size of the constructed confidence set tends to be small on average.

The question therefore turns to finding an accurate strategy for
constructing these resizing factors.  One natural approach is---if they are
available---to incorporate environmental covariates.  If environmental
covariates $e \in \mc{E}$ are available for each environment, then we can
estimate resizing factors using them.
(One could also incorporate
these into the predictor $f : \mc{X} \times \mc{E} \to \mc{Y}$ via an
expanded covariate $(X, e)$.)  Alternatively, we show that given access to a
limited amount of labeled data from the test environment, one can indeed
construct suitable resizing factors. We describe our strategy in the context
of the multi-environment split conformal algorithm.

Suppose we observe $L^{m+1}$ labeled random examples from the test
environment. As before, we partition the training environments into two sets
$D_1$ and $D_2$. Using data from environments in $D_1$, we construct a
nested confidence set $\{\what{C}_\tau\}_{\tau \in \mathbb{R}} $. We
randomly select $L^{m+1}$ samples from each training environment and compute
residuals for these samples using the $\{\what{C}_\tau\}_{\tau \in
  \mathbb{R}} $ confidence sets. We then pick some quantile $\alpha_0$ close
to 0, and compute the $1 - \alpha_0$ quantile of these residuals as the
resizing factor for each environment. As a result, the resizing factors for
outlier environments tend to be large as desired. Empirically, we find that
$\alpha_0 = 0.05$ works well for a reasonable range of input $\alpha$.
One can adapt this approach to any of multi-environment jackknife-minmax,
hierarchical conformal prediction, or hierarchical jackknife+;
Algorithm~\ref{alg:resized-nested-multi-env-split} shows how to
do so for multi-environment split conformal prediction.

\begin{center}
  \algbox{
  \label{alg:resized-nested-multi-env-split}
    Resized Multi-environment Split Conformal
  }{
    \textbf{Input:} samples $\{X^i_j, Y^i_j\}_{j = 1}^{n_i}$,
    $i \in [m]$, labeled samples $\{X_j^{m+1}, Y_j^{m+1}\}_{j \in L^{m+1}}$, confidence levels $\alpha, \delta$, split ratio $\gamma$, predictive set algorithm $\alg$, resizing quantile $\alpha_0$\\

    Randomly partition $[m]$ into $D_1$ and $D_2$ with $\frac{|D_1|}{m} = \gamma$.
    \textbf{set} $\{\what{C}^{D_1}_\tau\}_{\tau \in \mathbb{R}} = \alg( \{(X^{i}, Y^{i})\}_{i \in D_1} )$

    \textbf{For} $i \in D_2$,
    \begin{enumerate}[1.]
    \item Compute residuals $$  R_j^i=\inf \left\{\tau \mid Y_j^i \in \widehat{C}^{D_1}_\tau\left(X_j^i\right)\right\}, \quad j=1, \ldots, n_i. $$

    \item Randomly select $ |L^{m+1}| $ samples in environment $i$, and denote the set of selected samples as $ L^i $. Compute the resizing factor $$ s^i:=\widehat{q}_{|L^i|, \alpha_0}^{+}\left(\{ R_j^i \}_{j \in L^{i}}\right) ,$$ and resized residual quantiles
    \begin{equation*}
      S_{1-\alpha}^i=\widehat{q}_{n_i, \alpha}^{+}\left( \left\{R_{j}^i/{s^i} \right\}_{j \in [n_i] \setminus L^i} \right).
    \end{equation*}
    \end{enumerate}
    
    \textbf{compute} the resizing factor for the test environment $$ s^{m+1} :=  \widehat{q}_{\left|L^{m+1}\right|, \alpha_0}^{+}\left(\left\{R_j^i\right\}_{j \in L^{m+1}}\right). $$
    
    \textbf{Return} confidence interval mapping with $\widehat{\tau}=s^{m+1} \cdot \widehat{q}_\delta^{+}(\{S_{1-\alpha}^i\}_{i \in D_2}),$ $$ \Cresizedsplit(x):= \what{C}^{D_1}_{\what{\tau}}(x) .$$
  }
\end{center}

As long as the resizing factors $s^i, i \in D_2 \cup \{m+1\}$ are
exchangeable and independent of samples $(X^i, Y_i), i \in 
Y_j^i\}_{j=1}^{n_i}, i \in D_1$, the proof of
Theorem~\ref{theorem:main-coverage-split} immediately
extends to show the following result:

\begin{theorem}
  \label{theorem:resized-nested-coverage-split}
  The multi-environment confidence mapping $\Cresizedsplit(x)$ that
  Algorithm~\ref{alg:resized-nested-multi-env-split} returns provides level
  $(\alpha, \delta)$-coverage~\eqref{eqn:coverage-def}, that is,
  the event
  \begin{align*}
    E_m & \defeq \bigg\{
    \sum_{j \in [n_{m+1}]
      \setminus L^{m + 1}} \indic{Y^{m+1}_j \in \Cresizedsplit(X^{m+1}_j)}
      \\
      & \qquad\qquad ~ \ge \ceil{(1-\alpha)(n_{m+1}- |L^{m+1}|+1)} \bigg\}
  \end{align*}
  satisfies $\P(E_m) \ge 1 - \delta$.  If the quantiles
  $\scorerv^i_{1-\alpha}$ are distinct with probability 1, then $\P(E_m) \le 1 -
  \delta + \frac{1}{m(1 - \gamma) + 1}$.
\end{theorem}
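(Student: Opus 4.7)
The plan is to reduce the claim to (a conditional version of) Theorem~\ref{theorem:main-coverage-split}. First, I would condition on the random partition $(D_1, D_2)$ and on the training data $\{(X^i, Y^i)\}_{i \in D_1}$, which fixes the nested confidence mapping $\{\widehat{C}^{D_1}_\tau\}_{\tau \in \R}$. Under this conditioning, by the exchangeability in \eqref{eqn:exchangeable-arbitrary-sample}, the environments $(X^i, Y^i)$ for $i \in D_2 \cup \{m+1\}$ are jointly exchangeable. Because each $L^i$ is obtained by uniformly subsampling $|L^{m+1}|$ indices, independently across environments and independently of the $D_1$-data, the augmented tuples $(X^i, Y^i, L^i)_{i \in D_2 \cup \{m+1\}}$ remain jointly exchangeable.

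Next, I would introduce an analog of $S^i_{1-\alpha}$ for the test environment,
\begin{equation*}
  \widetilde{S}^{m+1}_{1-\alpha} \defeq \widehat{q}^+_{n_{m+1}-|L^{m+1}|,\alpha}\!\left(\left\{R_j^{m+1}/s^{m+1}\right\}_{j \in [n_{m+1}]\setminus L^{m+1}}\right),
\end{equation*}
with $R_j^{m+1} = \inf\{\tau \mid Y_j^{m+1} \in \widehat{C}^{D_1}_\tau(X_j^{m+1})\}$ and $s^{m+1}$ as in the algorithm. Since $(s^i, S^i_{1-\alpha})$ for $i \in D_2$ and $(s^{m+1}, \widetilde{S}^{m+1}_{1-\alpha})$ are produced by the same symmetric measurable function of the corresponding $(X^i, Y^i, L^i)$, the scalars $\widetilde{S}^{m+1}_{1-\alpha}$ together with $\{S^i_{1-\alpha}\}_{i \in D_2}$ form an exchangeable collection of size $|D_2|+1 = m(1-\gamma)+1$. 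The standard split-conformal rank lemma then yields
\begin{equation*}
  \P\!\left(\widetilde{S}^{m+1}_{1-\alpha} \le \widehat{q}^+_\delta\!\left(\{S^i_{1-\alpha}\}_{i \in D_2}\right)\right) \ge 1-\delta,
\end{equation*}
and, when these scalars are almost surely distinct, the matching upper bound $1-\delta + 1/(m(1-\gamma)+1)$ via the rank-uniform argument used in Theorem~\ref{theorem:main-coverage-split}.

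The last step is to translate this scalar inequality into the coverage event $E_m$. On the event above, by the definition of the upper $(1-\alpha)$-quantile $\widetilde{S}^{m+1}_{1-\alpha}$, at least $\ceil{(1-\alpha)(n_{m+1}-|L^{m+1}|+1)}$ of the indices $j \in [n_{m+1}]\setminus L^{m+1}$ satisfy $R_j^{m+1}/s^{m+1} \le \widehat{q}^+_\delta(\{S^i_{1-\alpha}\}_{i \in D_2})$, equivalently $R_j^{m+1} \le \widehat{\tau}$. Right-continuity of the nested family $\{\widehat{C}^{D_1}_\tau\}$ then gives $Y_j^{m+1} \in \widehat{C}^{D_1}_{\widehat{\tau}}(X_j^{m+1}) = \Cresizedsplit(X_j^{m+1})$ for each such $j$, which is exactly the event $E_m$.

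The step to guard carefully is the joint exchangeability of $(X^i, Y^i, L^i)$ across $i \in D_2 \cup \{m+1\}$: it relies on the $L^i$ being drawn independently and by the same uniform mechanism in every environment, and on $\widehat{C}^{D_1}$ depending only on the $D_1$-data, so that both $s^i$ and $S^i_{1-\alpha}$ are symmetric measurable functions of an exchangeable family. Once this symmetry is verified, the remainder is a direct adaptation of the proof of Theorem~\ref{theorem:main-coverage-split}, using the normalized residuals $R_j^i/s^i$ in place of $R_j^i$ and the rescaled threshold $\widehat{\tau} = s^{m+1}\widehat{q}^+_\delta$ in place of $\widehat{q}^+_\delta$.
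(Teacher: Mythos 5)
Your proposal is correct and follows exactly the route the paper intends: the paper only remarks that the proof of Theorem~\ref{theorem:main-coverage-split} ``immediately extends'' once the resizing factors are exchangeable and independent of the $D_1$-data, and your argument fills in precisely that reduction---joint exchangeability of the augmented tuples $(X^i,Y^i,L^i)$, the symmetric score $\widetilde{S}^{m+1}_{1-\alpha}$, Lemma~\ref{lemma:burrito}, and the translation back to $E_m$ via right-continuity. No gaps; your explicit flagging of the exchangeability of the subsampling mechanism is the same caveat the paper itself states before the theorem.
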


As our experimental
results motivate the
resizing in Alg.~\ref{alg:resized-nested-multi-env-split},
we turn now to empirical investigation.



\subsection{Resized Multi-environment Split Conformal}

We apply the Algorithm~\ref{alg:resized-nested-multi-env-split} to both the neurochemical sensing and the species classification datasets,
demonstrating that the resized split conformal algorithm indeed reduces the
average length of conformal intervals.
In both examples, we set the resizing quantile $\alpha_0$ to be 0.05.

We first duplicate the dopamine sensing experiment in Section
\ref{exp:neurochemical_sensing}, except we randomly sample 30 labeled data
(of the approximately 20,0000 observations) in each environment to construct
the resizing factors.
Figure~\ref{fig:bio_resizing_0.001} presents coverage
results with respect to the unlabeled data in each test environment.

The left plot shows the relation between the empirical set length and the input $1-\alpha$. The blue and the orange curves correspond to the results of split conformal with and without resizing, respectively. The green curve corresponds to the oracle case where we know the $1 - \alpha_0$ residual quantile of all the unlabeled data in each test environment, and we use the $1 - \alpha_0$ residual quantile as the resizing factor.
We observe that when the input $1- \alpha$ is close to 1, the empirical set length is large for all three methods, as the resizing method cannot reduce the average set length by much.
For moderate $1-\alpha$, on the other hand, the resizing methods can reduce the average set length.
The right plot shows the relation between empirical $1-\delta$ and input
$1-\alpha$, demonstrating that the resizing methods provide valid coverage
and corroborating Theorem \ref{theorem:resized-nested-coverage-split}.

\begin{figure}
\centering
\includegraphics[width=9cm]{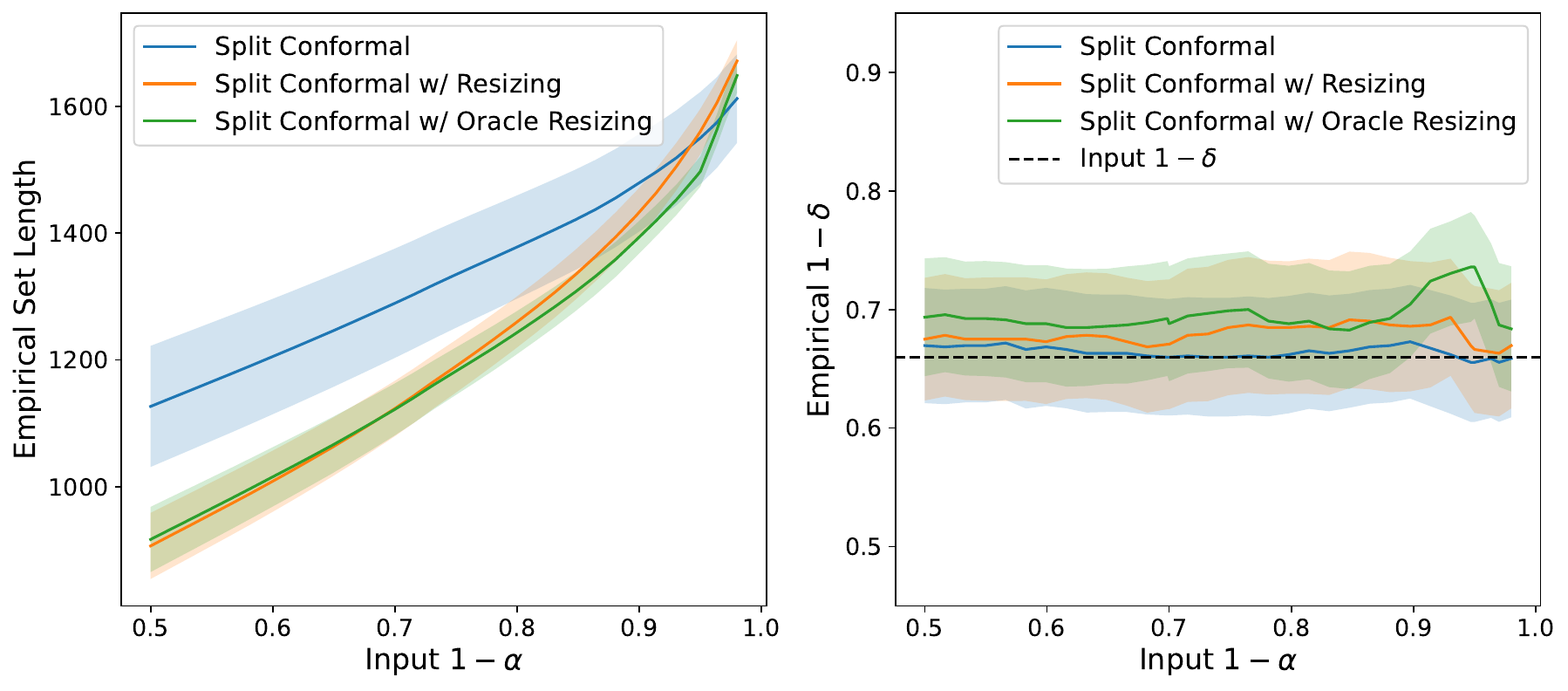}
\caption{Performance of the split conformal algorithm with and without resizing applied to the neurochemical sensing data. For these experiments, $\delta$ is set to be 0.33. The left plot shows the
relation between empirical set length and input $1-\alpha$, while the right plot shows the relation between empirical $1 -  \delta$ and input $1 - \alpha$.}
\label{fig:bio_resizing_0.001}
\end{figure} 

For the species classification data, we use 20 randomly selected labeled samples in each environment to construct the resizing factors. The experimental settings are the same as in \ref{exp:species_classification}, except that (i) we fix the input $\delta$ to be 0.05, and vary the input $\alpha$, and (ii) we randomly sample 20 labeled data in each environment to construct the resizing factors. We display the results in Figure~\ref{fig:iwilds_0.1}. Again, we observe that the resizing method reduces the average set length without breaking our coverage guarantees.

\begin{figure}
\centering
\includegraphics[width=9cm]{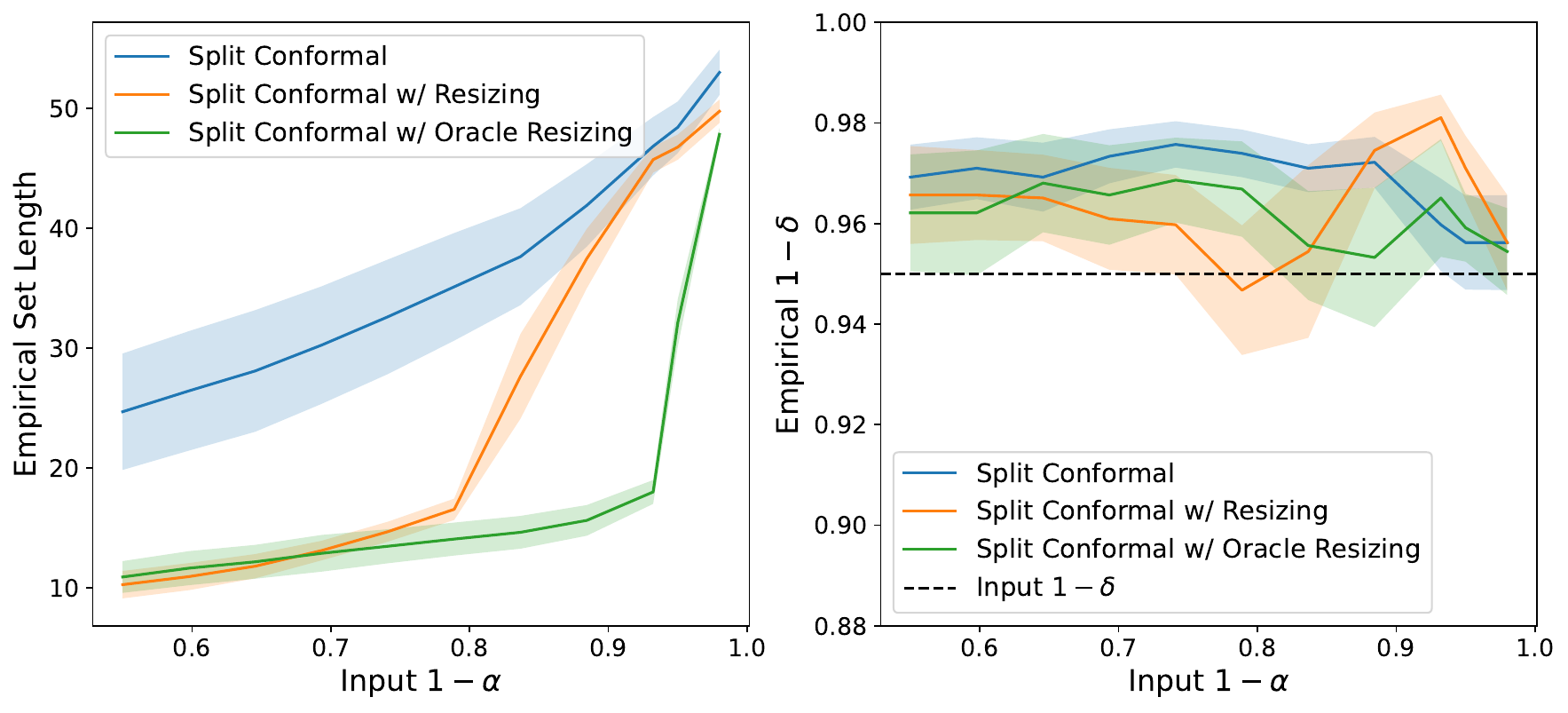}
\caption{Performance of the split conformal algorithm with and without resizing applied to the species classification data. For these experiments, $\delta$ is set to be 0.05. The left plot shows the
relation between empirical set length and input $1-\alpha$, while the right plot shows the relation between empirical $1 -  \delta$ and input $1 - \alpha$.}
\label{fig:iwilds_0.1}
\end{figure}

\subsection{Resized Hierarchical Conformal Prediction (HCP)}
To illustrate the utility of the resizing technique beyond our algorithms,
we study its effects on Lee et al.'s HCP algorithm~\citep{LeeBaWi23} (see
Alg.~\ref{alg:hcp_split}).  Since Lee et al.~design HCP to construct
conformal intervals for a single sample in each test environment, it may not
be practical to use additional labeled samples from the test environments to
construct resizing factors. For illustration purposes, we consider the
oracle setting where we know the $1 - \alpha_0$ quantile of the
residuals for observations in the test environment, using
as the resizing factor. For both the species classification data
and the neurochemical sensing data, we vary the value of the input $\alpha$
and compare the performance of HCP and resized HCP algorithms, displaying
the results in Figures~\ref{fig:bio_resized_hcp} and
\ref{fig:iwilds_resized_hcp}.
The oracle resizing method reduces the average size of the HCP
prediction intervals without breaking the coverage guarantees, highlighting
that such adaptive resizing methods may be essential for strong performance
of hierarchical and multi-environment predictive inference.

\begin{figure}
\centering
\includegraphics[width=9cm]{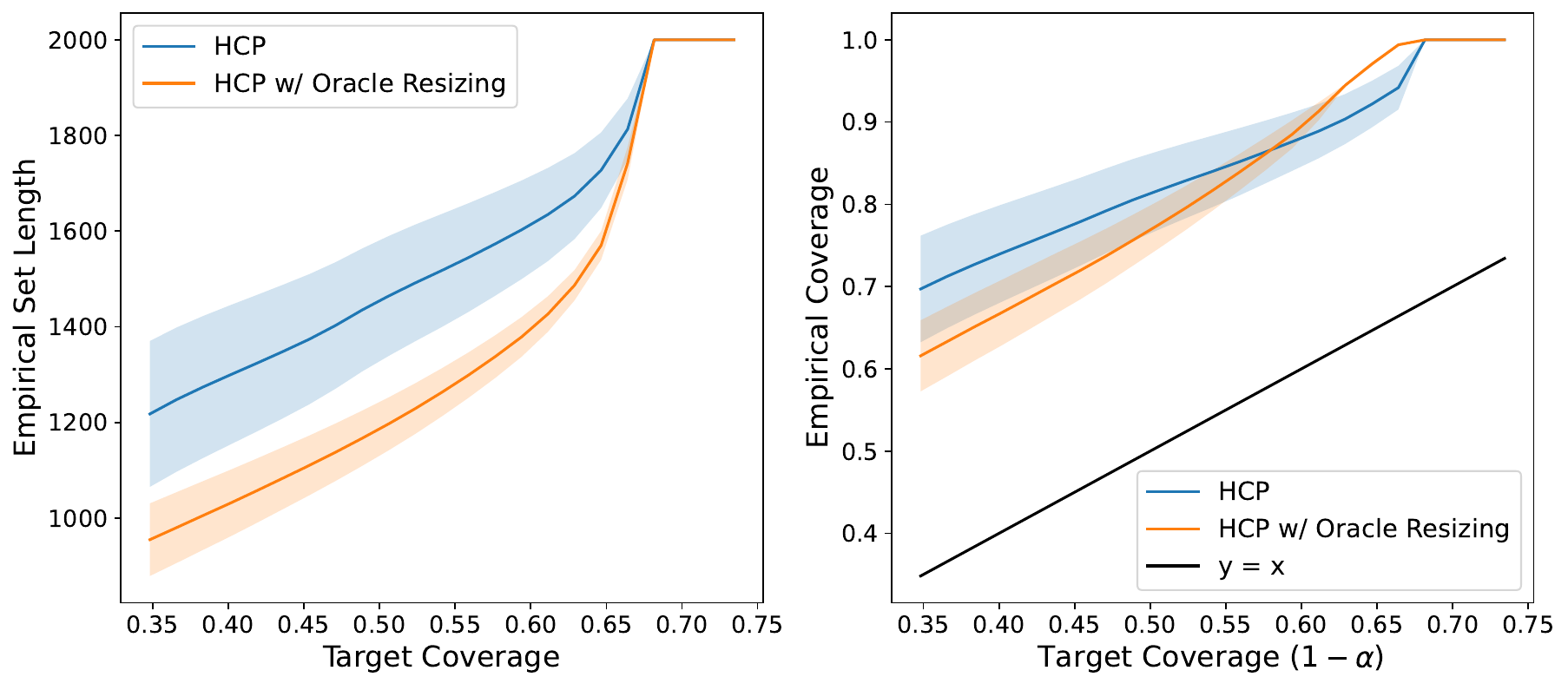}
\caption{Performance of HCP and resized HCP applied to the neurochemical sensing data. Left plot shows the average set size over all test samples, and right plot shows the fraction of all test samples covered by their conformal sets.}
\label{fig:bio_resized_hcp}
\end{figure} 

\begin{figure}
\centering
\includegraphics[width=9cm]{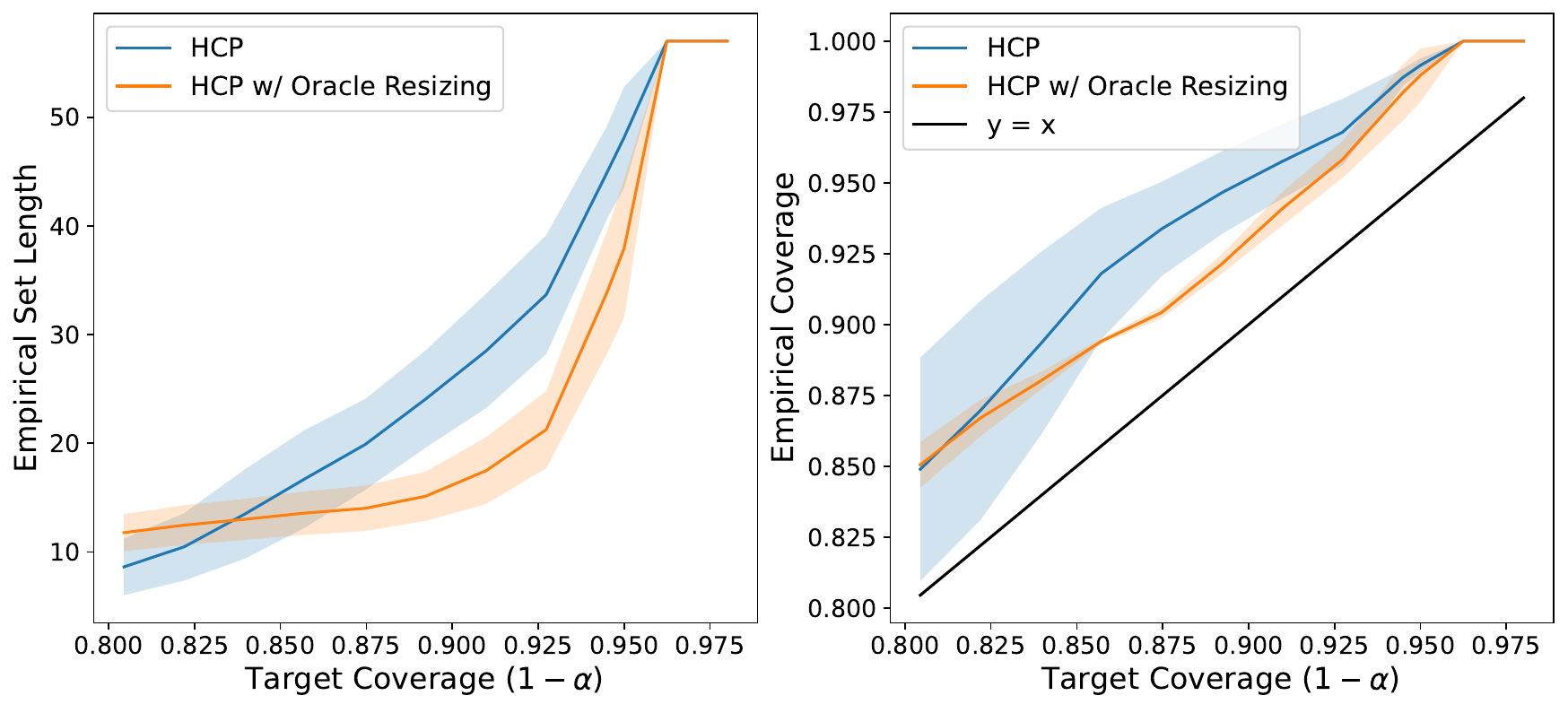}
\caption{Performance of HCP and resized HCP applied to the species classification data. Left plot shows the average set size over all test samples, and right plot shows the fraction of all test samples covered by their conformal sets.}
\label{fig:iwilds_resized_hcp}
\end{figure}


\section{General Consistency Results}
\label{sec:consistency-results}

In this section, we develop a theory of consistency for the
nested confidence sets in Section~\ref{sec:general-confidence}.
We define consistency
via convergence to a particular limiting confidence set mapping.
When this confidence set is in some sense optimal, then our
methods are consistent.
To that end, we shall assume there is a
fixed collection $\{C_\tau\}$ of nested confidence sets,
and we let $\env$ be a random environment drawn from the collection
of possible environments $\environments$,
letting $\P(\cdot \mid \env)$ denote the induced distribution
conditional on $\env$. 
For a fixed $\alpha, \delta \in (0, 1)$, we define
\begin{align*}
  &\tau\opt(\delta,\alpha)\\
  &\defeq \inf\left\{\tau ~ \mbox{s.t.}~
  \P\left(\P\left(Y \in C_\tau(X) \mid \env\right) \ge 1 - \alpha
  \right) \ge 1 - \delta \right\}.
\end{align*}
Thus, for the given confidence mappings $C = \{C_\tau\}$, the value
$\tau\opt(\delta, \alpha)$ yields the smallest confidence set providing
$(\delta,\alpha)$ coverage (Definition~\ref{definition:valid-coverage}) at a
population level.  Then for Algorithms~\ref{alg:nested-multi-env} and \ref{alg:nested-multi-env-split} to be
consistent, we must have the resulting confidence sets $\Cjack$ and $\Csplit$ approach
$C_{\tau\opt}$ in an appropriate sense.

\subsection{Conditions and preliminary definitions}
\label{sec:consistency_prelim_def}

We require a few more definitions, providing examples as we state them.
For a given nested confidence family $C = \{C_\tau\}$, we
define the coverage threshold
\begin{equation}
  \label{eqn:coverage-threshold}
  \tau(x, y, C) \defeq \inf\left\{\tau \in \R \mid y \in C_\tau(x)\right\}
\end{equation}
to be the smallest value $\tau$ such that $C_\tau(x)$ covers $y$.\\

\begin{example}[Coverage thresholds]
  The coverage threshold~\eqref{eqn:coverage-threshold}
  has straightforwardly computable values for most of the
  ``standard'' cases of confidence sets we consider.
  For the interval-based confidence set
  $C_\tau(x) = [f(x) - \tau, f(x) + \tau]$, evidently
  $\tau(x, y, C) = |f(x) - y|$.
  For the lower/upper sets we use
  in quantile regression (recall Section~\ref{sec:quantile}),
  where
  $C_\tau(x) = [l(x) - \tau, u(x) + \tau]$, we have
  $\tau(x, y, C) = \max\{l(x) - y, y - u(x)\}$,
  so that $\tau(X_i^j, Y_i^j, \what{C}^{-i})
  = R_j^i$ is simply the residual.
  For the generic loss-based set
  $C_\tau(x) = \{y \mid \loss(y, f(x)) \le \tau\}$
  (recall Section~\ref{sec:general-losses}), it
  is immediate that $\tau(x, y, C) = \loss(y, f(x))$,
  and so once again
  we have equality with the residuals
  $R_j^i = \tau(X_i^j, Y_i^j, \what{C}^{-i})$.
\end{example}

To discuss convergence of the estimators in general, we must address
modes of quantile convergence, which we do at a generic level
via convergence in distribution. To that end,
for an $\R$-valued random variable $Z$, let
\begin{equation*}
  \law(Z \mid P)
\end{equation*}
denote the induced probability law of $Z$ under
the probability distribution $P$. 
For example, letting
$\what{P}^i$ be the empirical distribution
of $(X_j^i, Y_j^i)_{j = 1}^{n_i}$,
then
the empirical distribution
of the values $\tau(X_j^i, Y_j^i, \what{C}^{-i})$ becomes
$\law(\tau(X, Y, \what{C}^{-i}) \mid \what{P}^i)$.
We recall the
\emph{bounded Lipschitz metric} between distributions $P$ and $Q$,
\begin{equation*}
  \bl{P - Q} \defeq \sup_{\linf{f} \le 1,
    \lipnorm{f} \le 1} \int f (dP - dQ),
\end{equation*}
noting that for any distribution $P$ on $\R^d$, if $\what{P}_n$ denotes the
empirical distribution of $Z_i \simiid P$, $i = 1, \ldots, n$, then
$\bls{\what{P}_n - P} \cas 0$ (see,
e.g., van der Vaart and Wellner \cite[Chs.~1.10--1.12]{VanDerVaartWe96}).

An essentially standard lemma relates convergence in
the bounded Lipschitz metric to quantiles;
for completeness we include a proof
in Appendix~\ref{sec:proof-quantiles-from-bl}.
\begin{lemma}
  \label{lemma:quantiles-from-bl}
  Let $Q$ be a distribution on $\R$ with
  $\alpha$-quantile $\quant_\alpha(Q)$ such that
  if $Z \sim Q$, then
  \begin{equation*}
    Q(Z \le \quant_\alpha(Q) - u) < \alpha
    ~~ \mbox{and} ~~
    Q(Z \le \quant_\alpha(Q) + u) > \alpha
  \end{equation*}
  for all $u > 0$. Then the quantile mapping $\quant_\alpha$ is
  continuous
  at $Q$ for the bounded-Lipschitz metric: for each $\epsilon > 0$,
  there exists $\delta > 0$ such that $\bl{P - Q} \le \delta$ implies
  $|\quant_\alpha(P) - \quant_\alpha(Q)| \le \epsilon$.
\end{lemma}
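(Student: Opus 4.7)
The plan is to convert the bounded-Lipschitz control $\bl{P-Q} \le \delta$ into pointwise two-sided control on the CDF of $P$ near $q := \quant_\alpha(Q)$, and then invoke the strict-crossing hypothesis to trap $\quant_\alpha(P)$ inside $[q-\epsilon, q+\epsilon]$.

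First I would fix $\epsilon > 0$ and apply the hypothesis with $u = \epsilon/2$ to produce $\eta > 0$ with $Q(Z \le q - \epsilon/2) \le \alpha - \eta$ and $Q(Z \le q + \epsilon/2) \ge \alpha + \eta$. Next I would Lipschitz-smooth the indicator: for $h \in (0,1]$ and $t \in \R$, let $\phi_t^h$ be the piecewise-linear function equal to $1$ on $(-\infty, t]$, equal to $0$ on $[t+h, \infty)$, and linear in between. Then $\phi_t^h$ is bounded by $1$ with Lipschitz constant $1/h$, so $h\phi_t^h$ lies in the bounded-Lipschitz unit ball; applying the definition of $\bl{\cdot}$ to both $f$ and $-f$ gives
\begin{equation*}
\Bigl|\int \phi_t^h \, d(P - Q)\Bigr| \,\le\, \frac{\bl{P-Q}}{h}.
\end{equation*}
Combined with the pointwise sandwich $\indic{z \le t-h} \le \phi_{t-h}^h(z)$ and $\phi_t^h(z) \le \indic{z \le t+h}$, this produces the two-sided CDF estimate
\begin{equation*}
Q(Z \le t - h) - \frac{\bl{P-Q}}{h} \,\le\, P(Z \le t) \,\le\, Q(Z \le t + h) + \frac{\bl{P-Q}}{h}.
\end{equation*}

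Specializing to $h = \epsilon/2$ with $t = q + \epsilon$ and $t = q - \epsilon$, and setting $\delta := \eta \epsilon / 2$, the hypothesis $\bl{P-Q} \le \delta$ forces $P(Z \le q + \epsilon) \ge \alpha + \eta - 2\delta/\epsilon \ge \alpha$ and $P(Z \le q - \epsilon) \le \alpha - \eta + 2\delta/\epsilon < \alpha$. Since $\quant_\alpha(P) = \inf\{t : P(Z \le t) \ge \alpha\}$, the first inequality gives $\quant_\alpha(P) \le q + \epsilon$ and the strict second inequality gives $\quant_\alpha(P) \ge q - \epsilon$, so $|\quant_\alpha(P) - \quant_\alpha(Q)| \le \epsilon$.

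No step is genuinely difficult. The only bookkeeping that requires care is choosing the scaling of the Lipschitz bump $\phi_t^h$ so that both the $L^\infty$ and Lipschitz halves of the BL norm constraint are met simultaneously---hence the rescaling by $h$. Once that is in place, the argument reduces to a routine CDF comparison, and the \emph{strict} inequalities in the strict-crossing hypothesis are exactly what supply the slack $\eta > 0$ needed to pass from ``$\bl{P-Q}$ small'' to ``quantiles close.''
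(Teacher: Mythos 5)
Your argument is correct and follows essentially the same route as the paper's proof: smooth the indicator by a width-$h$ linear ramp, rescale by $h$ so it sits in the bounded-Lipschitz unit ball, sandwich the CDFs, and use the strict-crossing hypothesis to supply the slack $\eta$. The only nit is that your choice $\delta = \eta\epsilon/2$ yields $P(Z \le q - \epsilon) \le \alpha$ rather than the strict inequality you assert (strictness is what rules out $\quant_\alpha(P) < q - \epsilon$); taking, say, $\delta = \eta\epsilon/4$ restores a positive margin, just as the paper keeps a $v/2$ margin on each side.
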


With these definitions, we make a few assumptions on the convergence of the
estimated families of nested confidence sets.  We will then
revisit the major examples of confidence sets we
have considered---the symmetric sets in the multi-environment
jackknife-minmax (Section~\ref{sec:basic-method}), the loss-based sets in
Section~\ref{sec:general-losses}, and the quantile-type sets in
Section~\ref{sec:quantile}---and show natural sufficient conditions for the
assumptions to hold.

The first two relate to consistency and continuity of the nested confidence
sets.

\stepcounter{assumption}
\begin{intassumption}
  \label{assumption:convergence-of-confidences}
  Fix environment $i$. As $n \to \infty$,
  \begin{equation*}
    \bl{\law(\tau(X, Y, C) \mid P^i)
      - \law(\tau(X, Y, \what{C}^{-i}) \mid \what{P}^i)} \cas 0,
  \end{equation*} where $\what{C}^{-i} = \{\widehat{C}_\tau^{-i}\}_{\tau \in \mathbb{R}}$ is defined in Algorithm~\ref{alg:nested-multi-env}.
\end{intassumption}
\begin{intassumption}
  \label{assumption:convergence-of-confidences-split}
  Fix any validation environment $E = i$. As $n \rightarrow \infty$,
  \begin{equation*}
    \bl{\law(\tau(X, Y, C) \mid P^i)
      - \law(\tau(X, Y, \what{C}^{D_1}) \mid \what{P}^i)} \cas 0,
  \end{equation*}
  where $\what{C}^{D_1} = \{\what{C}^{D_1}_\tau\}_{\tau \in \mathbb{R}}$ is defined in Algorithm~\ref{alg:nested-multi-env-split}.
\end{intassumption}

\stepcounter{assumption}
\begin{intassumption}
  \label{assumption:continuity-confidence}
  Let $\lambda$ be a measure on $\mc{Y}$, fix $\tau\opt \in \R$, and
  let $\epsilon > 0$. Define the (random) subsets
  \begin{equation*}
    B^i_{n,\tau} \defeq \left\{x \in \mc{X} ~ \mbox{s.t.}~
    \lambda\left(\what{C}_{\tau}^{-i}(x) \setdiff
    C_{\tau\opt}(x)\right) \ge \epsilon\right\},
  \end{equation*}
  indexed by $n \in \N, \tau \in \R$, and $i \le m$, of $\mc{X}$.
  Let $\tau(n)$ be such that
  $\lim_{n \to \infty} \tau(n) = \tau\opt$.
  Then for suitably slowly growing $m = m(n) \to \infty$,
  the $X$-measure of $B_{n,\tau} \defeq
  \cup_{i = 1}^m B_{n,\tau}^i$ satisfies
  $\lim_{n \to \infty} P_X(B_{n,\tau(n)}) = 0$
  with probability 1.
\end{intassumption}
\begin{intassumption}
  \label{assumption:continuity-confidence-split}
  Let $\lambda$ be a measure on $\mc{Y}$, fix $\tau\opt \in \R$, and
  let $\epsilon > 0$. Define the (random) subsets
  \begin{equation*}
    \Bsplit \defeq \left\{x \in \mc{X} ~ \mbox{s.t.}~
    \lambda\left(\what{C}^{D_1}_{\tau}(X) \setdiff
    C_{\tau\opt}(X)\right) \ge \epsilon\right\},
  \end{equation*}
  indexed by $n \in \N$ and $\tau \in \R$, of $\mc{X}$.
  Let $\tau(n)$ be such that
  $\lim_{n \to \infty} \tau(n) = \tau\opt$.
  Then
  the $X$-measure of $\Bsplit$ satisfies
  $\lim_{n \to \infty} P_X(\Bsplitn) = 0$
  with probability 1.
\end{intassumption}
\noindent
These give a type of consistency of the confidence set
mappings $\what{C}^{-i}$ and $\widehat{C}^{D_1}$ to $C$.

\subsubsection{Examples realizing the assumptions}

A few examples may make it clearer that we
expect Assumptions~\ref{assumption:convergence-of-confidences} and
\ref{assumption:continuity-confidence} to hold. Similar arguments apply for Assumptions~\ref{assumption:convergence-of-confidences-split} and
\ref{assumption:continuity-confidence-split}, and thus are omitted.
Throughout the examples, we make the standing assumption that
if $\pi$ is the (prior) distribution
on environments and $P_X = \int P^e_X d\pi(e)$ is the marginal
distribution over $X$, then
\begin{equation*}
  \sup_{e \in \environments} \dchi{P^e_X}{P_X} \le
  \maxdivergence^2
  < \infty.
\end{equation*}
In this case, for any $g : \mc{X} \to \R$, we have
\begin{align*}
  & \left|\int g (dP^e - dP)\right|= \left|\int g \left(\frac{dP^e}{dP} - 1\right) dP\right|\\
  & \qquad \le \sqrt{\E_P[g^2]} \sqrt{\dchi{P^e}{P}}  \le  \sqrt{\E_P[g^2]} \maxdivergence;
\end{align*}
in particular, $|P^e(A) - P(A)|
\le \sqrt{P(A)} \maxdivergence$ for $A \subset \mc{X}$.

We now give three examples: regression, quantile regression, and
(multiclass) logistic regression, and for each we provide a simple
sufficient condition for
Assumptions~\ref{assumption:convergence-of-confidences}
and~\ref{assumption:continuity-confidence} to hold. For all three examples, the sufficient condition assumes the leave-one-out predictions converge uniformly on compact sets.
The arguments are technical and do not particularly impact the main thread
of the paper, so we defer formal proofs to appendices.

\begin{example}[The regression case]
  \label{example:regression-consistency}
  In the case that we perform regression
  as in Section~\ref{sec:basic-method},
  we assume the existence of a population function
  $f(x) = \E[Y \mid X = x]$, where the expectation is taken
  across environments, and $\what{f}_{-i} \to f$ for each $i$.
  We
  assume this convergence is nearly uniform on compact sets:
  for each $\epsilon > 0$, there exists a
  subset $\mc{X}_\epsilon$ such that
  \begin{equation}
    \label{eqn:locally-uniform-f-convergence}
    \sup_{x \in \mc{X}_\epsilon} |\what{f}_{-i}(x) - f(x)|
    \cas 0
    ~~ \mbox{and} ~~
    P(\mc{X}_\epsilon) \ge 1 - \epsilon.
  \end{equation}
  Let $\lambda$ be Lebesgue measure.
  Then the locally uniform convergence
  condition~\eqref{eqn:locally-uniform-f-convergence} implies
  Assumptions~\ref{assumption:convergence-of-confidences}
  and~\ref{assumption:continuity-confidence}
  for $\lambda$;
  see Appendix~\ref{sec:proof-regression-consistency} for
  the argument.
\end{example}

Uniform convergence on compact sets is not a particularly onerous condition
for standard problems. For example, for a linear regression model, if the
estimate for model coefficients converges almost surely to the model
coefficients, then assumption~\eqref{eqn:locally-uniform-f-convergence}
holds trivially.

\begin{example}[Quantile regression]
  \label{example:quantile-consistency}
  In the case of quantile-type regression problems, we recall
  Section~\ref{sec:quantile}, and we assume the
  consistency conditions that
  \begin{equation*}
    l(x) = \quant_{\alpha/2}(Y \mid X = x)
    ~ \mbox{and} ~
    u(x) = \quant_{1 - \alpha/2}(Y \mid X = x),
  \end{equation*}
  and that for each $x$, $Y \mid X = x$ has a positive density, so that $l$
  and $u$ are unique. Then in analogue to
  condition~\eqref{eqn:locally-uniform-f-convergence}, we
  assume that for each $\epsilon > 0$, there exists
  $\mc{X}_\epsilon$ such that
  \begin{equation}
    \label{eqn:locally-uniform-u-l-convergence}
    \sup_{x \in \mc{X}_\epsilon}
    \max\left\{|\what{l}_{-i}(x) - l(x)|,
    |\what{u}_{-i}(x) - u(x)|\right\} \cas 0
  \end{equation} and 
   \begin{equation*}
    P_X(\mc{X}_\epsilon) \ge 1 - \epsilon.
  \end{equation*}
  As in Example~\ref{example:regression-consistency},
  if $\lambda$ is Lebesgue measure, then
  the convergence~\eqref{eqn:locally-uniform-u-l-convergence}
  implies Assumptions~\ref{assumption:convergence-of-confidences}
  and~\ref{assumption:continuity-confidence}
  for $\lambda$. See Appendix~\ref{sec:proof-quantile-consistency}
  for a proof.
\end{example}

\begin{example}[Classification and logistic regression]
  \label{example:classification-consistency}
  We consider a $k$-class logistic regression problem, where we
  take the loss
  \begin{equation*}
    \loss(y, v) = \log\left(\sum_{i = 1}^k e^{v_i - v_y}\right)
    = \log\bigg(1 + \sum_{i \neq y} e^{v_i - v_y}\bigg).
  \end{equation*}
  We assume the predictors $f : \mc{X} \to \R^k$
  are (Bayes) optimal in that they
  satisfy
  \begin{equation*}
    \frac{\exp([f(x)]_y)}{
      \sum_{i = 1}^k \exp([f(x)]_i)}
    = \P(Y = y \mid X = x).
  \end{equation*}
  As the loss $\loss$ is invariant to constant shifts we make
  the standing w.l.o.g.\ assumption that $f(x)$ and $\what{f}(x)$ are always
  mean zero (i.e.\ $\ones^T f(x) = 0$)
  and assume the consistency condition that for each
  $\epsilon > 0$, there exists $\mc{X}_\epsilon \subset \mc{X}$ such that
  \begin{equation}
    \label{eqn:locally-uniform-logreg-convergence}
    \sup_{x \in \mc{X}_\epsilon}
    \norm{\what{f}_{-i}(x) - f(x)} \cas 0
    ~~ \mbox{and} ~~
    P_X(\mc{X}_\epsilon) \ge 1-\epsilon.
  \end{equation}
  In this case, the uniqueness of quantile estimators requires a type of
  continuity condition that was unnecessary for the regression cases, and we
  make the additional continuity assumption that for each $c \in \R_+$ and
  $y \in [k]$, the set $\{x \in \mc{X} \mid \loss(y, f(x)) = c\}$ has
  measure zero.  As a consequence, the sets
  \begin{equation*}
    D_{c,\epsilon} \defeq \left\{x \in \mc{X} \mid
    \mbox{there~exists~} y ~ \mbox{s.t.} ~
    |\loss(y, f(x)) - c| < \epsilon \right\}
  \end{equation*}
  satisfy $\lim_{\epsilon \to 0} P_X(D_{c,\epsilon}) = 0$ for each
  $c$, by continuity of measure.

  An analogous argument to that in Examples~\ref{example:regression-consistency}
  and~\ref{example:quantile-consistency} then shows that
  the conditions above suffice for
  Assumptions~\ref{assumption:convergence-of-confidences}
  and~\ref{assumption:continuity-confidence} to hold with
  counting measure.
  See Appendix~\ref{sec:proof-classification-consistency}.
\end{example}

\subsection{Consistency of the multi-environment Jackknife-minmax}

With the motivating examples in place, we now provide  the
main convergence theorem. We state one final assumption, which
makes the environment-level quantiles sufficiently unique
that identifiability is possible.

\begin{assumption}
  \label{assumption:quantiles-by-environments}
  Define the quantile of the coverage threshold~\eqref{eqn:coverage-threshold}
  on environment $i$ by
  \begin{equation*}
    \quant_{1 - \alpha}(P^i)
    \defeq \inf\left\{t \mid P^i(\tau(X, Y, C) \le t) \ge 1 - \alpha
    \right\}.
  \end{equation*}
  For a given $\delta$, there exists a (unique) $q(\delta)$ such that for any $u > 0$,
  \begin{equation*}
    \P\left(\quant_{1 - \alpha}(P^\env) \le q(\delta) - u\right) < 1 - \delta,
     \end{equation*} and
  \begin{equation*}
    \P\left(\quant_{1 - \alpha}(P^\env) \le q(\delta) + u\right) > 1 - \delta,
  \end{equation*}
  where the probability is taken over $\env$ drawn randomly
  from $\environments$.
\end{assumption}

Given this assumption, we can show that the $X$-measure
of sets where the multi-environment jackknife-minmax and
``true'' confidence set $C_{\tau\opt(\delta,\alpha)}$ make
different predictions converges to zero.

\begin{theorem}
  \label{theorem:general-consistency}
  Let $\lambda$ be a measure on $\mc{Y}$ such that
  Assumptions~\ref{assumption:convergence-of-confidences},
  \ref{assumption:continuity-confidence}, and
  \ref{assumption:quantiles-by-environments} hold. Let $\epsilon > 0$
  and $m = m(n)$ be a sufficiently slowly growing sequence.
  Then the $P_X$ measure of
  \begin{align*}
   & \what{B}(\epsilon) \defeq\\
   & \left\{x \in \mc{X}
    ~~ \mbox{such that} ~~
    \lambda\left(\Cjack(x) \setdiff C_{\tau\opt(\delta, \alpha)}(x)
    \right) \ge \epsilon\right\}
  \end{align*}
  satisfies $P_X(\what{B}(\epsilon)) \cas 0$.
\end{theorem}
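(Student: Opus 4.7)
The plan is three-part: (i) show that $\what\tau \cas \tau\opt(\delta,\alpha)$; (ii) identify the quantity $q(\delta)$ from Assumption~\ref{assumption:quantiles-by-environments} with $\tau\opt(\delta,\alpha)$; and (iii) translate threshold convergence to set convergence using Assumption~\ref{assumption:continuity-confidence}. The identification (ii) is a direct definitional check: by the coverage threshold~\eqref{eqn:coverage-threshold}, $\P(Y \in C_\tau(X) \mid \env = e) = P^e(\tau(X,Y,C) \le \tau)$, so $\{\quant_{1-\alpha}(P^\env) \le \tau\} = \{\P(Y \in C_\tau(X) \mid \env) \ge 1-\alpha\}$ and hence $q(\delta) = \tau\opt(\delta,\alpha)$.

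For (i), I view $\what\tau$ as a two-level empirical quantile. For each fixed environment $i$, Assumption~\ref{assumption:convergence-of-confidences} combined with Lemma~\ref{lemma:quantiles-from-bl}---applied at the generically unique $(1-\alpha)$-quantile of $P^i$---gives $S^i_{1-\alpha} \to \quant_{1-\alpha}(P^i)$ a.s.\ as $n \to \infty$. Since the pairs $(P^i, n_i)$ are i.i.d.\ draws from $\mu$, the standard fact that $\bl{\what{P}_m - P} \cas 0$ implies that the empirical distribution of the population quantiles $\{\quant_{1-\alpha}(P^i)\}_{i \le m}$ converges in bounded Lipschitz distance to the law of $\quant_{1-\alpha}(P^\env)$. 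The triangle inequality in $\bl{\cdot}$---noting that the BL distance between the empirical laws of $\{S^i_{1-\alpha}\}$ and $\{\quant_{1-\alpha}(P^i)\}$ on $m$ points is bounded by $\frac{1}{m}\sum_{i\le m} \min(|S^i_{1-\alpha} - \quant_{1-\alpha}(P^i)|,\,2)$---transfers this to the empirical law of $\{S^i_{1-\alpha}\}$. A second application of Lemma~\ref{lemma:quantiles-from-bl}, at the unique $(1-\delta)$-quantile $q(\delta)$ guaranteed by Assumption~\ref{assumption:quantiles-by-environments}, concludes $\what\tau \cas q(\delta)$.

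For (iii), take $\tau(n) = \what\tau$ in Assumption~\ref{assumption:continuity-confidence}, justified by (i). For any $\epsilon' > 0$, $P_X(B_{n,\what\tau}) \cas 0$, and for $x \notin B_{n,\what\tau}$ every $i \le m$ satisfies $\lambda(\what{C}^{-i}_{\what\tau}(x) \setdiff C_{\tau\opt}(x)) < \epsilon'$. The inclusion
\begin{equation*}
  \Cjack(x) \setdiff C_{\tau\opt}(x)
  \subset \bigl(C_{\tau\opt}(x) \setminus \what{C}^{-1}_{\what\tau}(x)\bigr)
  \cup \bigcup_{i=1}^m \bigl(\what{C}^{-i}_{\what\tau}(x) \setminus C_{\tau\opt}(x)\bigr)
\end{equation*}
and a union bound then give $\lambda(\Cjack(x) \setdiff C_{\tau\opt}(x)) \le (m+1)\epsilon'$ for such $x$. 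Setting $\epsilon' = \epsilon/(m(n)+1)$ and requiring $m(n)$ to grow slowly enough that Assumption~\ref{assumption:continuity-confidence} still delivers $P_X(B_{n,\what\tau}) \to 0$ with this shrinking threshold yields $P_X(\what{B}(\epsilon)) \cas 0$.

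The main obstacle is the $(m+1)$ factor in the union bound, intrinsic to the minmax construction of $\Cjack$: it forces Assumption~\ref{assumption:continuity-confidence} to hold at an $\epsilon'$ that shrinks like $1/m$, which is precisely what the theorem's ``sufficiently slowly growing $m(n)$'' clause is designed to accommodate---the per-environment consistency rates (e.g., the locally uniform convergence~\eqref{eqn:locally-uniform-f-convergence} of Example~\ref{example:regression-consistency}) must dominate the $1/m$ scaling. A sharper bound would require additional structure controlling joint overlap of the $\what{C}^{-i}_{\what\tau}$ across $i$, but the union-bound route combined with slow growth of $m$ should suffice for the stated result.
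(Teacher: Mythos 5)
Your proposal is correct and follows essentially the same route as the paper: first establish $\what{\tau} \cas q(\delta)$ via Assumption~\ref{assumption:convergence-of-confidences}, Lemma~\ref{lemma:quantiles-from-bl}, and a bounded-Lipschitz triangle inequality (this is precisely the paper's Lemma~\ref{lemma:consistency-of-delta-quantile}), then transfer threshold convergence to set convergence via Assumption~\ref{assumption:continuity-confidence}. Your explicit identification $q(\delta) = \tau\opt(\delta,\alpha)$ is left implicit in the paper but is indeed needed for Assumption~\ref{assumption:continuity-confidence} to apply at the limit of $\what{\tau}$, so spelling it out is a small improvement. The one place you genuinely diverge is the final step: the paper asserts the inclusion $\{x \mid \lambda(\Cjack(x) \setdiff C_{\tau\opt(\delta,\alpha)}(x)) \ge \epsilon\} \subset \cup_{i \le m} B_n^i$ with the \emph{same} $\epsilon$ on both sides, which is not literally valid, since $\cup_{i}(\what{C}^{-i}_{\what{\tau}}(x) \setminus C_{\tau\opt(\delta,\alpha)}(x))$ can have measure up to $m$ times the largest summand even when every $x \notin B_n^i$; your $(m+1)\epsilon'$ union bound with $\epsilon' = \epsilon/(m+1)$ is the honest version, at the price of reading Assumption~\ref{assumption:continuity-confidence} as holding along a shrinking $\epsilon'$-sequence for slowly growing $m(n)$ (which the verifications in, e.g., Example~\ref{example:regression-consistency} do support). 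So your treatment of the last step is, if anything, more careful than the paper's.
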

\begin{proof}
  The key step in the argument is to recognize that
  Assumptions~\ref{assumption:convergence-of-confidences}
  and~\ref{assumption:quantiles-by-environments} give
  consistency of the estimated threshold $\what{\tau}$:
  \begin{lemma}
    \label{lemma:consistency-of-delta-quantile}
    Let Assumption~\ref{assumption:convergence-of-confidences} hold and
    \ref{assumption:quantiles-by-environments} hold for the choice
    $\delta$. Then
    for all suitably slowly growing sequences
    $m = m(n) \to \infty$, the global
    estimated threshold $\what{\tau} \defeq
    \quantplus[\delta](\{S^i_{1-\alpha}\}_{i=1}^m)$
    in Algorithm~\ref{alg:nested-multi-env} converges almost surely:
    $\what{\tau} \cas q(\delta)$.
  \end{lemma}
  \noindent
  We defer this proof to
  Section~\ref{sec:proof-consistency-of-delta-quantile},
  continuing with the main thread of the theorem.
  
  Let $m = m(n)$ be a sequence tending to $\infty$ but such that the
  conclusions of Lemma~\ref{lemma:consistency-of-delta-quantile} hold.  Let
  $\what{\tau} = \quantplus[\delta](\{S^i_{1-\alpha}\}_{i=1}^m)$ be the
  random $\delta$-quantile in Alg.~\ref{alg:nested-multi-env}.
  Lemma~\ref{lemma:consistency-of-delta-quantile} guarantees that
  $\what{\tau} \cas q(\delta)$. Following a slight variation of the notation
  of Assumption~\ref{assumption:continuity-confidence}, define the sets
  $B_{n}^i = \{x \mid \lambda(\what{C}^{-i}_{\what{\tau}}(x) \setdiff
  C_{\tau\opt(\delta,\alpha)}(x)) \ge \epsilon\}$.
  Then
  \begin{align*}
    \left\{x \mid
    \lambda\left(\Cjack(x) \setdiff C_{\tau\opt(\delta, \alpha)}(x)
    \right) \ge \epsilon
    \right\}
    & \subset \bigcup_{i = 1}^m B_n^i
  \end{align*}
  and Assumption~\ref{assumption:continuity-confidence}
  guarantees that the latter set has $P_X$-measure tending to zero.
\end{proof}

\subsection{Consistency of multi-environment split conformal prediction}

Similarly, we can show that the $X$-measure of sets where the split
conformal and ``true'' confidence set $C_{\tau^{\star}(\delta, \alpha)}$
make different predictions converges to zero.
 
\begin{theorem}
  \label{theorem:general-consistency-split}
  Let $\lambda$ be a measure on $\mc{Y}$ such that
  Assumptions~\ref{assumption:convergence-of-confidences-split},
  \ref{assumption:continuity-confidence-split}, and
  \ref{assumption:quantiles-by-environments} hold. Let $\epsilon > 0$
  and $m = m(n)$ be a sufficiently slowly growing sequence.
  Then the $P_X$ measure of
  \begin{align*}
    &\what{B}(\epsilon)\defeq\\
    & \left\{x \in \mc{X}
    ~~ \mbox{such that} ~~
    \lambda\left(\Csplit(x) \setdiff C_{\tau\opt(\delta, \alpha)}(x)
    \right) \ge \epsilon\right\}
  \end{align*}
  satisfies $P_X(\what{B}(\epsilon)) \cas 0$.
\end{theorem}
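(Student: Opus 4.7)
The plan is to mirror the proof of Theorem~\ref{theorem:general-consistency}, replacing the jackknife-specific ingredients with their split-conformal counterparts. The argument splits into two stages: first establish consistency of the global threshold $\what{\tau} = \quantplus[\delta](\{S_{1-\alpha}^i\}_{i \in D_2})$ to $q(\delta)$ from Assumption~\ref{assumption:quantiles-by-environments}; then invoke Assumption~\ref{assumption:continuity-confidence-split} to control the symmetric difference $\Csplit(x) \setdiff C_{\tau\opt(\delta,\alpha)}(x)$ on a set of large $P_X$-measure.

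For the first stage, fix a validation environment index $i \in D_2$. Assumption~\ref{assumption:convergence-of-confidences-split} gives bounded-Lipschitz convergence of $\law(\tau(X,Y,\what{C}^{D_1}) \mid \what{P}^i)$ to $\law(\tau(X,Y,C) \mid P^i)$. Under the mild non-flatness condition on the population CDF at its $(1-\alpha)$-quantile (inherited implicitly from the setup, analogous to Assumption~\ref{assumption:quantiles-by-environments} applied at the within-environment level), Lemma~\ref{lemma:quantiles-from-bl} yields $S_{1-\alpha}^i \cas \quant_{1-\alpha}(P^i)$ as $n \to \infty$. Because environments in $D_2$ are exchangeable with environment $m+1$, the empirical measure $\frac{1}{|D_2|} \sum_{i \in D_2} \pointmass_{S_{1-\alpha}^i}$ can be compared to $\law(\quant_{1-\alpha}(P^\env))$: writing the triangle inequality through the intermediate $\frac{1}{|D_2|} \sum_{i \in D_2} \pointmass_{\quant_{1-\alpha}(P^i)}$, the first term tends to zero by uniform control of $|S_{1-\alpha}^i - \quant_{1-\alpha}(P^i)|$ over $i \in D_2$ (for slowly growing $m(n)$), and the second by the standard bounded-Lipschitz convergence of empirical measures to their population law.

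Applying Lemma~\ref{lemma:quantiles-from-bl} once more, now at the outer level and invoking Assumption~\ref{assumption:quantiles-by-environments} for the chosen $\delta$, gives $\what{\tau} \cas q(\delta)$. This is the split-conformal analogue of Lemma~\ref{lemma:consistency-of-delta-quantile}. Combined with the observation that $\tau\opt(\delta,\alpha) = q(\delta)$ (immediate from the definitions once the confidence mappings are nested and right-continuous), we conclude $\what{\tau} \cas \tau\opt(\delta,\alpha)$.

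For the second stage, let $\tau(n) = \what{\tau}$ and note that $\tau(n) \to \tau\opt(\delta,\alpha)$ almost surely. By definition of $\Csplit$ and of $\Bsplitn$ in Assumption~\ref{assumption:continuity-confidence-split},
\begin{equation*}
  \what{B}(\epsilon) = \left\{x \mid \lambda\left(\what{C}^{D_1}_{\what{\tau}}(x) \setdiff C_{\tau\opt(\delta,\alpha)}(x)\right) \ge \epsilon\right\} \subset \Bsplitn.
\end{equation*}
Assumption~\ref{assumption:continuity-confidence-split} then yields $P_X(\Bsplitn) \cas 0$, which finishes the proof. The main obstacle is the first stage: handling the double layer of quantile estimation (within-environment $(1-\alpha)$ and across-environment $(1-\delta)$) and verifying that the uniformity needed across $i \in D_2$ permits $m = m(n) \to \infty$; this is where care with the rate of convergence in Assumption~\ref{assumption:convergence-of-confidences-split} and a countable union argument is required, as in the proof of Lemma~\ref{lemma:consistency-of-delta-quantile}.
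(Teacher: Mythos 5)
Your proposal is correct and follows essentially the same route as the paper, which simply invokes the split-conformal analogue of Lemma~\ref{lemma:consistency-of-delta-quantile} to get $\what{\tau} \cas q(\delta)$ and then applies Assumption~\ref{assumption:continuity-confidence-split} to the inclusion $\what{B}(\epsilon) \subset \Bsplitn$. Your explicit identification of $q(\delta)$ with $\tau\opt(\delta,\alpha)$ (via right-continuity of the nested sets) is a detail the paper leaves implicit, and you are right to flag it.
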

\begin{proof}
  As in Lemma~\ref{lemma:consistency-of-delta-quantile}, we let $m = m(n)$
  be a sequence tending to $\infty$ such that the global estimated threshold
  $\widehat{\tau}=\widehat{q}_\delta^{+}(\{S_{1-\alpha}^i\}_{i \in D_2})$ in
  Algorithm~\ref{alg:nested-multi-env-split} converges almost surely to
  $q(\delta)$. Use Assumption~\ref{assumption:continuity-confidence-split}
  to complete the proof.
\end{proof}


\section{Environment-conditional coverage}
\label{sec:environment-conditional-coverage}

As with most predictive inference methods, the coverage guarantees we have
provided are marginal: they hold on average over the draw of the $(m + 1)$th
environment.
One naturally therefore asks to what extent we might achieve conditional
coverage, where we can condition on the newest environment.
Well-known impossibility results~\cite{BarberCaRaTi21a, Vovk12} suggest this
is likely to be challenging while providing distribution-free coverage; for
example, Vovk~\cite{Vovk12} shows that if an inferential method for
prediction $Y \in \R$ from $X$ drawn from a non-atomic measure guarantees $1
- \alpha$ conditional coverage, i.e., $\P(Y \in C(x) \mid X = x) \ge 1 -
\alpha$, then for almost every $x$, $C(x)$ has infinite Lebesgue measure
with probability at least $1 - \alpha$.
Nonetheless, weaker forms of conditional coverage, such as weighted versions
of marginal coverage, are possible~\cite{GibbsChCa23},
and we might expect that these ideas should
translate to multi-environment settings.
Here, we show that both these intuitions hold.

To present the results, we first define environment-conditional
coverage.
Let $\environments$ be the collection of possible environments, $\mc{X}$
the covariate space as usual, and let $\what{C} : \mc{X} \times \environments \toto
\mc{Y}$ a set-valued predictor.
Assume the hierarchical sampling
model~\eqref{eqn:exchangeable-generation}--\eqref{eqn:iid-within-environment}
that environments $E \in \environments$ are drawn i.i.d., and conditional on
$E = e$, we observe $(X_i, Y_i) \simiid P^e$.
Then $\what{C}$ provides \emph{distribution free
environment-conditional $(\alphaenv, \alpha)$-coverage} if
\begin{equation*}
  \P\bigg(\frac{1}{n} \sum_{j = 1}^n \indic{Y_j
    \in \what{C}(X_j, e)} \ge 1 - \alpha \mid E = e\bigg)
  \ge 1 - \alphaenv
\end{equation*}
when $(X_j, Y_j) \simiid P^e$, where the probability is also 
over any sampling used to construct $\what{C}$.
We shall see that this is, essentially, impossible.

\subsection{The impossibility of conditional coverage}

Because of the parallels between our multi-environment split conformal
guarantees and classical split-conformal predictive guarantees---the results
in Theorems~\ref{theorem:main-coverage-split}
and~\ref{theorem:nested-coverage-split} parallel the classical cases---we
expect to obtain certain impossibility results for environment-conditional
coverage.
Non-atomicity plays an essential role in demonstrations that conditional
coverage fails~\cite{BarberCaRaTi21a, Vovk12}, but it is interesting to
consider cases with discrete environments.

Thus, we begin by extending Vovk's impossibility results~\cite{Vovk12}
to non-atomic cases for classical distribution-free prediction.
We focus on the case that $Y \in \R$, as it provides the simplest statements.
Temporarily considering the case
that $(X_i, Y_i)_{i = 1}^n \simiid P$ for a fixed distribution $P$, let
$\what{C}_n : \mc{X} \toto \R$ be a set-valued mapping and function of
sample, so that we tacitly mean
\begin{equation*}
  \what{C}_n(x) = \what{C}_n(x \mid X_1^n, Y_1^n) \subset \R.
\end{equation*}
Then $\what{C}_n$ achieves distribution-free conditional $(1 - \alpha)$
coverage if for $P$-almost-all $x$,
\begin{equation*}
  \P(Y_{n + 1} \in \what{C}_n(X_{n + 1}) \mid X_{n + 1} = x) \ge 1 - \alpha
\end{equation*}

We restrict attention to measures for which the marginal distribution $P_X$
is appropriately non-atomic:
\begin{definition}
  \label{definition:indiscrete}
  A probability measure $P$ on $\mc{X}$ is $(\epsilon,
  \delta)$-indiscrete if for all sets $F \subset \mc{X}$ with $P(F) \ge
  \delta$, there exists $G \subset F$ such that $0 < P(G) \le
  \epsilon$.
\end{definition}
\noindent
For example, if $P$ has no atoms and $\mc{X}$ is a metric space, then it is
$(\epsilon, \delta)$-indiscrete for all $\epsilon, \delta > 0$ (see Lemma 2
in the extended version of the paper~\cite{Vovk12}).
If we let $\lebesgue$ denote Lebesgue measure, we obtain the following
impossibility result. (See Appendix~\ref{sec:proof-indiscrete-impossibility} for
the proof.)
\begin{proposition}
  \label{proposition:indiscrete-impossibility}
  Let $P$ be a probability distribution for which $P_X$
  is $(\epsilon / n, \delta)$-indiscrete. Assume that $\what{C}_n$
  achieves $1 - \alpha$ distribution-free conditional coverage.
  Then for any $F \subset \mc{X}$
  with $P_X$-probability $P_X(F) \ge \delta$, there exists
  $x \in F$ for which
  \begin{equation*}
    P^n(\lebesgue(\what{C}_n(x)) = +\infty) \ge 1 - \alpha - 2 \sqrt{\epsilon}.
  \end{equation*}
\end{proposition}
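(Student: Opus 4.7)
The strategy is proof by contradiction, adapting Vovk's impossibility argument~\cite{Vovk12} from the non-atomic to the $(\epsilon/n,\delta)$-indiscrete setting. Assume toward contradiction that every $x\in F$ satisfies $P^n(\lambda(\widehat{C}_n(x))=\infty)<1-\alpha-2\sqrt{\epsilon}$. Using the indiscreteness of $P_X$ on the set $F$ (which has $P_X$-measure at least $\delta$), I would first extract a measurable subset $G\subset F$ with $0<P_X(G)\le\epsilon/n$. By a union bound the event $A:=\{X_i\notin G \text{ for all } i\le n\}$ has $P^n(A)\ge 1-\epsilon$, and conditional on $A$ the joint law of the training sample is determined entirely by the restriction of $P$ to $(\mathcal{X}\setminus G)\times\mathbb{R}$.

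Next I would construct a family of alternative distributions $\{P^*_M\}_{M>0}$ that agree with $P$ off $G\times\mathbb{R}$ but whose conditional law $Y\mid X=x$ on $x\in G$ is uniform on $[-M,M]$. By construction $P^n$ and $(P^*_M)^n$ coincide as measures restricted to $A$, so the random set $\widehat{C}_n(\cdot;\omega)$ has identical law under both conditional on $A$. Distribution-freeness lets me apply the $1-\alpha$ conditional coverage guarantee to $P^*_M$; choosing a witness $x\in G$ at which the guarantee holds (possible because $P^*_{M,X}(G)=P_X(G)>0$) gives
\[
\mathbb{E}_{(P^*_M)^n}\!\left[\frac{\lambda\bigl(\widehat{C}_n(x;\omega)\cap[-M,M]\bigr)}{2M}\right]\;\ge\;1-\alpha.
\]

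Restricting the expectation to the event $A$ costs at most $\epsilon$, and because $(P^*_M)^n$ and $P^n$ agree on $A$ the integrand has the same law under $P^n$ there. Sending $M\to\infty$, the integrand vanishes pointwise on $\{\lambda(\widehat{C}_n(x;\omega))<\infty\}$ and is bounded by $1$ otherwise, so dominated convergence upper-bounds the limit of the left-hand side by $P^n(\lambda(\widehat{C}_n(x))=\infty)$. Rearranging yields $P^n(\lambda(\widehat{C}_n(x))=\infty)\ge 1-\alpha-\epsilon$ for some $x\in G\subset F$, which contradicts the standing assumption since $\epsilon\le 2\sqrt{\epsilon}$ throughout the regime of interest (and in fact produces a slightly stronger bound than claimed).

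The main obstacle I expect lies in the careful handling of the distribution-free hypothesis under the perturbation $P^*_M$: one must verify that $\widehat{C}_n$, as a fixed measurable map from samples to subsets of $\mathbb{R}$, satisfies the hypothesis literally on $P^*_M$ without implicit regularity conditions on $P$, and that the ``$P_X$-almost-every $x$'' quantifier in the definition of conditional coverage indeed yields a witness inside $G$. Secondary technical points include measurability of $\{\lambda(\widehat{C}_n(x;\omega))=\infty\}$ in $\omega$ and the application of dominated convergence uniformly in the parameter $M$ on the coupled event $A$.
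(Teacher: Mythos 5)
Your proof is correct, and its skeleton matches the paper's: use indiscreteness to extract $G\subset F$ with $0<P_X(G)\le \epsilon/n$, perturb only the conditional law of $Y$ given $X\in G$ to a uniform distribution on a growing interval, and invoke the distribution-free conditional coverage guarantee for the perturbed law to force $\widehat{C}_n(x)$ to have infinite Lebesgue measure at some $x\in G$. Where you genuinely depart from the paper is in how the original and perturbed product measures are compared. The paper bounds $\|P^n-Q^n\|_{\mathrm{TV}}\le\sqrt{2\epsilon}$ via the Hellinger--TV inequality and pays this price twice, arriving at the constant $2\sqrt{\epsilon}$. You instead observe that $P^n$ and $(P^*_M)^n$ \emph{agree exactly} as measures on the event $A$ that no training point lands in $G$, and $P^n(A^c)\le\epsilon$ by a union bound; this exact coupling costs only $\epsilon$ and yields the sharper bound $1-\alpha-\epsilon$, which implies the stated one since the claim is vacuous unless $\epsilon\le 4$. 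Your treatment of the limit is also cleaner on one point where the paper is slightly loose: the paper fixes $L$ and produces a witness $x$ that in principle depends on $L$, then passes to $\lambda(\widehat{C}_n(x))=+\infty$; you instead fix a single witness $x\in G$ valid for a countable sequence $M_k\to\infty$ (the union of the corresponding $P_X$-null exceptional sets is still null, and $P_X(G)>0$) and apply reverse Fatou/dominated convergence to $\lambda(\widehat{C}_n(x)\cap[-M,M])/(2M)$, which directly controls $P^n(\lambda(\widehat{C}_n(x))=+\infty)$. The caveats you flag (measurability of $\omega\mapsto\lambda(\widehat{C}_n(x;\omega))$, applicability of the guarantee to $P^*_M$) are the same implicit assumptions the paper makes, so nothing further is needed.
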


This result makes some intuitive sense:
if the marginal distribution on $X$ is non-atomic ``enough,'' that is,
at a level that scales as $1/n$, then there are $x$ values we
are unlikely to observe.
For these $x$, we expect that coverage should be hard. The
proposition makes explicit that this hardness is substantial.
We can also recover a slightly stronger
version of~\cite[Proposition 4]{Vovk12}:
\begin{corollary}
  \label{corollary:recover-vovk-impossibility}
  Let $\what{C}_n$ achieve $(1 - \alpha)$-distribution-free
  conditional coverage. Then
  \begin{enumerate}[i.]
  \item if $P_X$ is $(\epsilon/n, \delta)$-indiscrete, then
    if $F \subset \mc{X}$ is measurable and each $x \in F$
    satisfies
    $\P(\lebesgue(\what{C}_n(x)) = +\infty) < 1 - \alpha - 2 \sqrt{\epsilon}$,
    then $F$ has probability $P_X(F) < \delta$.
  \item if $P_X$ is non-atomic and $\mc{X}$ is a metric space,
    \begin{equation*}
      \P(\lebesgue(\what{C}_n(x)) = +\infty) \ge 1 - \alpha
    \end{equation*}
    for $P_X$-almost all $x \in \mc{X}$.
  \end{enumerate}
\end{corollary}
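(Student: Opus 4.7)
The plan is to derive Corollary \ref{corollary:recover-vovk-impossibility} directly from Proposition \ref{proposition:indiscrete-impossibility}, treating part (i) as a straightforward contrapositive and deriving part (ii) by letting the indiscreteness parameters shrink through a countable sequence.

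For part (i), I would argue by contradiction. Suppose $P_X$ is $(\epsilon/n, \delta)$-indiscrete, that $\what{C}_n$ provides distribution-free conditional $(1-\alpha)$ coverage, and that $F \subset \mc{X}$ is measurable with each $x \in F$ satisfying $\P(\lebesgue(\what{C}_n(x)) = +\infty) < 1 - \alpha - 2\sqrt{\epsilon}$. If $P_X(F) \geq \delta$, Proposition \ref{proposition:indiscrete-impossibility} would supply some $x \in F$ with $\P(\lebesgue(\what{C}_n(x)) = +\infty) \geq 1 - \alpha - 2\sqrt{\epsilon}$, contradicting the hypothesis on $F$. Hence $P_X(F) < \delta$.

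For part (ii), the key observation is that, by the cited Lemma 2 from the extended version of \cite{Vovk12}, a non-atomic probability measure on a metric space is $(\epsilon', \delta)$-indiscrete for every $\epsilon', \delta > 0$; in particular the factor $1/n$ is harmless. For $\eta > 0$ define the bad set
\begin{equation*}
  F_\eta \defeq \left\{x \in \mc{X} \mid \P(\lebesgue(\what{C}_n(x)) = +\infty) < 1 - \alpha - \eta\right\},
\end{equation*}
and choose $\epsilon = \eta^2/4$ so that $2\sqrt{\epsilon} = \eta$; then $F_\eta$ is precisely the kind of set to which part (i) applies. Since $P_X$ is $(\epsilon/n, \delta)$-indiscrete for \emph{every} $\delta > 0$, part (i) forces $P_X(F_\eta) < \delta$ for all $\delta > 0$, whence $P_X(F_\eta) = 0$. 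Writing
\begin{equation*}
  \left\{x \mid \P(\lebesgue(\what{C}_n(x)) = +\infty) < 1 - \alpha\right\}
  = \bigcup_{k \geq 1} F_{1/k},
\end{equation*}
the left side is a countable union of $P_X$-null sets and therefore $P_X$-null, which is exactly the claim.

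There is no substantive obstacle here: Proposition \ref{proposition:indiscrete-impossibility} does the heavy lifting, and the only care required is (a) the contrapositive packaging in part (i) and (b) the choice $\epsilon = \eta^2/4$ together with a countable-union argument in part (ii) to upgrade the finite-$\epsilon$ bound to an almost-sure statement. The $1/n$ scaling in the indiscreteness hypothesis never appears in part (ii) because non-atomicity gives indiscreteness at every scale simultaneously.
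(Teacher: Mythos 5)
Your proposal is correct and follows essentially the same route as the paper: part (i) is the identical contradiction argument via Proposition~\ref{proposition:indiscrete-impossibility}, and part (ii) carries out precisely the ``take limits'' step the paper leaves implicit. Your explicit choice $\epsilon = \eta^2/4$ and the countable union over $F_{1/k}$ is a welcome fleshing-out of that terse step, but it is not a different proof.
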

\begin{proof}
  Let $F$ be a measurable set of $x$ such that
  $P(\lebesgue(\what{C}_n(x)) = + \infty) < 1 - \alpha - 2
  \sqrt{\epsilon}$.
  Then if $P_X(F) \ge \delta$,
  Proposition~\ref{proposition:indiscrete-impossibility}
  would imply the existence of some $x \in F$ such that
  $P(\lebesgue(\what{C}_n(x)) = +\infty) \ge 1 - \alpha - 2 \sqrt{\epsilon}$.
  This is a contradiction to the definition of $F$.

  For the second part, recall that any non-atomic
  $P_X$ is $(\epsilon, \delta)$-indiscrete for $\epsilon, \delta > 0$,
  and take limits.
\end{proof}

\subsection{The impossibility of environment-conditional coverage}

We mimic the argument to prove
Proposition~\ref{proposition:indiscrete-impossibility} to extend
it to environment-conditional coverage, where we have
predictors $\what{C} : \mc{X} \times \environments \toto \R$.
To make the statement and proof of the result simpler, we assume that for
each $e \in \environments$ the predictive sets $\what{C}(x, e)$ are
\emph{homoskedastic}, meaning that for each $e$, the Lebesgue measure of
$\what{C}(x, e)$ is constant as $x$ varies.
For example, in homoskedastic regression~\cite{LeiGSRiTiWa18},
predictive intervals are symmetric sets $\what{C}(x, e) = [f(x) -
  \what{\sigma}, f(x) + \what{\sigma}]$, where $\what{\sigma} \ge 0$ may
depend on $e$; we have $\lebesgue(\what{C}(x, e)) = 2 \what{\sigma}$ for all
$x$.

In this case, we can extend
Proposition~\ref{proposition:indiscrete-impossibility}.
We assume the sampling model~\eqref{eqn:iid-within-environment}
that environments $E \in \environments$ are
i.i.d., and conditional on $E = e$, we observe $(X_j, Y_j) \simiid P^e$.
Let $P_\env$ be the marginal distribution over $\env$.
\begin{proposition}
  \label{proposition:no-environment-conditional}
  Let $P$ be a probability distribution for which
  $P_E$ is $(\epsilon/m, \delta)$-indiscrete.
  Assume that $\what{C}$ achieves distribution free $(\alphaenv,
  \alpha)$-environment conditional coverage and is homoskedastic.
  Then for any $F \subset \environments$ with $P_E(F) \ge \delta$, there exists an
  environment $e \in F$ for which
  \begin{equation*}
    P^m(\lebesgue(\what{C}(x, e)) = +\infty) \ge 1 - \alphaenv
    - 2 \sqrt{\epsilon}.
  \end{equation*}
\end{proposition}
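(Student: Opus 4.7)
The plan is to mimic the proof of Proposition~\ref{proposition:indiscrete-impossibility}, replacing rare subsets of $\mc{X}$ with rare subsets of $\environments$ and using the distribution-free guarantee to construct an adversarial conditional law for $Y$ on those rare environments. Argue by contradiction: suppose that for every $e \in F$,
\[
\P^m(\lebesgue(\what{C}(x, e)) < +\infty) > \alphaenv + 2\sqrt{\epsilon}.
\]
By homoskedasticity write $\what{\sigma}(e) \defeq \lebesgue(\what{C}(x, e))$, depending only on $e$ (and the training sample). Apply $(\epsilon/m, \delta)$-indiscreteness of $P_E$ to $F$ to obtain $G \subset F$ with $0 < P_E(G) \le \epsilon/m$; then $C \defeq \{E_i \notin G \text{ for all } i \le m\}$ satisfies $\P(C) \ge 1 - m\,P_E(G) \ge 1 - \epsilon$ by a union bound.

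Fix $e' \in G$ and introduce an adversarial distribution $P'$ that agrees with $P$ on $P_E$, on $P^e_X$ for every $e$, and on $P^e_{Y\mid X}$ for $e \notin G$, but takes $(P')^e_{Y\mid X} = \mathrm{Unif}[-M,M]$ (independent of $X$) for $e \in G$ and large $M$. Under the natural coupling, on $C$ the training-data law is identical under $P$ and $P'$, so writing $B \defeq \{\what{\sigma}(e') < \infty\}$,
\[
\P_{P'}(B \cap C) = \P_P(B \cap C) \ge \P_P(B) - \P_P(C^c) > \alphaenv + 2\sqrt{\epsilon} - \epsilon.
\]
Because homoskedasticity forces $\lebesgue(\what{C}(X_j, e')) = \what{\sigma}(e')$ for every $X_j$ and $Y_j \sim \mathrm{Unif}[-M,M]$ under $(P')^{e'}$, we obtain $\P_{P'}(Y_j \in \what{C}(X_j, e') \mid X_j, \what{C}) \le \what{\sigma}(e')/(2M)$. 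A Markov bound on the covered fraction then yields $\P_{P'}(A \mid \what{C})\indic{B} \le \what{\sigma}(e')/((1-\alpha)\,2M)\indic{B}$ with $A \defeq \{n^{-1}\sum_j \indic{Y_j \in \what{C}(X_j, e')} \ge 1 - \alpha\}$; since this integrand is bounded by $\indic{B}\le 1$ and tends pointwise to $0$ on $B$, and since the law of $\what{\sigma}(e')$ on $C$ is $M$-independent, dominated convergence gives $\P_{P'}(A \cap B \cap C \mid E_{m+1} = e') \to 0$ as $M \to \infty$, whence $\P_{P'}(A \cap B \mid E_{m+1} = e') \le o_M(1) + \epsilon$.

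Combining,
\[
\P_{P'}(A \mid E_{m+1} = e') \le \P_{P'}(A \cap B \mid E_{m+1} = e') + \P_{P'}(B^c) \le o_M(1) + \epsilon + (1 - \alphaenv - 2\sqrt{\epsilon} + \epsilon),
\]
which is strictly less than $1 - \alphaenv$ for $\epsilon < 1$ and $M$ large enough, contradicting the distribution-free coverage bound $\P_{P'}(A \mid E_{m+1} = e') \ge 1 - \alphaenv$. The main obstacle is the coupling bookkeeping: one must verify that matching $P$ and $P'$ on $P_E$ and on every marginal $P^e_X$ forces the training-data law to agree on $C$, so the training-determined quantity $\what{\sigma}(e')$ keeps its original distribution on $C$ under $P'$, and that the distribution-free guarantee can legitimately be invoked for $e' \in G$ under $P'$. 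Homoskedasticity is essential: without it, $\lebesgue(\what{C}(X_j, e'))$ could vary with $X_j$ and the predictor could concentrate near the support of $Y \mid X$, so that the pointwise Lebesgue-measure bound would no longer control coverage.
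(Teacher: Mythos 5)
Your proposal is correct and follows the same core strategy as the paper's proof: argue by contradiction, use $(\epsilon/m,\delta)$-indiscreteness to extract a rare subset $G$ of environments, plant an adversarial $\uniform[-M,M]$ conditional law for $Y$ on environments in $G$, and use homoskedasticity so that a finite Lebesgue measure of $\what{C}(\cdot,e')$ forces the covered fraction below $1-\alpha$ once $M$ is large. The two places where you deviate are both legitimate and, if anything, slightly cleaner: (i) where the paper transfers probabilities between $P$ and the perturbed law via the Hellinger tensorization bound $\tvnorms{P^m - Q^m} \le \sqrt{2\epsilon}$, you couple the two training-data laws on the event $C$ that no training environment falls in $G$ and use the union bound $\P(C^c) \le m\,P_E(G) \le \epsilon$, which gives a better constant; (ii) where the paper fixes a finite threshold $L$ and sends $\kappa \to \infty$ relative to it, you avoid the need for a uniform $L$ over $e \in F$ by combining Markov's inequality on the covered fraction with dominated convergence, which works precisely because restricting to $C$ makes the law of $\what{\sigma}(e')$ independent of $M$. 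One small point of care that the paper's version handles slightly more gracefully: the paper integrates over $E_{m+1} \in G$ and divides by $Q(E_{m+1}\in G)$, so it only needs the coverage guarantee to hold for $P_E$-almost every $e$, whereas your argument fixes a single $e' \in G$ and should therefore choose $e'$ inside the full-$P_E$-measure subset of $G$ on which the distribution-free guarantee actually holds; since $P_E(G) > 0$, such an $e'$ exists and the contradiction goes through.
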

\noindent
See Appendix~\ref{sec:proof-no-environment-conditional} for the proof.

We immediately obtain the following corollary via the same argument as that
for Corollary~\ref{corollary:recover-vovk-impossibility}.
\begin{corollary}
  \label{corollary:your-coverage-sucks}
  Let $\what{C}$ be homoskedastic and achieve distribution free $(\alphaenv,
  \alpha)$-coverage.
  \begin{enumerate}[i.]
  \item If $P_E$ is $(\epsilon/m, \delta)$-indiscrete and
    $F \subset \environments$ is measurable
    and each $e \in F$ satisfies
    $\P(\lebesgue(\what{C}(x, e)) = +\infty) < 1 - \alphaenv - 2
    \sqrt{\epsilon}$, then $F$ has probability $P_E(F) < \delta$.
  \item If $P_E$ is non-atomic and the collection $\environments$ of environments
    is metrizable, then for $P_E$-almost all $e \in \environments$,
    \begin{equation*}
      \P(\lebesgue(\what{C}(x, e)) = +\infty) \ge 1 - \alphaenv.
    \end{equation*}
  \end{enumerate}
\end{corollary}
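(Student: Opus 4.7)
The plan is to mimic the proof of Corollary~\ref{corollary:recover-vovk-impossibility}, substituting Proposition~\ref{proposition:no-environment-conditional} for Proposition~\ref{proposition:indiscrete-impossibility}. The argument splits naturally into the two parts of the claim, with part (i) handled by contradiction and part (ii) obtained by a limiting argument applied to part (i).

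For part (i), I would argue by contradiction. Suppose $P_E(F) \ge \delta$. Since $P_E$ is $(\epsilon/m, \delta)$-indiscrete, $\what{C}$ is homoskedastic, and $\what{C}$ provides distribution-free $(\alphaenv, \alpha)$ environment-conditional coverage, Proposition~\ref{proposition:no-environment-conditional} produces some $e \in F$ with
\[
P^m\bigl(\lebesgue(\what{C}(x, e)) = +\infty\bigr) \ge 1 - \alphaenv - 2\sqrt{\epsilon},
\]
which contradicts the defining property of $F$. Hence $P_E(F) < \delta$.

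For part (ii), I would use the standard fact (noted just after Definition~\ref{definition:indiscrete}) that any non-atomic probability measure on a metric space is $(\epsilon', \delta')$-indiscrete for every $\epsilon', \delta' > 0$. In particular, $P_E$ is $(\epsilon/m, \delta)$-indiscrete for every $\epsilon, \delta > 0$. Define, for $k \in \N$,
\[
F_k \defeq \Bigl\{e \in \environments \;\Big|\; \P\bigl(\lebesgue(\what{C}(x, e)) = +\infty\bigr) < 1 - \alphaenv - \tfrac{1}{k}\Bigr\}.
\]
Fix $k$ and choose $\epsilon > 0$ small enough that $2\sqrt{\epsilon} \le 1/k$, so that $F_k$ lies inside the set of environments to which part (i) applies at this choice of $\epsilon$. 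Part (i) then gives $P_E(F_k) < \delta$ for every $\delta > 0$, whence $P_E(F_k) = 0$. A countable union yields
\[
P_E\Bigl(\bigl\{e \mid \P(\lebesgue(\what{C}(x, e)) = +\infty) < 1 - \alphaenv\bigr\}\Bigr) = P_E\Bigl(\bigcup_{k \ge 1} F_k\Bigr) = 0,
\]
which is the desired almost-sure lower bound.

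I do not expect any serious obstacle: the only points that require care are the measurability of the $F_k$ (implicit in the standing setup) and the verification that $2\sqrt{\epsilon}$ can be driven below $1/k$ by shrinking $\epsilon$, which is immediate. The essential content is already packaged in Proposition~\ref{proposition:no-environment-conditional}; this corollary merely repackages it as a statement about typical environments.
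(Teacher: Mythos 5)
Your proposal is correct and follows exactly the route the paper takes: part (i) is the same contradiction argument applied to Proposition~\ref{proposition:no-environment-conditional}, and part (ii) is the paper's ``take limits'' step, which you have simply made explicit via the sets $F_k$ and a countable union. No further comment is needed.
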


\subsection{Weighted environment-conditional coverage}
\label{sec:weighted-environment-coverage}

In spite of the negative results
Corollary~\ref{corollary:your-coverage-sucks} highlights, as in the case of
standard supervised problems of prediction $Y$ from $X$, we can extend the
split conformal procedure in Alg.~\ref{alg:multi-env-split} to guarantee
weighted forms of coverage, which approximate conditional coverage in certain
limits.
We
adopt a perspective that Gibbs et al.~\cite{GibbsChCa23} and Areces et
al.~\cite{ArecesChDuKu24} advocate, seeking weighted per-environment coverage. We
focus on the general confidence set setting in
Section~\ref{sec:general-confidence}, and for simplicity consider the case
that we have a fixed collection $\{C_\tau\}_{\tau \in \R}$ of confidence set
mappings independent of the samples $(E_i, \{X_j^i, Y_j^i\}_{j =
  1}^{n_i})_{i = 1}^{m + 1}$, which are
exchangeable~\eqref{eqn:exchangeable-arbitrary-sample}.

To set the stage, we revisit Gibbs et al.'s approach in the classic (single
environment) case, then extend it to multi-environment scenarios.
In the single environment case, $C : \mc{X} \toto \mc{Y}$
provides conditional coverage if and only if
\begin{equation*}
  \E[f(X) (\indic{Y \in C(X)} - (1 - \alpha))] \ge 0
\end{equation*}
for all nonnegative integrable $f$, and $\P(Y \in
C(X) \mid X) = 1 - \alpha$ a.s.\ iff
$\E[f(X)\indic{Y \in C(X)}]
= (1 - \alpha) \E[f(X)]$ for all integrable $f$.
By considering collections
$\mc{F}$ of functions $f : \mc{X} \to \R$, Gibbs et al.\ show how to make
$\E[f(X) (\indic{Y \in f(X)} - (1 - \alpha))]$ small relative to $f$.
Their
approach follows: assume that $\{(X_i, \scorerv_i)\}_{i = 1}^{n + 1}$ are
exchangeable random vectors, $\scorerv_i \in \R$, and let $\mc{G} \subset
\mc{X} \to \R$ be a vector space of functions. Let
\begin{equation*}
  \loss_\delta(t) \defeq (1 - \delta) \hinge{t} + \delta \hinge{-t}
\end{equation*}
be the quantile loss,
$\reg : \mc{G} \to
\R_+$ be a convex regularizer, and let $\reg'(g; f) = \lim_{t \downarrow 0}
\frac{1}{t}(\reg(g + t f) - \reg(g)$ be its directional derivative.
Then
\begin{equation}
  \label{eqn:gibbs-cherian-est}
  \what{g} \defeq \argmin_{g \in \mc{G}}
  \left\{\frac{1}{n + 1} \sum_{i = 1}^{n + 1} \loss_\delta(\scorerv_i
  - g(X_i))
  + r(g) \right\}
\end{equation}
satisfies the following:
\begin{corollary}[Gibbs et al.~\cite{GibbsChCa23}, Theorem 3]
  \label{corollary:gibbs-cherian-est}
  Let $(X_i, \scorerv_i)_{i = 1}^{n+1}$ be exchangeable.
  Then for all $f \in \mc{G}$, the empirical minimizer $\what{g}$ in
  Eq.~\eqref{eqn:gibbs-cherian-est} satisfies
  \begin{align*}
    & \E\left[f(X_{n + 1})\left( \indic{\what{g}(X_{n + 1}) > \scorerv_{n + 1}}
      - (1 - \delta) \right)\right] \\
    & \qquad \qquad \qquad \qquad \qquad
    = -\E[\reg'(\what{g}; f)] + \epsint,
  \end{align*}
  where $|\epsint| \le \E[|f(X_i)| \indic{\scorerv_i =
      \what{g}(X_i)}]$ and $\epsint$ is nonnegative if
  $f \ge 0$.
\end{corollary}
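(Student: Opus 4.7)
The plan is to combine convex first-order optimality of $\widehat{g}$ with exchangeability, attributing $\epsint$ to the non-smoothness of $\loss_\delta$ at zero and the possible non-differentiability of $r$.

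First, I would record the variational inequality. Writing $L(g) \defeq \frac{1}{n+1}\sum_{i=1}^{n+1}\loss_\delta(\scorerv_i - g(X_i))$, both $L$ and $r$ are convex, so $\widehat{g}$ minimizing $L+r$ over the vector space $\mc{G}$ is equivalent to $L'(\widehat{g}; f) + r'(\widehat{g}; f) \ge 0$ for every $f \in \mc{G}$. Because $\mc{G}$ is closed under negation, I apply this to both $+f$ and $-f$ so the two one-sided inequalities will together squeeze the target quantity.

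Next, I would compute $L'(\widehat{g}; f)$ in closed form. Since $\loss_\delta$ has slope $1-\delta$ on $\mathbb{R}_{>0}$, slope $-\delta$ on $\mathbb{R}_{<0}$, and subdifferential $[-\delta, 1-\delta]$ at $0$ with support function $v \mapsto (1-\delta)v_+ + \delta(-v)_+$, evaluating at each residual $u_i = \scorerv_i - \widehat{g}(X_i)$ in direction $-f(X_i)$ and summing yields
\begin{equation*}
L'(\widehat{g}; f) = \frac{1}{n+1}\sum_{i=1}^{n+1} f(X_i)\bigl(\indic{\widehat{g}(X_i) > \scorerv_i} - (1-\delta)\bigr) + \frac{1}{n+1}\sum_{i=1}^{n+1}\indic{\scorerv_i = \widehat{g}(X_i)}\,(f(X_i))_+.
\end{equation*}
The second sum, supported on indices where $\loss_\delta$ is non-smooth at the residual, is the origin of $\epsint$ after averaging.

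Then I would invoke exchangeability. Because $\widehat{g}$ is a permutation-symmetric function of $(X_i, \scorerv_i)_{i=1}^{n+1}$, each triple $(X_i, \scorerv_i, \widehat{g})$ has the same joint law, so the expectation of any per-index average above equals its value at $i = n+1$. Taking expectations in the optimality inequality for $+f$ and for $-f$ then sandwiches $\epsint \defeq \mathbb{E}[f(X_{n+1})(\indic{\widehat{g}(X_{n+1}) > \scorerv_{n+1}} - (1-\delta))] + \mathbb{E}[r'(\widehat{g}; f)]$ between expectations of the two tie corrections, each bounded in absolute value by $\mathbb{E}[|f(X_{n+1})|\indic{\scorerv_{n+1} = \widehat{g}(X_{n+1})}]$. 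The sign assertion under $f \ge 0$ follows because the tie correction $(\pm f)_+$ vanishes identically in one of the two directions, leaving a one-sided inequality in the claimed direction.

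The main obstacle is the non-differentiability of $\loss_\delta$ at zero: its subdifferential is an interval rather than a point, which yields asymmetric boundary contributions $(f)_+$ versus $(-f)_+$ in the $\pm f$ directions. When $r$ also fails to be Gateaux differentiable, the two-sided sandwich only closes modulo the convex slack $\mathbb{E}[r'(\widehat{g}; f) + r'(\widehat{g}; -f)] \ge 0$, so some care is needed to absorb this slack on the appropriate side and retain a clean bound of the stated form.
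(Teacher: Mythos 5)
The paper itself offers no proof of this corollary---it is quoted verbatim (modulo notation) from Gibbs et al.---so you are reconstructing the external argument, and your route is exactly the standard one: convex first-order optimality of the pinball-loss objective in the directions $\pm f$, an exact expression for the directional derivative including the tie term, and exchangeability of the triples $(X_i, \scorerv_i, \what{g})$ (which does require, as you implicitly use, that $\what{g}$ be a permutation-symmetric function of the data). Your closed form for $L'(\what{g};f)$ is correct, and your flagging of the slack $\E[\reg'(\what{g};f)+\reg'(\what{g};-f)]\ge 0$ is apt: the displayed identity merely \emph{defines} $\epsint$, and the two-sided bound $|\epsint|\le \E[|f(X_i)|\indic{\scorerv_i=\what{g}(X_i)}]$ does need that slack to vanish (e.g.\ Gateaux-differentiable $\reg$) or to be folded into a one-sided statement.

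The genuine gap is the sign claim, and it is worth being precise because the statement is false as literally written. Track which direction yields which bound. With the strict event $\{\what{g}(X_{n+1})>\scorerv_{n+1}\}$, the tie correction in the $+f$ direction is $(f(X_i))_+\indic{\scorerv_i=\what{g}(X_i)}$ (as you computed), so the $+f$ optimality inequality gives only $\epsint\ge -\E[(f(X_{n+1}))_+\indic{\scorerv_{n+1}=\what{g}(X_{n+1})}]$; the $-f$ direction---the one whose tie correction $(-f)_+$ vanishes when $f\ge 0$---produces an \emph{upper} bound on $\epsint$, not a lower one. Neither direction yields $\epsint\ge 0$. Concretely, take $\reg=0$, $\mc{G}$ the constant functions, $f\equiv 1$, and a.s.\ distinct scores: then $\what{g}$ is the $\ceil{(1-\delta)(n+1)}$-th order statistic of $\{\scorerv_i\}_{i=1}^{n+1}$ and $\epsint=\big(\ceil{(1-\delta)(n+1)}-1\big)/(n+1)-(1-\delta)$, which is generically negative (though within $1/(n+1)$ of zero, consistent with the absolute bound). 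Nonnegativity is a property of the \emph{non-strict} event $\{\scorerv_{n+1}\le\what{g}(X_{n+1})\}$, which is how Gibbs et al.\ state it: replacing the indicator shifts the tie correction in the $+f$ direction from $(f(X_i))_+$ to $(-f(X_i))_+$, which vanishes for $f\ge 0$ and delivers $\epsint\ge 0$ immediately. So your machinery is right, but your final sentence---``leaving a one-sided inequality in the claimed direction''---points the wrong way for the statement as given; the argument closes only after the strict inequality in the indicator is repaired.
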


An immediate extension applies to the  guarantees of
environment coverage we consider, where the observed samples
are exchangeable~\eqref{eqn:exchangeable-arbitrary-sample}.
In this case, we extend the
definition~\eqref{eqn:coverage-threshold} of the coverage threshold,
and set
\begin{equation}
  \label{eqn:score-by-environment}
  \scorerv_i \defeq \inf\bigg\{\tau \in \R \mid
  \frac{1}{n_i} \sum_{j = 1}^{n_i} \indic{Y_j^i \in C_\tau(X_j^i)}
  > 1 - \alpha \bigg\}
\end{equation}
to be the empirical coverage threshold on environment $i$.  Then the pairs
$(E_i, \scorerv_i)_{i = 1}^{m + 1}$ are exchangeable, and substituting $E_i$
for $X_i$ in the definition~\eqref{eqn:gibbs-cherian-est} allows a coverage
guarantee.
To make the computations concrete
let $\scoreval \in \R$ and $\reg^*(\eta) =
\sup_{g \in \mc{G}} \{ \sum_{i = 1}^{m + 1} \eta_i g(E_i) - (m+1)\reg(g)\}$
be a scaled convex conjugate-like dual to $\reg$.
Then problem~\eqref{eqn:gibbs-cherian-est} (with $E_i$ substituting for $X_i$)
admits the parameterized dual~\cite[Eq.~(4.2)]{GibbsChCa23}
\begin{equation*}
  \eta(\scoreval) \in
  \argmax_{\eta \in \R^{m + 1}}
  \left\{
  \begin{array}{l}
    \sum_{i = 1}^m \eta_i \scorerv_i + \eta_{m + 1} \scoreval
    - \reg^*(\eta) \\
    \textup{s.t.}~ -\delta \le \eta_i \le 1 - \delta,
    ~ i \in [m + 1]
  \end{array}
  \right\}.
\end{equation*}
Gibbs et al.~\cite[Thm.~4]{GibbsChCa23} show that $\scoreval \mapsto
\eta_{m+1}(\scoreval)$ is non-decreasing, so
binary search efficiently finds the value $\scoreval\opt_{m +
  1} \defeq \sup\{\scoreval \mid \eta_{m+1}(\scoreval) < 1 - \delta\}$.
Combining Corollary~\ref{corollary:gibbs-cherian-est}
with~\cite[Prop.~4]{GibbsChCa23}, we arrive at the following algorithm.

\begin{center}
  \algbox{
    \label{alg:empirical-cond-coverage}
    Weighted environment coverage
  }{
    \textbf{Input:} samples $\{X_j^i, Y_j^i\}_{j = 1}^{n_i}$,
    $i \in [m]$, levels $(\alpha, \delta)$,
    nested confidence sets $\{C_\tau\}_{\tau \in \R}
    \subset \{\mc{X} \toto \mc{Y}\}$

    \textbf{Set} $\scorerv_i$ as in Eq.~\eqref{eqn:score-by-environment}

    \textbf{Find} $\what{\tau} \defeq \scoreval\opt_{m+1}(\delta)
    = \sup\{\scoreval \mid \eta_{m+1}(\scoreval)
    < 1 - \delta\}$

    \textbf{Return} confidence set mapping
    \begin{equation*}
      \what{C}(x) \defeq C_{\what{\tau}}(x)
    \end{equation*}
  }
\end{center}

Then we have the following corollary, where in the corollary,
we let $\what{\P}_i$ denote the empirical probabilities associated
with environment $E_i$.
\begin{corollary}
  \label{corollary:environment-conditional}
  The confidence set $\what{C}$ that Alg.~\ref{alg:empirical-cond-coverage}
  returns satisfies
  \begin{align*}
    & \E\left[f(E_{m + 1})
      \indic{\what{\P}_{m+1}(Y \in \what{C}(X)) > 1 - \alpha}
      \right]
    \\
    & \qquad\qquad \ge (1 - \delta)\E[f(E_{m+1})]
    -\E[\reg'(\what{g}; f)]
  \end{align*}
  for all $f \in \mc{G}$ with $f \ge 0$.
\end{corollary}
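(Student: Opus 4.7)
The plan is to apply Corollary~\ref{corollary:gibbs-cherian-est} to the pairs $(E_i, \scorerv_i)_{i=1}^{m+1}$ defined via~\eqref{eqn:score-by-environment}. First, I would verify exchangeability: because $\scorerv_i$ is a symmetric function of the $i$th environment's sample $(X^i, Y^i)$ and the confidence family $\{C_\tau\}$ is fixed independently of the data, the exchangeability of~\eqref{eqn:exchangeable-arbitrary-sample} passes to exchangeability of $(E_i, \scorerv_i)$. Letting $\what{g}$ denote the minimizer in~\eqref{eqn:gibbs-cherian-est} built from these pairs (with $E_i$ replacing $X_i$), Corollary~\ref{corollary:gibbs-cherian-est} then yields
\begin{equation*}
\E\!\left[f(E_{m+1}) \indic{\what{g}(E_{m+1}) > \scorerv_{m+1}}\right]
= (1-\delta)\,\E[f(E_{m+1})] - \E[\reg'(\what{g}; f)] + \epsint,
\end{equation*}
where $\epsint \ge 0$ whenever $f \ge 0$.

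The next step is to identify the algorithmic threshold $\what\tau = \scoreval\opt_{m+1}$ with $\what{g}(E_{m+1})$. This is precisely the content of Gibbs et al.'s Proposition~4 invoked in Algorithm~\ref{alg:empirical-cond-coverage}: by the monotonicity of $\scoreval \mapsto \eta_{m+1}(\scoreval)$, the binary-search quantity $\sup\{\scoreval : \eta_{m+1}(\scoreval) < 1-\delta\}$ coincides with $\what{g}(E_{m+1})$ modulo tie-breaking at jumps of $\eta_{m+1}$. I would quote this identification directly rather than reprove it.

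The final step translates $\{\what{g}(E_{m+1}) > \scorerv_{m+1}\} = \{\what\tau > \scorerv_{m+1}\}$ into empirical coverage. By the nestedness and right-continuity of $\{C_\tau\}$, the function $\tau \mapsto \frac{1}{n_{m+1}}\sum_{j=1}^{n_{m+1}} \indic{Y_j^{m+1} \in C_\tau(X_j^{m+1})}$ is nondecreasing and right-continuous, so the infimum definition~\eqref{eqn:score-by-environment} of $\scorerv_{m+1}$ gives the implication $\what\tau > \scorerv_{m+1} \Rightarrow \what{\P}_{m+1}(Y \in C_{\what\tau}(X)) > 1-\alpha$. Since $\what{C} = C_{\what\tau}$, this forces
\begin{equation*}
\indic{\what{\P}_{m+1}(Y \in \what{C}(X)) > 1-\alpha}
\ge \indic{\what{g}(E_{m+1}) > \scorerv_{m+1}}.
\end{equation*}
Multiplying by $f(E_{m+1}) \ge 0$, taking expectations, and plugging in the identity from the first paragraph (dropping $\epsint \ge 0$) delivers the claim.

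The main obstacle I anticipate is cleanly discharging the identification $\what\tau = \what{g}(E_{m+1})$ and the boundary behavior when $\what\tau = \scorerv_{m+1}$ exactly. The one-sided indicator comparison above is all the conclusion requires, but it relies on the strict ``$>$'' in the dual characterization matching the strict ``$>$'' arising from right-continuity on the coverage side. A careful reading of Gibbs et al.'s Proposition~4 should confirm that their tie-breaking convention at jumps of $\eta_{m+1}$ makes the implication hold almost surely, and that the convex-analytic regularity ensuring $\reg'(\what{g}; f)$ is well-defined carries over unchanged from the single-environment setting of~\cite{GibbsChCa23}.
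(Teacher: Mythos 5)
Your outline matches the paper's proof in structure: both reduce the claim to Corollary~\ref{corollary:gibbs-cherian-est} via the one-sided comparison $\indic{\what{\P}_{m+1}(Y \in \what{C}(X)) > 1 - \alpha} \ge \indic{\what{g}(E_{m+1}) > \scorerv_{m+1}}$, multiply by $f \ge 0$, and drop the nonnegative $\epsint$. The one step you defer to ``a careful reading of Gibbs et al.'' is, however, not the statement you formulate. The threshold $\what{\tau} = \scoreval\opt_{m+1}(\delta) = \sup\{\scoreval \mid \eta_{m+1}(\scoreval) < 1 - \delta\}$ is \emph{not} equal to $\what{g}(E_{m+1})$, even up to tie-breaking: $\what{g}$ is the primal minimizer in~\eqref{eqn:gibbs-cherian-est} fit with the realized test score $\scorerv_{m+1}$ plugged in, whereas $\scoreval\opt_{m+1}(\delta)$ is a supremum over \emph{imputed} values of that score, so the two are different random quantities in general. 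What is true---and what the paper actually uses---is an inclusion of events coming from the KKT/complementary-slackness conditions linking the primal to its dual:
\begin{equation*}
  \{\scorerv_{m+1} < \what{g}(E_{m+1})\}
  \subset \{\eta_{m+1}(\scorerv_{m+1}) = -\delta\}
  \subset \{\eta_{m+1}(\scorerv_{m+1}) < 1 - \delta\}
  \subset \{\scorerv_{m+1} \le \scoreval\opt_{m+1}(\delta)\},
\end{equation*}
the last step by monotonicity of $\scoreval \mapsto \eta_{m+1}(\scoreval)$. Note also that this chain delivers only the non-strict inequality $\scorerv_{m+1} \le \what{\tau}$, not the strict one your third step assumes; the paper upgrades this to strict empirical coverage $> 1 - \alpha$ using the right-continuity of $\tau \mapsto C_\tau$ together with the fact that the empirical coverage fraction takes finitely many values. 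With the claimed identity of thresholds replaced by this chain of inclusions and the boundary case handled by right-continuity, your argument is the paper's.
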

\begin{proof}
  By the KKT conditions associated with the primal
  problem~\eqref{eqn:gibbs-cherian-est} and its
  dual~\cite[cf.][Prop.~4]{GibbsChCa23},
  we have the inclusions
  \begin{align*}
    \lefteqn{\{\scorerv_{m + 1} < \what{g}(E_{m+1})\}
      \subset \{\eta_{m + 1}(\scorerv_{m + 1}) = -\delta\}} \\
    & \subset  \{\eta_{m + 1}(\scorerv_{m + 1}) < 1-\delta\}
    \subset 
    \{\scorerv_{m + 1} \le \what{g}(E_{m+1})\}.
  \end{align*}
  Because
  $\scoreval \mapsto \eta_{m+1}(\scoreval)$ is non-decreasing,
  the event that $\{\eta_{m+1}(\scorerv_{m+1}) < 1 - \delta\}$ implies
  $\scorerv_{m +1} \le \scoreval\opt_{m+1}(\delta)$,
  that is,
  $\what{\tau} = \scoreval\opt_{m + 1}(\delta)$ satisfies
  $\what{\P}_{m+1}(Y \in C_{\what{\tau}}(X)) > 1 - \alpha$,
  where we use the right continuity of $C_\tau$.
  So if $f \ge 0$, we have
  \begin{align*}
    \lefteqn{\E\left[f(E_{m+1})
        \indic{\what{\P}_{m+1}(Y \in C_{\what{\tau}}(X)) > 1 - \alpha}\right]}
    \\
    & \qquad \ge \E[f(E_{m+1}) \indic{\scorerv_{m + 1} \le \scoreval\opt_{m+1}(\delta)}] \\
    & \qquad \ge \E[f(E_{m+1}) \indic{\eta_{m+1}(\scorerv_{m+1}) < 1 - \delta}] \\
    & \qquad \ge \E\left[f(E_{m+1}) \indic{\what{g}(E_{m+1}) > \scorerv_{m + 1}}
      \right].
  \end{align*}
  Corollary~\ref{corollary:gibbs-cherian-est} then
  implies the desired result.
\end{proof}

If we entertain randomized confidence sets, then we can
extend Corollary~\ref{corollary:environment-conditional} to apply
to arbitrary $f \in \mc{G}$. Indeed, without proof we claim that
if we replace $\what{\tau}$ with the randomized mapping
\begin{equation*}
  \what{\tau}_U \defeq \sup\left\{\scoreval \mid \eta_{m+1}(\scoreval)
  \le U - \delta \right\}
\end{equation*}
where $U \sim \uniform[0, 1]$ independently of the data, then
\begin{align*}
  \lefteqn{\E\left[f(E_{m+1}) \indic{\what{\P}_{m+1}(Y \in C_{\what{\tau}_U}(X))
        > 1 - \alpha}\right]} \\
  & \qquad\qquad\qquad = (1 - \delta) \E[f(E_{m+1})]
  - \E[\reg'(\what{g}; f)]
\end{align*}
for any $f \in \mc{G}$. (See~\cite[Prop.~4.2]{GibbsChCa23}.)  In particular,
in simple cases where $\reg = 0$ and $\mc{G} = \{f(e) = \theta^\top
\phi(e)\}_{\theta \in \R^k}$ for some feature mapping $\phi$,
the randomized confidence set achieves
exact $\mc{G}$-weighted $(\alpha, \delta)$-coverage.




\section{Discussion and conclusions}

The challenge of maintaining predictive validity when distributions change
remains one of the core challenges in statistics and machine learning.  At
the most basic level, any claim of a study's external validity is a claim
that statistical conclusions remain valid in environments distinct---however
slightly---from the study's population~\cite{ImbensRu15}.  A growing
literature in machine learning also highlights the challenges of prediction
across environments. In some cases, those environments are obvious, arising
from distinct experimental conditions or measurement
devices~\cite{KohSaMaXiZhBaHuYaPhBeLeKuPiLeFiLi21} or from changing
populations, such as identifying pathologies from lung scans across
hospitals~\cite{ZechBaLiCoTiOe18}. In others, new environments arise even
when constructing new evaluation datasets that replicate original data
collection as exactly as possible~\cite{RechtRoScSh19, TaoriDaShCaReSc20}.

The approaches this paper outlines to predictive inference across
environments and in other hierarchical data collection scenarios should
therefore see wide application. One might argue that, given that
applications of learning algorithms always involve some distribution shift,
however mild, we should always employ some type of corrective measure to
attempt to maintain validity. Section~\ref{sec:resizing_algos} highlights a
key insight, which we believe is one of the main takeaways of this work:
measuring variance and the scale of uncertainty across environments is
essential for practicable confidence sets and predictions.

Most types of
predictive inference repose on some type of exchangeability~\cite{VovkGaSh05,
  TibshiraniBaCaRa19, ChernozhukovWuZh18, BarberCaRaTi21a, GuptaRo21}---excepting a
recent line of work moves toward coverage guarantees that hold
asymptotically irrespective of the
data~\cite{GibbsCa21,AngelopoulosCaTi23}---as does this work and others on
maintaining validity across populations~\cite{LeeBaWi23, DunnWaRa23}.
Our particular exchangability assumptions should also extend straightforwardly
to risk control across environments~\cite{AngelopoulosBa23}, though we
leave this for future work.
The
optimality and adaptivity of predictive inference procedures, such as the
techniques we develop in Section~\ref{sec:consistency-results}, also rely on
some type of independence and exchangeability.  Future research to identify
more nuanced ways in which data remain exchangeable could thus have
substantial impact, allowing us to enhance the versatility and utility of
conformal prediction, the jackknife, and other predictive inferential
approaches.

\begin{appendix}

\section{Coverage proofs}
\label{sec:technical-proofs}

\subsection{Proof of Theorem~\ref{theorem:main-coverage}}
\label{sec:proof-main-coverage}

We take as inspiration the proof of Barber et al.~\cite[Theorem 3]{BarberCaRaTi21}.  For
the sake of the proof only, to demonstrate the appropriate exchangeability,
we assume we have access to both the features and responses of the test
environment $\{(X^{m+1}_j,Y^{m+1}_j)\}_{j=1}^{n_{m+1}}$.  Then we let
$\wt{f}_{-(i,k)}$ be the predictive function fit on all
environments (training and test) except that environments $i$ and $k$
are removed, i.e.,
\begin{equation*}
  \wt{f}_{-(i,k)}
  = \alg\left(\{X^l, Y^l\}_{l \neq i, l \neq k}\right).
\end{equation*}
With this construction, $\wt{f}_{-(i,k)}=\wt{f}_{-(k,i)}$ for $i \neq k$, and
\begin{equation*}
  \wt{f}_{-(i,m+1)} = \what{f}_{-i}
  ~~ \mbox{for}~ i \in [m],
\end{equation*}
the key identity allowing us to exploit block exchangeability.
Define the matrix
$R \in
\R^{(m+1)\times(m+1)}$ of residual quantiles
\begin{equation}
  \label{eqn:quantile-residual-matrix}
  R_{ik}
  = \begin{cases} \infty & \mbox{if}~ i=k  \\ 
    \quantplus[n_i,\alpha]
    \left(\left\{
    |Y^{i}_j -\tilde{f}_{-(i,k)}(X^i_j)|\right\}_{j=1}^{n_i}\right)
    & \mbox{if~} i \neq k, \end{cases} 
\end{equation}
so $R_{i, m + 1} = S^i_{1 - \alpha}$ in Algorithm~\ref{alg:multi-env}.

Now~\citep[cf.][Proof of Theorem 3]{BarberCaRaTi21},
define the comparison matrix $A \in \{0,1\}^{(m+1)\times(m+1)}$ with entries
\begin{equation*}
  A_{ik} = \indic{\min_{k'} R_{ik'} > R_{ki}},
\end{equation*}
so that the smallest residual $(1 - \alpha)$ quantile when omitting
environment $i$ is larger than the $(1 - \alpha)$ quantile of residuals in
environment $k$ when not including $i$ or $k$.
Define the set
\begin{equation*}
  \mathcal{S}(A) =
  \left\{i \in \{1,\ldots,m+1\}: A_{i, \bullet}\geq (1-\delta)(m+1)\right\},
\end{equation*}
of strange environments,
where $A_{i, \bullet}= \sum_{k=1}^{m+1}A_{ik}$ is the $i$th row sum of $A$,
to be those where environment $i$ typically has ``too small'' residuals.

We identify three steps that together yield the proof.
\begin{description}
\item[Step 1.] Observe that $|\mathcal{S}(A)| \leq \delta (m+1)$.
\item[Step 2.] Using that the environments are exchangeable,
  the probability that the test environment $m+1$ is strange
  (i.e., $m+1 \in \mathcal{S}(A)$) satisfies
  \begin{equation}
    \P(m + 1 \in \mc{S}(A)) \le \delta.
    \label{eqn:strange-probability}
  \end{equation}
\item[Step 3.] Prove the desired coverage guarantee of the
  theorem by showing that if coverage in environment
  $m + 1$ fails, then it is strange, i.e.,
  $m + 1 \in \mc{S}(A)$.
\end{description}

The result in Step 1 immediately follows Barber et al.~\cite[Thm.~3, Step
  1]{BarberCaRaTi21}.  Step~2 similarly follows immediately~\cite[Thm.~3,
  Step 2]{BarberCaRaTi21}.  It remains to prove step 3.

\paragraph{Proof of Step 3:} Suppose that coverage fails,
that is,
\begin{equation}
  \label{eqn:coverage-failure}
  \begin{split}
    & \sum_{j=1}^{n_{m+1}}
    \indic{Y^{m+1}_j \in \Cjack(X^{m+1}_{j})} \\
    &\quad < \ceil{(1-\alpha)(n_{m+1}+1)}.
  \end{split}
\end{equation}
We will show that on the event~\eqref{eqn:coverage-failure},
environment $m + 1$ is strange, so that~\eqref{eqn:strange-probability}
the
failure has probability at most $\delta$.

Before we complete the proof, we provide two technical lemmas.
These lemmas  allow us to transition between
coverage guarantees ``columnwise,'' in the sense of within an
environment, and ``rowwise'' in the sense of across environments.

\begin{lemma}
  \label{lemma:quantile-comparison}
  Let $B \in \R^{n \times m}$ and $c \in \R$. Then
  \begin{equation*}
    \quantplus[n,\alpha]
    \left(\indicBig{c < \min_{k} B_{jk}}_{j = 1}^n\right)
    = 1
  \end{equation*}
  implies
    \begin{equation*}
    c < \min_{k} \quantplus[n,\alpha] \left(\{B_{jk}\}_{j = 1}^n\right).
  \end{equation*}
  That is, if fewer than $\ceil{(1 - \alpha)(n + 1)}$
  indicators $\indics{c < \min_{k} B_{jk}}$ are 0,
  then $c$ is less than each ($k = 1, 2, \ldots, m$)
  of the $(1 - \alpha)$ quantiles
  $\quantplus[n,\alpha] (\{B_{jk}\}_{j = 1}^n)$.
\end{lemma}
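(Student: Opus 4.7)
The plan is to translate both the hypothesis and the conclusion into explicit counting statements about entries that exceed $c$, after which the lemma becomes a pigeonhole observation. Recall that by Definition~\ref{def:quantiles}, $\quantplus[n,\alpha]$ returns the $\ceil{(1-\alpha)(n+1)}$-th order statistic. So the hypothesis that $\quantplus[n,\alpha](\indic{c<\min_k B_{jk}}_{j=1}^n)=1$ says: when the $n$ indicator values are sorted in increasing order, the one at position $\ceil{(1-\alpha)(n+1)}$ equals $1$. Since indicators are $\{0,1\}$-valued, every later position is also $1$, so there are at least
\begin{equation*}
  n-\ceil{(1-\alpha)(n+1)}+1
\end{equation*}
indices $j\in[n]$ with $c<\min_k B_{jk}$.

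Let $J\defeq\{j\in[n]\mid c<\min_k B_{jk}\}$, so $|J|\ge n-\ceil{(1-\alpha)(n+1)}+1$ by the step above. Fix any column $k\in[m]$. By the definition of $J$, every $j\in J$ satisfies $c<B_{jk}$, so at least $|J|$ of the entries of the column $\{B_{jk}\}_{j=1}^n$ lie strictly above $c$, and therefore at most $n-|J|\le\ceil{(1-\alpha)(n+1)}-1$ of them lie at or below $c$. Consequently, when the column $\{B_{jk}\}_{j=1}^n$ is sorted in increasing order, the value at position $\ceil{(1-\alpha)(n+1)}$ must exceed $c$; that is, $c<\quantplus[n,\alpha](\{B_{jk}\}_{j=1}^n)$.

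Since $k$ was arbitrary, taking the minimum over $k\in[m]$ yields $c<\min_k\quantplus[n,\alpha](\{B_{jk}\}_{j=1}^n)$, which is the desired conclusion. I do not anticipate a real obstacle: the entire argument is a careful bookkeeping between the position of an order statistic and the number of entries lying above a threshold. The only subtlety is preserving strictness of the inequality when crossing between the indicator-quantile formulation and the column-quantile formulation, and this is automatic because the indicator column takes values in $\{0,1\}$ and the hypothesis fixes the relevant order statistic to equal exactly $1$ rather than merely be nonnegative.
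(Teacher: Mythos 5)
Your proof is correct and follows essentially the same route as the paper's: identify the set of rows $J$ on which $c < \min_k B_{jk}$, count them via the order-statistic definition of $\quantplus[n,\alpha]$, and conclude column by column that the $\ceil{(1-\alpha)(n+1)}$-th order statistic exceeds $c$. Your count of $|J| \ge n - \ceil{(1-\alpha)(n+1)} + 1$ is in fact the sharp one needed for the final step (the paper's write-up states the slightly weaker $n - \ceil{(1-\alpha)(n+1)}$), so your bookkeeping is, if anything, more careful.
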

\begin{proof}
  We have $\what{q}^+_{n,\alpha} (\indics{c < \min_{k} B_{jk}}_{j = 1}^n) =
  1$ if and only if at least $n - \ceil{(1 -\alpha)(n + 1)}$ values
  in the collection
  $\{\indics{c < \min_k B_{jk}}\}_{j = 1}^n$
  are 1. In this case, there are at least $n -
  \ceil{(1 - \alpha)(n + 1)}$ rows $J_{\textup{dom}} \subset [n]$ in $B$
  satisfying $c < \min_k B_{jk}$ for $j \in J_{\textup{dom}}$.  Then for
  each column $k$, the indices $j \in J_{\textup{dom}}$ satisfy $B_{jk} >
  c$, and as $|J_{\textup{dom}}| \ge n - \ceil{(1 - \alpha)(n + 1)}$, the
  $(1 - \alpha)$ quantile satisfies
  \begin{equation*}
    \quantplus[n,\alpha](\{B_{jk}\}_{j = 1}^n)
    \ge \min_{j \in J_{\textup{dom}}} B_{jk} > c,
  \end{equation*}
  giving the lemma.
\end{proof}

\begin{lemma}
  \label{lemma:min-max-comparison}
  Let $B \in \{0, 1\}^{n \times m}$.
  Assume that there exist rows
  $J \subset [n]$ with $|J| \ge \floor{\alpha(n + 1)}$
  and columns
  $K \subset [m]$ such that
  $|K| \ge \ceil{(1 - \delta) m}$
  such that $b_{jk} = 1$ for $j \in J$ and $k \in K$.
  Then
  \begin{equation*}
    \sum_{k = 1}^m
    \what{q}^+_{n,\alpha}
    \left(\left\{B_{jk}\right\}_{j = 1}^n\right)
    \ge (1 - \delta) m.
  \end{equation*}
\end{lemma}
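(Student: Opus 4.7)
The plan is to interpret the quantile $\what{q}^+_{n,\alpha}(\{B_{jk}\}_{j=1}^n)$ for each fixed column $k$ as an indicator that the column contains ``enough'' ones, then show that every column in $K$ meets this threshold.

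First, I would unpack the quantile. By Definition~\ref{def:quantiles}, $\what{q}^+_{n,\alpha}(v) = v_{(r)}$ for $r = \ceil{(1-\alpha)(n+1)}$. Using the identity $\ceil{x} = -\floor{-x}$ applied to $(1-\alpha)(n+1) = (n+1) - \alpha(n+1)$, I get
\begin{equation*}
  r = (n+1) - \floor{\alpha(n+1)},
\end{equation*}
so $n - r + 1 = \floor{\alpha(n+1)}$. For a binary vector $v \in \{0,1\}^n$, then, $v_{(r)} = 1$ if and only if at least $\floor{\alpha(n+1)}$ entries of $v$ equal $1$.

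Next, I would apply this column by column. Fix $k \in K$. By hypothesis $B_{jk} = 1$ for every $j \in J$, and $|J| \ge \floor{\alpha(n+1)}$, so column $k$ contains at least $\floor{\alpha(n+1)}$ ones. By the previous step, $\what{q}^+_{n,\alpha}(\{B_{jk}\}_{j=1}^n) = 1$ for every $k \in K$. Since each remaining summand is nonnegative (being in $\{0,1\}$),
\begin{equation*}
  \sum_{k=1}^m \what{q}^+_{n,\alpha}\!\left(\{B_{jk}\}_{j=1}^n\right)
  \ge \sum_{k \in K} 1 = |K| \ge \ceil{(1-\delta)m} \ge (1-\delta)m,
\end{equation*}
which is exactly the claimed bound.

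There is no real obstacle here, just bookkeeping: the only nontrivial maneuver is the identity $\ceil{(1-\alpha)(n+1)} = (n+1) - \floor{\alpha(n+1)}$, which converts an ``upper quantile'' count into a ``lower count'' threshold and lines the hypothesis $|J| \ge \floor{\alpha(n+1)}$ up exactly with what is needed for each column's quantile to be $1$. The rest is arithmetic, and the bound on $|K|$ then immediately passes through the sum.
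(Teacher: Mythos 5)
Your proof is correct and follows essentially the same route as the paper's: show that every column in $K$ contains at least $\floor{\alpha(n+1)}$ ones so its $\quantplus[n,\alpha]$ value equals $1$, then lower-bound the sum by $|K| \ge (1-\delta)m$. The only difference is that you spell out the order-statistic bookkeeping via the identity $\ceil{(1-\alpha)(n+1)} = (n+1) - \floor{\alpha(n+1)}$, which the paper leaves implicit.
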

\begin{proof}
  Fix any column $k \in K$.
  Then at least $\floor{\alpha(n + 1)}$ elements
  of $\{B_{jk}\}_{j = 1}^n$ are 1 (those in $J$),
  so that
  $$\what{q}_{n,\alpha}^+(\{B_{jk}\}_{j = 1}^n)
  = 1$$ for these $k \in K$. Thus
  \begin{align*}
    \sum_{k = 1}^m
    \what{q}^+_{n,\alpha}
    \left(\left\{B_{jk}\right\}_{j = 1}^n\right)
    & \ge \sum_{k \in K}
    \what{q}^+_{n,\alpha}
    \left(\left\{B_{jk}\right\}_{j = 1}^n\right) \\
    & = |K| \ge (1 - \delta) m
  \end{align*}
  as desired.
\end{proof}

We return to the main thread.
On the event~\eqref{eqn:coverage-failure}, there exists a set
$J_\bad$ of indices where coverage
fails and $|J_\bad| \ge \floor{\alpha (n_{m + 1} + 1)}$, that is, such that
\begin{equation*}
  Y_j^{m+1}>\max_{i \in [m]}
  \what{f}_{-i}(X^{m+1}_{j})+ \quantplus[m,\delta](\{S^k_{1-\alpha}\} ),
\end{equation*} or 
\begin{equation*}
    Y_{j}^{m+1} < \min_{i \in [m]}\what{f}_{-i}(X^{m+1}_{j})
    - \quantplus[m,\delta](\{S^k_{1-\alpha}\})
\end{equation*}
for each $j \in J_\bad$.
We can be more precise about
the indices of $\{S^k_{1-\alpha}\}_{k = 1}^m$: by the definitions of the
quantiles $\quantplus[m,\delta]$, there exists an index set $K_\bad \subset
[m]$ such that $|K_\bad| \ge \ceil{(1 - \delta)(m + 1)}$ and for each $k \in
K_\bad$ and $j \in J_\bad$,
\begin{equation*}
  Y_j^{m+1} > \max_{i \in [m]}
  \what{f}_{-i}(X^{m+1}_{j}) + S^k_{1 - \alpha}
\end{equation*} or \begin{equation*}
  Y_{j}^{m+1} < \min_{i \in [m]}\what{f}_{-i}(X^{m+1}_{j})
  - S^k_{1 - \alpha}
\end{equation*}
by taking $K_\bad$ to be the order statistics of $S^k_{1-\alpha}$.

We now show that on event~\eqref{eqn:coverage-failure},
the environment $m + 1$ is strange, that is, $A_{m+1,\bullet}$ is large.
We apply Lemma~\ref{lemma:quantile-comparison},
making the substitutions
$c = R_{k, m + 1}$ and letting $B \in \R^{n_{m + 1} \times m}$ have
entries $B_{jk} = |Y_j^{m + 1} - \what{f}_{-k}(X_j^{m+1})|$:
\begin{align*}
  \lefteqn{\sum_{k=1}^{m+1} A_{m+1,k}
    = \sum_{k=1}^{m+1} \indic{R_{k,m+1}<\min_{k'}R_{m+1,k'}}} \\ 
  & = \sum_{k = 1}^{m + 1} \indic{R_{k,m+1}
    < \min_{k'} \quantplus[n_{m+1}, \alpha]\left(\{B_{jk'}\}_{j = 1}^{n_{m+1}}
    \right)} \\
  & \ge \sum_{k = 1}^{m + 1}
  \quantplus[n_{m+1}, \alpha] \left(\Big[\indicBig{R_{k, m+1}
    < \min_{k'} B_{jk'}}\Big]_{j = 1}^{n_{m+1}}\right).
%
\end{align*}

Finally, recall that by construction of the residual
quantile matrix~\eqref{eqn:quantile-residual-matrix} we have
$R_{k,m+1} = S^k_{1-\alpha}$. Then
$R_{k,m+1} < \min_{k'} |Y_j^{m + 1} - \what{f}_{-k'}(X_j^{m + 1})|$
if and only if
$S^k_{1 - \alpha} < \min_{k'} |Y_j^{m + 1} - \what{f}_{-k'}(X_j^{m + 1})|$,
which in turn occurs if and only if
\begin{equation*}
  Y_j^{m + 1} <  \min_{k'} \what{f}_{-k'}(X_j^{m + 1})
  - S^k_{1 - \alpha},
  \end{equation*}
or
\begin{equation*}
  Y_j^{m + 1} > \max_{k'} \what{f}_{-k'}(X_j^{m + 1})
  + S^k_{1 - \alpha}.
\end{equation*}
Revisiting the sum above, then, for
intervals
\begin{align*}
  C_{jk} \defeq & \Big[\min_{k'} \what{f}_{-k'}(X_j^{m+1}) - S_{1-\alpha}^k, \\
    & \qquad ~ 
    \max_{k'} \what{f}_{-k'}(X_j^{m + 1}) + S_{1-\alpha}^k \Big],
\end{align*}
we have
\begin{align*}
  A_{m+1,\bullet}
  & \ge
  \sum_{k=1}^{m+1}
  \quantplus[n_{m+1},\alpha]
  \left(\left[\indic{Y_j^{m + 1} \not \in C_{jk}}\right]_{j = 1}^{n_{m+1}}\right) \\
  &\geq (1-\delta)(m+1),
\end{align*}
where in the last line we used Lemma~\ref{lemma:min-max-comparison} with the
choice
$B_{jk} = \indics{Y_j^{m + 1} \not \in C_{jk}}$,
recognizing that $B_{jk} = 1$ for all indices $j \in J_\bad$ and
$k \in K_\bad$.
In particular, we have shown
that $m + 1 \in \mc{S}(A)$.
As $\P(m + 1 \in \mc{S}(A)) \le \delta$ by
step 2 (recall Eq.~\eqref{eqn:strange-probability}), we
have the theorem.

\subsection{Proof of Theorems~\ref{theorem:main-coverage-split}
  and \ref{theorem:nested-coverage-split}}
\label{sec:proof-main-coverage-split}

\providecommand{\scoreval}{s}
\providecommand{\scorerv}{S}
\providecommand{\bZ}{\boldsymbol{Z}}

Theorems~\ref{theorem:main-coverage-split}
and~\ref{theorem:nested-coverage-split} will follow
as specializations of a few technical lemmas, which
follow here.
In the lemma,
recall that for a collection
$\{\scoreval_i\}_{i = 1}^m$,
we let $\quantplus[m,\alpha](\{\scoreval_i\}_{i = 1}^m)$ be the
$\ceil{(m + 1) (1 - \alpha)}$th smallest value of the $\scoreval_i$.
Then
the following lemma is standard~\cite{VovkGaSh05, LeiGSRiTiWa18},
though we use a particular form.
\begin{lemma}[Romano et al.~\cite{RomanoPaCa19}, Lemma 2]
  \label{lemma:standard-exchangability}
  Let $\scorerv_i$, $i = 1, \ldots, m, m + 1$, be exchangeable. Then
  \begin{equation*}
    \P\left(\scorerv_{m + 1} \le \quantplus[m,\alpha](\{\scorerv_i\}_{i = 1}^m)
    \right)
    \ge 1 - \alpha,
  \end{equation*}
  and if $\scorerv_i$ are almost surely distinct, then
  \begin{equation*}
    \P\left(\scorerv_{m + 1} \le \quantplus[m,\alpha](\{\scorerv_i\}_{i = 1}^m)
    \right) \le 1 - \alpha + \frac{1}{m}.
  \end{equation*}
\end{lemma}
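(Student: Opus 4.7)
}
The standard route is to translate the event $\{\scorerv_{m+1} \le \quantplus[m,\alpha](\{\scorerv_i\}_{i=1}^m)\}$ into a statement about the rank of $\scorerv_{m+1}$ within the full augmented sample $\{\scorerv_1,\ldots,\scorerv_{m+1}\}$, and then invoke exchangeability to control the distribution of that rank. Set $k = \ceil{(1-\alpha)(m+1)}$, so that by Definition~\ref{def:quantiles}, $\quantplus[m,\alpha](\{\scorerv_i\}_{i=1}^m)$ is the $k$-th smallest value among $\scorerv_1,\ldots,\scorerv_m$ (with the convention that this quantile is $+\infty$ if $k > m$, in which case both inequalities are trivial). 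Define the rank $R \defeq \sum_{i=1}^{m+1} \indic{\scorerv_i \le \scorerv_{m+1}}$, which counts the position of $\scorerv_{m+1}$ in the sorted augmented list (breaking ties in favor of counting).

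For the lower bound, I would first observe that $\scorerv_{m+1} \le \quantplus[m,\alpha](\{\scorerv_i\}_{i=1}^m)$ whenever at most $k-1$ of the values $\scorerv_1,\ldots,\scorerv_m$ are strictly less than $\scorerv_{m+1}$; equivalently, the event $\{R \le k\}$ is contained in our target event. Exchangeability of $(\scorerv_1,\ldots,\scorerv_{m+1})$ implies that for each fixed realization of the unordered multiset, the identity of which sample equals the $j$-th order statistic is uniform over $[m+1]$, so $\P(R \le k) \ge k/(m+1)$ (with equality in the distinct case). Since $k/(m+1) = \ceil{(1-\alpha)(m+1)}/(m+1) \ge 1-\alpha$, the coverage lower bound follows.

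For the upper bound, when the $\scorerv_i$ are a.s.\ distinct the rank $R$ is uniformly distributed on $\{1,\ldots,m+1\}$ by exchangeability, and the containment above sharpens to an equality of events: $\{\scorerv_{m+1} \le \quantplus[m,\alpha](\{\scorerv_i\}_{i=1}^m)\} = \{R \le k\}$. Consequently the probability equals exactly $k/(m+1) \le ((1-\alpha)(m+1)+1)/(m+1) = 1-\alpha + 1/(m+1) \le 1-\alpha + 1/m$, which is the claimed upper bound.

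The only subtlety is correctly handling ties in the general (non-distinct) case so as to preserve the inclusion $\{R \le k\} \subset \{\scorerv_{m+1} \le \quantplus[m,\alpha]\}$; once the tie-breaking convention for $R$ is chosen consistently with the $\le$ in the quantile inequality, the exchangeability symmetrization argument for $\P(R = j)$ goes through unchanged, since the joint distribution of $(\scorerv_1,\ldots,\scorerv_{m+1})$ is invariant under any permutation. This is the step I would expect to write most carefully, but it is not a genuine obstacle — it is the standard conformal rank argument.
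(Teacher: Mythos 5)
The paper does not actually prove this lemma; it cites it as Lemma~2 of Romano et al.\ and uses it as a black box, so there is no internal proof to compare against, and your write-up must stand on its own. The rank-based strategy you outline is indeed the standard one, and the distinct-case upper bound is correct. The problem is in the lower bound, at precisely the tie-handling step you flagged: with your convention $R \defeq \sum_{i=1}^{m+1}\indic{\scorerv_i \le \scorerv_{m+1}}$ ("breaking ties in favor of counting"), the inclusion $\{R \le k\} \subset \{\scorerv_{m+1} \le \quantplus[m,\alpha](\{\scorerv_i\}_{i=1}^m)\}$ is fine, but the probabilistic claim $\P(R \le k) \ge k/(m+1)$ is false for general exchangeable sequences. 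Take $\scorerv_1 = \cdots = \scorerv_{m+1}$ almost surely: then $R = m+1$ identically, so $\P(R \le k) = 0$ for every $k \le m$, even though the target coverage event holds with probability one. Conditioning on the unordered multiset, the number of indices $j$ with $\sum_i \indic{\scorerv_i \le \scorerv_j} \le k$ can be strictly smaller than $k$ when values coincide, so the symmetrization argument does not go through ``unchanged'' with ties counted via $\le$.

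The repair is to flip the convention rather than keep it. Set $V \defeq \sum_{i=1}^m \indic{\scorerv_i < \scorerv_{m+1}}$ with a strict inequality. Then $\{\scorerv_{m+1} \le \quantplus[m,\alpha](\{\scorerv_i\}_{i=1}^m)\}$ is \emph{exactly} the event $\{V \le k-1\}$ (no one-sided containment is needed), and for every fixed multiset at least $k$ of the $m+1$ indices $j$ satisfy $\sum_{i \neq j}\indic{\scorerv_i < \scorerv_j} \le k-1$, namely all $j$ whose value is at most the $k$th order statistic of the augmented sample; exchangeability then gives $\P(V \le k-1) \ge k/(m+1) \ge 1-\alpha$. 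So the overall idea is right and the fix is one line, but the specific inequality your argument rests on is wrong as stated, and the tie-breaking direction you committed to is exactly the one that breaks it.
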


Now consider our setting of exchangeable samples from different
environments.
Let $\mc{Z}$ be an arbitrary measurable space, and let $\mc{Z}^* \defeq
\{\mc{Z}, \mc{Z}^2, \mc{Z}^3, \ldots\}$ be the collection of its Cartesian
products.
Assume we observe exchangeable $\bZ^i \in \mc{Z}^*$, $i = 1, \ldots, m + 1$,
and let $D_1, D_2$ be sets of indices chosen uniformly from $[m]$ satisfying
$\card(D_1) = m \gamma$ and $\card(D_2) = (1 - \gamma) m$.
Let $f : \mc{Z}^* \times (\mc{Z}^*)^{m\gamma} \to \R$ be measurable, where
for an index set $D \subset [m]$ we use the shorthand
\begin{equation*}
  f(\bZ \mid D)
  = f(\bZ; \{\bZ^i\}_{i \in D}).
\end{equation*}

Assign an arbitrary ordering $\pi_{D_2} : [(1 - \gamma) m] \to [m]$
to the elements of $D_2$, so that $\pi_{D_2}(i)$ is the $i$th index
of $D_2$ (this is just for notational convenience).
Define the score random variables
\begin{equation}
  \label{eqn:arbitrary-score-dist}
  \scorerv^i \defeq f(\bZ^{\pi_{D_2}(i)} \mid D_1),
  ~~ i = 1, \ldots, (1 - \gamma) m,
\end{equation}
and $\scorerv^{m + 1} = f(\bZ^{m + 1} \mid D_1)$.
Then by construction and the uniformity of the index sets $D_1, D_2$,
the random variables $\scorerv^i$ are exchangeable.
The following lemma thus follows as a consequence of
Lemma~\ref{lemma:standard-exchangability}:
\begin{lemma}
  \label{lemma:burrito}
  Let the scores $\scorerv^i$ have
  definition~\eqref{eqn:arbitrary-score-dist}.
  Then for any $\delta \in (0, 1)$,
  \begin{equation*}
    \P\left(\scorerv^{m + 1} > \quantplus[\delta]\left(
    \{\scorerv_i\}_{i = 1}^{(1 - \gamma) m}\right)\right) \le \delta.
  \end{equation*}
  If additionally $\{\scorerv^i\}$ are distinct with probability 1, then
  \begin{equation*}
    \P\left(\scorerv^{m + 1} > \quantplus[\delta]\left(
    \{\scorerv^i\}_{i = 1}^{
      (1 - \gamma) m}\right)\right)
    \ge \delta - \frac{1}{(1 - \gamma) m + 1}.
  \end{equation*}
\end{lemma}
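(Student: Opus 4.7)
The goal is to reduce Lemma~\ref{lemma:burrito} to a direct application of Lemma~\ref{lemma:standard-exchangability}. The key reduction is to show that the collection $\{\scorerv^1, \ldots, \scorerv^{(1-\gamma)m}, \scorerv^{m+1}\}$ is exchangeable; once that is established, the two inequalities come straight from the exchangeability quantile bound, taking $n = (1-\gamma)m$ and $\alpha = \delta$.

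\textbf{Step 1 (conditional exchangeability of the underlying samples).} First I would condition on the random split $D_1$. Since $D_1 \subset [m]$, the test index $m+1$ is automatically an element of $D_2 \cup \{m+1\}$. By construction, $D_1$ is independent of $(\bZ^1, \ldots, \bZ^{m+1})$, and these vectors are exchangeable, so their conditional distribution given $D_1$ is still exchangeable. In particular, for any permutation $\tau$ of $D_2 \cup \{m+1\}$, extended to all of $[m+1]$ by fixing $D_1$ pointwise, exchangeability gives
\begin{equation*}
  (\bZ^i)_{i=1}^{m+1} \stackrel{d}{=} (\bZ^{\tau(i)})_{i=1}^{m+1}
  \quad \text{conditional on } D_1.
\end{equation*}
Restricting to the indices in $D_2 \cup \{m+1\}$ shows that $\{\bZ^i\}_{i \in D_2 \cup \{m+1\}}$ is exchangeable given $D_1$.

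\textbf{Step 2 (scores inherit exchangeability).} With $D_1$ fixed, the map $\bZ \mapsto f(\bZ \mid D_1)$ is deterministic. Applying this map coordinatewise to the exchangeable collection $\{\bZ^i\}_{i \in D_2 \cup \{m+1\}}$ preserves exchangeability, yielding that $\{\scorerv^1, \ldots, \scorerv^{(1-\gamma)m}, \scorerv^{m+1}\}$ is exchangeable conditional on $D_1$. Because this holds for every realization of $D_1$, it holds unconditionally as well.

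\textbf{Step 3 (apply the quantile lemma).} Now Lemma~\ref{lemma:standard-exchangability} applied to the $(1-\gamma)m + 1$ exchangeable scores, with $\alpha$ there taken to be $\delta$, immediately gives $\P\bigl(\scorerv^{m+1} \le \quantplus[\delta](\{\scorerv^i\}_{i=1}^{(1-\gamma)m})\bigr) \ge 1 - \delta$, which is the first claim. Under a.s.~distinctness, the upper-bound half of Lemma~\ref{lemma:standard-exchangability} yields the matching lower bound on $\P(\scorerv^{m+1} > \cdot)$ up to the $1/((1-\gamma)m + 1)$ correction stated in the lemma.

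\textbf{Main obstacle.} The argument is essentially bookkeeping; the only delicate point is step 1, namely making rigorous that the independence of $D_1$ from $(\bZ^i)_{i=1}^{m+1}$ together with exchangeability of the $\bZ^i$ implies conditional exchangeability of $\{\bZ^i\}_{i \in D_2 \cup \{m+1\}}$ given $D_1$. Everything else is a mechanical reduction to Lemma~\ref{lemma:standard-exchangability}. One should also be careful that the definition~\eqref{eqn:arbitrary-score-dist} uses an arbitrary ordering $\pi_{D_2}$ of $D_2$, but this is harmless: reindexing a finite exchangeable sequence by any fixed bijection is again exchangeable in the new indexing.
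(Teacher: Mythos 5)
Your proposal is correct and follows essentially the same route as the paper: the paper also establishes that the scores $\{\scorerv^i\}_{i \in D_2}$ together with $\scorerv^{m+1}$ are exchangeable (asserting this from the uniformity of the split and the exchangeability of the $\bZ^i$) and then invokes Lemma~\ref{lemma:standard-exchangability} with $\alpha = \delta$. Your Steps 1--2 simply make explicit the conditioning-on-$D_1$ argument that the paper leaves implicit, which is a fine (indeed more careful) way to justify the same exchangeability claim.
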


\subsubsection{Proof of Theorem~\ref{theorem:main-coverage-split}}

We specialize Lemma~\ref{lemma:burrito} to the setting of
Theorem~\ref{theorem:main-coverage-split}.
Evidently the scores $\scorerv^i_{1 - \alpha}$ in
Alg.~\ref{alg:multi-env-split} are of the
form~\eqref{eqn:arbitrary-score-dist}.
Then observe that
\begin{equation*}
  Y_j^{m + 1} \in \Csplit(X_j^{m + 1})
\end{equation*}
if and only if the residual
\begin{equation*}
  R_j^{m + 1} \defeq |Y_j^{m + 1} - \what{f}_{D_1}(X_j^{m+1})|
  > \quantplus[\delta](\{\scorerv_{1 - \alpha}^i\}_{i \in D_2}).
\end{equation*}
Then
\begin{align*}
  \frac{1}{n_{m + 1}}
  \sum_{j = 1}^{n_{m + 1}}
  \indic{Y_j^{m + 1} \in \Csplit(X_j^{m+1})} < (1 - \alpha)
\end{align*}
if and only if at least $\alpha n_{m + 1}$ of the residuals $R_j^{m + 1}$
satisfy $R_j^{m + 1} > \quantplus[\delta]$.
But by construction of
$\scorerv_{1 - \alpha}^{m + 1} = \quantplus[\alpha](\{R_j^{m + 1}\}_{j =
  1}^{n_{m+1}})$, this implies that
\begin{equation*}
  \scorerv_{1 - \alpha}^{m + 1} > \quantplus[\delta]
  \left(\{\scorerv_{1 - \alpha}^i\}_{i \in D_2}\right).
\end{equation*}
Lemma~\ref{lemma:burrito} implies that $\P(\scorerv_{1 - \alpha}^{m + 1} >
\quantplus[\delta](\{\scorerv_{1 - \alpha}^i\}_{i \in D_2}))
\le \delta$, which is the first claim of the theorem.
The second (when the scores are distinct) follows from the second part of
Lemma~\ref{lemma:burrito}.

\subsubsection{Proof of Theorem~\ref{theorem:nested-coverage-split}}
\label{sec:proof-nested-coverage-split}

The proof is, \emph{mutatis mutandis}, identical to that of
Theorem~\ref{theorem:main-coverage-split} given Lemma~\ref{lemma:burrito}.
All we require is to notice that in Alg.~\ref{alg:multi-env}, we have
\begin{equation*}
  Y_j^{m + 1} \in \Csplit(X_j^{m + 1})
\end{equation*}
if and only if for $\what{\tau} = \quantplus[\delta](\{\scorerv^i_{1 -
  \alpha}\}_{i \in D_2})$, we have
\begin{equation*}
  Y_j^{m + 1} \in \what{C}_{\what{\tau}}^{D_1}(X_j^{m + 1}),
\end{equation*}
which holds if and only if $R_j^i \le \what{\tau}$, where this last
statement uses the right continuity of $C_\tau(x)$ in $\tau$.  The remainder
of the proof mimics that of Theorem~\ref{theorem:main-coverage-split}.

\subsection{Proof of Theorem~\ref{theorem:nested-coverage}}
\label{sec:proof-nested-coverage}

The proof is quite similar to that of Theorem~\ref{theorem:main-coverage},
with appropriate redefinitions of residual matrices and the $A$ matrix.  We
begin with the analogues of $\wt{f}_{-(i,k)}$ and the $R$
matrix~\eqref{eqn:quantile-residual-matrix}. Define
the collections of confidence set mappings
\begin{equation*}
  \left\{\wt{C}^{-(i,k)}_\tau\right\}_{\tau \in \R}
  = \alg\left(\{X^l, Y^l\}_{l \neq i, l \neq k}\right),
\end{equation*}
so that
\begin{equation*}
  \wt{C}^{-(i,m+1)}_\tau = \what{C}^{-i}_\tau
  ~~ \mbox{for~} i \in [m].
\end{equation*}
For collection $\{C_\tau\}_{\tau \in \R}$ of set-valued mappings $C_\tau :
\mc{X} \toto \mc{Y}$, define the minimal covering value
\begin{equation*}
  \tau(x, y, \{C_\tau\}_{\tau \in \R}) \defeq \inf\left\{
  \tau \mid y \in C_\tau(x) \right\}.
\end{equation*}

We can then define the $(1 - \alpha)$-quantile residual matrix
$R \in \R^{(m + 1) \times (m + 1)}$ with entries
\begin{equation}
  \label{eqn:nested-residual-matrix}
  R_{ik} =
  \begin{cases}
    +\infty & \mbox{if}~ i = k \\
    \quantplus[\alpha]\left(
    \left[\tau(X_j^i, Y_j^i, \{\wt{C}^{-(i,k)}_\tau\}_{\tau \in \R})\right]_{j = 1}^{n_i} \right)
    & \mbox{if}~ i \neq k,
  \end{cases}
\end{equation}
so that again $R_{i,m+1} = S^i_{1 - \alpha}$ in
Alg.~\ref{alg:nested-multi-env}, while $R_{m+1,i} = \quantplus([
\tau(X_j^{m+1}, Y_j^{m+1}, \{\what{C}^{-i}_\tau\})]_{j=1}^{n_{m+1}})$ gives
quantiles for coverage on the new environment $m + 1$.  We define the matrix
$A$ identically as in the proof of Theorem~\ref{theorem:main-coverage},
$A_{ik} = \indics{\min_{k'} R_{ik'} > R_{ki}}$ and the set of strange
environments $\mc{S}(A) = \{i \in [m + 1] \mid A_{i,\bullet} \ge (1 -
\delta)(m + 1)\}$ as before.

We again have that $|\mc{S}(A)| \le \delta (m + 1)$ and that $\P(m + 1 \in
\mc{S}(A)) \le \delta$, as in Eq.~\eqref{eqn:strange-probability}. We show
that if coverage in environment $m + 1$, fails, then environment $m + 1$ is
strange. To that end, suppose that
coverage fails, that is,
\begin{align}
  \label{eqn:nested-coverage-failure}
  &\sum_{j = 1}^{n_{m + 1}} \indic{Y_j^{m + 1}
    \in \Cjack(X_j^{m + 1})} \\
    &< \ceil{(1 - \alpha) (n_{m + 1} + 1)} \nonumber.
\end{align}
Recall that for the threshold $\what{\tau} =
\quantplus[\delta](\{S^i_{1-\alpha}\}_{i = 1}^m)$,
Algorithm~\ref{alg:nested-multi-env} sets $\Cjack(x) = \cup_{i = 1}^m
\what{C}_{\what{\tau}}^{-i}(x)$. Then on the
event~\eqref{eqn:nested-coverage-failure}, there necessarily exists a set
$J_\bad$, $|J_\bad| \ge \floor{\alpha(n_{m + 1} + 1)}$, such that coverage
fails for examples $X_j^{m + 1}$ whose indices $j \in J_\bad$:
\begin{equation*}
  Y_j^{m + 1} \not \in \bigcup_{i = 1}^m \what{C}^{-i}_{\what{\tau}}(X_j^{m + 1})
  ~~ \mbox{for~} j \in J_\bad.
\end{equation*}
By definition of $\what{\tau}$ as the
quantile $\quantplus[\delta](\{S^i_{1-\alpha}\})$, then, we
also see that there exists a set $K_\bad \subset [m]$ with
cardinality $|K_\bad| \ge \ceil{(1 - \delta)(m + 1)}$ and for
which if $k \in K_\bad$ and $j \in J_\bad$, we have
\begin{equation*}
  Y_j^{m + 1} \not \in \bigcup_{i = 1}^m \what{C}^{-i}_{S^k_{1-\alpha}}(X_j^{m + 1}).
\end{equation*}

With these equivalences of failing to cover, we replicate the chain of
inequalities in the proof of Theorem~\ref{theorem:main-coverage}.
For shorthand, define $B_{jk} = \tau(X_j^{m + 1}, Y_j^{m + 1},
\{\what{C}_\tau^{-k}\}_{\tau \in \R})$
for $k = 1, \ldots, m + 1$ to be the minimal values for coverage
of example $(X_j^{m + 1}, Y_j^{m + 1})$ by the predictive sets fit
without environment $k$.
Assuming event~\eqref{eqn:nested-coverage-failure} occurs,
we then have
\begin{align*}
  A_{m+1,\bullet}
  & = \sum_{k = 1}^{m+1} \indic{R_{k,m+1} < \min_{k'} R_{m+1,k'}} \\
  & = \sum_{k = 1}^{m + 1} \indic{R_{k, m+1} < \min_{k'}
    \quantplus[\alpha]([B_{jk'}]_{j = 1}^{n_{m + 1}})
    } \\
  & \ge
  \sum_{k = 1}^{m + 1}
  \quantplus[\alpha]\left(\left[
    \indic{R_{k, m + 1} < \min_{k'} B_{jk'}}\right]_{j = 1}^{n_{m+1}}
  \right),
\end{align*}
where the inequality follows from Lemma~\ref{lemma:quantile-comparison}.
But of course, by the construction~\eqref{eqn:nested-residual-matrix} of the
residual matrix, we have $R_{k,m+1} = S^k_{1-\alpha}$, and $S^k_{1-\alpha} <
\min_{k'} \inf\{\tau \mid Y_j^i \in \what{C}^{-k'}_\tau(X_j^{m+1})\}$
if and only if
\begin{equation*}
  Y_j^{m + 1} \not \in \what{C}_{S^k_{1-\alpha}}^{-k'}(X_j^{m + 1})
  ~~ \mbox{for~any}~ k'
\end{equation*}
by the assumed nesting (and right continuity)
property of the confidence sets $\what{C}_\tau$.
We therefore obtain
\begin{align*}
  &A_{m + 1,\bullet}\\
  & \ge \sum_{k = 1}^{m + 1}
  \quantplus[\alpha]\left(
  \left[\indic{Y_j^{m + 1} \not \in \cup_{i = 1}^m
      \what{C}_{S^k_{1-\alpha}}^{-i}(X_j^{m + 1})}\right]_{j = 1}^{n_{m+1}}
  \right)\\
 & \ge (1 - \delta)(m + 1),
\end{align*}
where the final inequality uses Lemma~\ref{lemma:min-max-comparison}
and that the index sets $J_\bad$ and $K_\bad$ have
cardinalities $|K_\bad| \ge \ceil{(1 - \delta)(m + 1)}$
and $|J_\bad| \ge \floor{\alpha(n_{m + 1} + 1)}$.

On the event~\eqref{eqn:nested-coverage-failure}, the environment $m + 1$ is
thus strange, and so applying
the probability bound~\eqref{eqn:strange-probability}
gives Theorem~\ref{theorem:nested-coverage}.

\section{Proofs for Section~\ref{sec:environment-conditional-coverage}}

\subsection{Proof of Proposition~\ref{proposition:indiscrete-impossibility}}
\label{sec:proof-indiscrete-impossibility}

Let $\gamma \ge 0$ to be specified,
and assume for the sake of contradiction that for some $L < \infty$, there
exists a set $F_L$ with marginal probability $P_X(F_L) \ge \delta$ for which
\begin{equation*}
  P^n(\lebesgue(\what{C}_n(x)) \le L) \ge \gamma ~~ \mbox{for~all~} x \in
  F_L.
\end{equation*}
We will show that if
$\gamma > \alpha + 2 \sqrt{\epsilon}$, we have a contradiction.
Said differently, this implies that for any set $F$ with marginal
probability $P_X(F) \ge \delta$, for each $L < \infty$ there exists some
$x \in F$ for which
\begin{equation*}
  P^n(\lebesgue(\what{C}_n(x)) \le L) \le \alpha + 2 \sqrt{\epsilon},
\end{equation*}
i.e.,
\begin{equation*}
  P^n(\lebesgue(\what{C}_n(x)) = +\infty) \ge 1 - \alpha - 2 \sqrt{\epsilon}.
\end{equation*}

Because by assumption the marginal $P_X$ is $(\epsilon/n,
\delta)$-indiscrete, we can obtain a set $G \subset F_L$ such that $0 <
P_X(G) \le \frac{\epsilon}{n}$.
Now, for a value $\kappa$ to be chosen, define the alternative distribution
$Q$ on $(X, Y) \in \mc{X} \times \R$ by
\begin{equation}
  \label{eqn:alt-y-uniform-dist}
  Y \mid X = x \sim
  \begin{cases} P(Y \in \cdot \mid X = x) & \mbox{if}~
    x \in \mc{X} \setminus G \\
    \uniform[-\kappa, \kappa] &
    \mbox{otherwise},
  \end{cases}
\end{equation}
so that under $Q$, $Y$ is uniform on $[-\kappa, \kappa]$ when
$x \in G$, and let the marginal $Q_X = P_X$.

Observe that
\begin{align*}
  \tvnorm{P - Q} & = \half \int dQ(x) |q(y \mid x) - p(y \mid x)| dy \\
  & \le P(X \in G) \le \frac{\epsilon}{n},
\end{align*}
so that by the standard relationship between Hellinger and
variation distance~\cite[Lemma 2.3]{Tsybakov09}
\begin{equation*}
  \tvnorm{P^n - Q^n} \le \sqrt{2} \sqrt{1 - (1 - \epsilon/n)^n}
  \le \sqrt{2 \epsilon}.
\end{equation*}
As a consequence, we see that for each $x \in G$,
\begin{align*}
  \lefteqn{Q^n(\lebesgue(\what{C}_n(x)) \le L)} \\
  & \ge
  P^n(\lebesgue(\what{C}_n(x)) \le L)
  - \tvnorm{P^n - Q^n}
  \ge \gamma - \sqrt{2 \epsilon}.
\end{align*}
Integrating against $Q_X = P_X$, Fubini's theorem thus implies that
\begin{align*}
  & Q^n(\lambda(\what{C}_n(X_{n+1})) \le L, X_{n + 1} \in G) \\
  & \qquad\qquad \ge \left(\gamma - \sqrt{2 \epsilon}\right) Q(X_{n + 1} \in G).
\end{align*}
Recalling our definition of $Q$, which made $Y$ uniform on
$[-\kappa, \kappa]$ when $X \in G$,
we see that whenever the Lebesgue measure $\lambda(\what{C}_n(X_{n+1}))$ is
finite and $\kappa$ is large enough, the probability
that $Y_{n + 1} \in \what{C}_n(X_{n+1})$ is small, and in particular,
there is some $\kappa < \infty$ such that
\begin{align*}
  & Q^n(Y_{n + 1} \not\in \what{C}_n(X_{n+1}), X_{n + 1} \in G) \\
  & \qquad\qquad\qquad \ge \left(\gamma - 2 \sqrt{\epsilon}\right)
  Q(X_{n + 1} \in G).
\end{align*}
Dividing by $Q(X_{n + 1} \in G)$, we obtain that
\begin{equation*}
  Q(Y_{n + 1} \not \in \what{C}_n(X_{n + 1}) \mid X_{n + 1} \in G)
  \ge \gamma - 2 \sqrt{\epsilon}.
\end{equation*}
But we assumed that $\what{C}_n$ had $(1 - \alpha)$-conditional coverage,
and this is a contradiction:
\begin{equation*}
  \alpha \ge Q(Y_{n + 1} \not \in \what{C}_n(X_{n + 1}) \mid X_{n + 1} \in G)
  \ge \gamma - 2 \sqrt{\epsilon},
\end{equation*}
that is, $\alpha \ge \gamma - 2 \sqrt{\epsilon}$.

\subsection{Proof of Proposition~\ref{proposition:no-environment-conditional}}
\label{sec:proof-no-environment-conditional}

The proof is essentially identical to that of
Proposition~\ref{proposition:indiscrete-impossibility}, with minor
modifications to address environments.
Assume for the sake of contradiction that for some $L < \infty$ and $\gamma
> \alphaenv + 2 \sqrt{\epsilon}$, there exists a set $F_L$ with marginal
probability $P_{E}(F_L) \ge \delta$ for which
\begin{equation*}
  P^m(\lebesgue(\what{C}(x, e)) \le L) \ge \gamma
  ~~ \mbox{for~all~} x \in \mc{X}, e \in F_L.
\end{equation*}
Because by assumption the marginal $P_{E}$ is $(\epsilon/m,
\delta)$-indiscrete, we can obtain a set $G \subset F_L$ such that $0 <
P_{E}(G) \le \frac{\epsilon}{m}$. As in our
definion~\eqref{eqn:alt-y-uniform-dist}, for a value $\kappa$ to be chosen,
define the alternative distribution $Q$ to have marginals $Q_{X,E} = P_{X,
  E}$, and define its $Y$ conditional
\begin{equation}
  \label{eqn:alt-y-env-uniform}
  Y \mid X, E \sim
  \begin{cases} P(Y \in \cdot \mid X, E)
    & \mbox{if~} E \in \mc{E} \setminus G \\
    \uniform[-\kappa, \kappa]
    & \mbox{otherwise}.
  \end{cases}
\end{equation}
Thus, for environments $e \in G$, $Y$ is independent of $X$ and
uniform on $[-\kappa, \kappa]$.

Given this construction, the proof from here is essentially identical to
that of Proposition~\ref{proposition:indiscrete-impossibility}. We
have $\tvnorm{Q - P} \le \frac{\epsilon}{m}$, and so
$\tvnorms{Q^m - P^m} \le \sqrt{2 \epsilon}$. Using that
$\lebesgue(\what{C}(x, e))$ is independent of $x$ by assumption,
we obtain
\begin{align*}
  \lefteqn{Q^m(\lebesgue(\what{C}(x, E_{m + 1})) \le L, E_{m+1} \in G)} \\
  & \qquad \qquad \ge \left(\gamma - \sqrt{2 \epsilon}\right) Q(E_{m + 1} \in G).
\end{align*}
Now we recall our definition~\eqref{eqn:alt-y-env-uniform} of the distribution
$Q$, which leaves $Y$ uniform on $[-\kappa, \kappa]$ whenever $E \in G$.
Then we can take $\kappa < \infty$ large enough that
the $Q$ probability that $Y_j^{m + 1} \in \what{C}(X_j^{m + 1}, E_{m+1})$
conditional on $E_{m + 1} \in G$
is arbitrarily small. Consequently, if we let
$\what{P}_n^E(\cdot)$ denote the empirical measure of $n$
examples $(X, Y)$ drawn i.i.d.\ conditional on $E$, we have
\begin{align*}
  & Q^m\left(\what{P}^{E_{m+1}}_{n_{m+1}}(Y \not\in \what{C}(X, E_{m+1}))
  < 1 - \alpha, E_{m+1} \in G\right) \\
  & \qquad \ge  (\gamma - 2 \sqrt{\epsilon}) Q(E_{m+1} \in G)
\end{align*}
for suitably large $\kappa < \infty$.
Dividing by $Q(E_{m+1} \in G) > 0$, we obtain that
\begin{equation*}
  Q(\what{P}^{E_{m+1}}_{n_{m+1}}(Y \not \in \what{C}(X, E_{m+1}))
  \mid E_{m+1} \in G)
  \ge \gamma - 2 \sqrt{\epsilon}.
\end{equation*}
But we assumed that $\what{C}$ had distribution-free
$(\alphaenv, \alpha)$-environment
conditional coverage,
and this contradicts $\alphaenv < \gamma - 2 \sqrt{\epsilon}$.



\section{Consistency Proofs}

\subsection{Proof of Lemma~\ref{lemma:quantiles-from-bl}}
\label{sec:proof-quantiles-from-bl}

Let $t = \quant_\alpha(Q)$ and $u > 0$ be otherwise arbitrary,
and let $v > 0$ be such that $Q(z \le t - u/2) \le \alpha - v$ and
$Q(Z \le t + u/2) \ge \alpha + v$.
We show that for small enough $\epsilon > 0$, if
$\bl{P - Q} \le \epsilon$, then
\begin{equation*}
  P(Z \le t - u) \le \alpha - \frac{v}{2}
  ~~ \mbox{and} ~~
  P(Z \le t + u) \ge \alpha + \frac{v}{2}.
\end{equation*}
As $\quant_\alpha(P) = \inf\{t' \mid P(Z \le t') \ge \alpha\}$,
we then immediately see that
$t - u \le \quant_\alpha(P) \le t + u$, and as $u$ is otherwise arbitrary,
this proves the lemma.

To see the first claim, let $0 < \delta \le u/2$, and
define the $1/\delta$-Lipschitz continuous and bounded function
\begin{equation*}
  f_\delta(z) \defeq \begin{cases} 1 & \mbox{if}~ z \le t \\
    1 - z/\delta & \mbox{if}~ t \le z \le t + \delta \\
    0 & \mbox{if~} t + \delta \le z,
  \end{cases}
\end{equation*}
which approximates the threshold
$\indic{z \le t}$.
Then we have
\begin{align*}
  & P(Z \le t - u)
  \le P f_\delta(Z + u)
  \stackrel{(\star)}{\le} Q f_\delta(Z + u) + \frac{\epsilon}{\delta}\\
&  \le Q(Z + u \le t + \delta) + \frac{\epsilon}{\delta},
\end{align*}
where inequality~$(\star)$ follows because
$\bl{P - Q} \le \epsilon$.
As $\delta \le u/2$, we have
\begin{equation*}
  Q(Z \le t + \delta - u)
  \le Q(Z \le t - u/2)
  \le \alpha - v,
\end{equation*}
and so we have
\begin{equation*}
  P(Z \le t - u)
  \le \alpha - v + \frac{\epsilon}{\delta}.
\end{equation*}
Any $\epsilon < v \delta/2$ thus guarantees
$P(Z \le t - u) \le \alpha - v/2$.
A completely similar argument gives
$P(Z \le t + u) \ge \alpha + v/2$ for
small enough $\epsilon$.

\subsection{Proof of Example~\ref{example:regression-consistency}}
\label{sec:proof-regression-consistency}

To see how Assumption~\ref{assumption:convergence-of-confidences}
follows, note that
\begin{align*}
 & \left|\tau(x, y, \what{C}^{-i}) - \tau(x, y, C)\right|
  = \left||\what{f}_{-i}(x) - y| - |f(x) - y|\right|\\
  & \le \left|\what{f}_{-i}(x) - f(x)\right|, 
\end{align*}
so
\begin{equation*}
  \sup_y \sup_{x \in \mc{X}_\epsilon} \left|\tau(x, y, \what{C}^{-i})
  - \tau(x, y, C)\right| \cas 0
  \end{equation*}
for any $\epsilon > 0$.
Let $\what{\tau} = \tau(\cdot, \cdot, \what{C}^{-i})$ for shorthand
and $\tau = \tau(\cdot, \cdot, C)$. Then
we claim that
\begin{align}\label{eqn:bl-triangle-inequality}
 & \bl{\law(\what{\tau} \mid \what{P}^i)
    - \law(\tau \mid P^i)}\\ \nonumber
  & \le \bl{\law(\what{\tau} \mid \what{P}^i)
    - \law(\tau \mid \what{P}^i)}
  + \bl{\law(\tau \mid \what{P}^i)
    - \law(\tau \mid P^i)}. 
\end{align}
We consider the two terms in turn.  For the first, let $\eta > 0$ and
consider the event that $|\tau(x, y, \what{C}^{-i}) - \tau(x, y, C)| \le
\eta$ for $x \in \mc{X}_\epsilon$, which occurs eventually (with
probability 1).  Note that for any function $h$ with $\linf{h} \le 1$ and
$\lipnorm{h} \le 1$, we have
\begin{align*}
  &\int \left[h(\what{\tau}(x,y)) - h(\tau(x,y))\right] d\what{P}^i(x,y)\\
  & =
  \int_{\mc{X}_\epsilon}
  \left[h(\what{\tau}(x,y))
    - h(\tau(x,y)) \right] d\what{P}^i(x,y)\\
  &+ \int_{\mc{X}_\epsilon^c}
  \left[h(\what{\tau}) - h(\tau)\right] d\what{P}^i \\
  & \le \what{P}^i(\mc{X}_\epsilon)
  \sup_{x \in \mc{X}_\epsilon, y}|\what{\tau}(x, y) - \tau(x, y)|
  +
  2 \what{P}^i(\mc{X}_\epsilon^c) \\
  & \le \eta + 2 \what{P}^i(\mc{X}_\epsilon^c).
\end{align*}
The final term converges a.s.\ to $P^i(\mc{X}_\epsilon^c) \le
P_X(\mc{X}_\epsilon^c) + \sqrt{P_X(\mc{X}_\epsilon^c)} \maxdivergence \le
\epsilon + \maxdivergence \sqrt{\epsilon}$.  For the second term
in~\eqref{eqn:bl-triangle-inequality}, $\tau$ is fixed and so standard
bounded Lipschitz convergence~\cite{VanDerVaartWe96} guarantees its
a.s.\ convergence to 0.  Then with probability 1, for any $\epsilon > 0$ and
$\eta > 0$, we have
\begin{equation*}
  \limsup_n
  \bl{\law(\what{\tau} \mid \what{P}^i) - \law(\tau \mid P^i)}
  \le \eta + 2 (\epsilon + \maxdivergence \sqrt{\epsilon}),
\end{equation*}
which gives Assumption~\ref{assumption:convergence-of-confidences}.

For Assumption~\ref{assumption:continuity-confidence}, let
$\lambda$ be Lebesgue measure, and recognize that for any
$\tau_0, \tau$, we have
\begin{equation*}
  \what{C}^{-i}_{\tau_0}(x) \setdiff
  C_\tau(x)
  = [\what{f}_{-i}(x) \pm \tau_0] \setdiff
  [f(x) \pm \tau],
\end{equation*}
so $\lambda(\what{C}^{-i}_{\tau_0}(x) \setdiff C_\tau(x)) \le 2 |f(x)
- \what{f}_{-i}(x)| + 2 |\tau - \tau_0|$.  For any $\epsilon > 0$, if
$|\tau\opt - \tau| \le \epsilon/4$, the sets $B_{n,\tau}$ in
Assumption~\ref{assumption:continuity-confidence} satisfy
\begin{align*}
 & B^i_{n,\tau} = \left\{x \mid
  \lambda(\what{C}^{-i}_{\tau}(x) \setdiff
  C_{\tau\opt}(x)) \ge \epsilon\right\}\\
 & \subset
  \left\{x \in \mc{X} \mid |\what{f}_{-i}(x) - f(x)| \ge \epsilon/2\right\}.
\end{align*}
The conditions~\eqref{eqn:locally-uniform-f-convergence}
guarantee that for any large enough $n$
such that $\tau = \tau(n)$ satisfies
$|\tau(n) - \tau\opt| \le \epsilon/4$
and any $m \in \N$,
\begin{align*}
 & P_X\bigg(\bigcup_{i = 1}^m B_{n,\tau}^i\bigg)\\
 & \le P_X\bigg(\bigcup_{i = 1}^m
  \left\{x \mid |\what{f}_{-i}(x) - f(x)| \ge \epsilon/2\right\}
  \bigg) \to 0
\end{align*}
as $n \to \infty$. This then must occur for any slowly
enough growing sequence $m(n)$.

\subsection{Proof of Example~\ref{example:quantile-consistency}}
\label{sec:proof-quantile-consistency}

The argument to justify
Assumption~\ref{assumption:convergence-of-confidences} is similar to that
in Example~\ref{example:regression-consistency}
(see Section~\ref{sec:proof-regression-consistency}): as $C_\tau(x) = [l(x) -
  \tau, u(x) + \tau]$ and $\tau(x, y, C) = \max\{l(x) - y, y - u(x)\}$,
we have
\begin{align*}
  &|\tau(x, y, C) - \tau(x, y, \what{C}^{-i})|\\
  & = \big|
  \max\{l(x) - y, y - u(x)\}\\
  & - \max\{\what{l}_{-i}(x) - y, y - \what{u}_{-i}(x)\}
  \big| \\
  & \le \big||l(x) - y| - |\what{l}_{-i}(x) - y| \big|\\
 & +
  \big||y - u(x)| - |y - \what{u}_{-i}(x)|\big| \\
  & \le \big|\what{l}_{-i}(x) - l(x)\big|
  + \big|\what{u}_{-i}(x) - u(x) \big|.
\end{align*}
In particular, as in Example~\ref{example:regression-consistency},
we have $$\sup_y \sup_{x \in \mc{X}_\epsilon}
|\tau(x, y, C) - \tau(x, y, \what{C}^{-i})| \cas 0, $$
and so the bounded Lipschitz
convergence argument there applies and
Assumption~\ref{assumption:convergence-of-confidences} follows.

To obtain the conditions in Assumption~\ref{assumption:continuity-confidence},
recognize that for Lebesgue measure $\lambda$ an application
of the triangle inequality gives
\begin{align*}
&  \lambda\left(\what{C}^{-i}_{\tau_0}(x)
  \setdiff C_\tau(x)\right)\\
&  \le 2\left(\big|\what{l}_{-i}(x) - l(x)\big|
  + \big|\what{u}_{-i}(x) - u(x)\big|
  + |\tau - \tau_0|\right)
\end{align*}
for any $\tau, \tau_0$. The remainder of the argument is,
\emph{mutatis mutandis}, identical to that in
Example~\ref{example:regression-consistency}.

\subsection{Proof of Example~\ref{example:classification-consistency}}
\label{sec:proof-classification-consistency}

We present an analogous argument to that we use in
Example~\ref{example:regression-consistency},
Section~\ref{sec:proof-regression-consistency} to show how
Assumptions~\ref{assumption:convergence-of-confidences}
and~\ref{assumption:continuity-confidence} follow from
the convergence~\eqref{eqn:locally-uniform-logreg-convergence}.
In this case, the loss $\loss$ is Lipschitz continuous, and
so as
\begin{equation*}
  C_\tau(x) = \{y \in [k] \mid \loss(y, f(x)) \le \tau\}
\end{equation*}
and $\tau(x, y, C) = \loss(y, f(x))$, we have
\begin{equation*}
  \max_{y \in [k]} \sup_{x \in \mc{X}_\epsilon}
  \left|\tau(x, y, \what{C}^{-i}) - \tau(x, y, C)\right|
  \cas 0
\end{equation*}
under the convergence~\eqref{eqn:locally-uniform-logreg-convergence}.
Then exactly as in the proof of
Example~\ref{example:regression-consistency} in
Section~\ref{sec:proof-regression-consistency},
we have
\begin{equation*}
  \bl{\law(\tau(X, Y, \what{C}^{-i}) \mid \what{P}^i) -
    \law(\tau(X, Y, C) \mid P^i)} \cas 0,
\end{equation*}
implying Assumption~\ref{assumption:convergence-of-confidences} holds.

Consider Assumption~\ref{assumption:continuity-confidence}.
Let $\tau\opt = \tau\opt(\delta,\alpha)$ for shorthand and
$x \not \in D_{\tau\opt,\epsilon}$,
so there is no $y$ such that $|\loss(y, f(x)) - \tau\opt| < \epsilon$.
Then
\begin{align*}
  &C_{\tau\opt}(x) = \{y \mid \loss(y, f(x)) \le \tau\opt\}\\
  &= \{y \mid \loss(y, f(x)) \le \tau\opt - \epsilon\}
  = \{y \mid \loss(y, f(x)) \le \tau\opt + \epsilon\}
\end{align*}
by definition of $D_{\tau\opt,\epsilon}$. The
Lipschitz continuity of $v \mapsto \loss(y, v)$ implies there
exists $\eta > 0$ such that if $$\norms{\what{f}^{-i}(x) - f(x)} \le \eta, $$
we have $$|\loss(y, \what{f}^{-i}(x)) - \loss(y, f(x))| \le \epsilon/4,$$
and so
if $|\tau - \tau\opt| < \epsilon/4$ and $\norms{\what{f}^{-i}(x) - f(x)}
\le \eta$, then $\loss(y, f(x)) \le \tau\opt - \epsilon$
implies $\loss(y, \what{f}^{-i}(x)) \le \tau\opt - \epsilon/4
\le \tau - \epsilon/2$, and similarly,
$\loss(y, f(x)) > \tau\opt + \epsilon$ implies that
$\loss(y, \what{f}^{-i}(x)) > \tau\opt + 3\epsilon/4
\ge \tau + \epsilon/2$. That is,
\begin{equation*}
  C_{\tau\opt}(x) = \what{C}^{-i}_{\tau}(x).
\end{equation*}
Recalling the notation of
Assumption~\ref{assumption:continuity-confidence}, the
sets $B_{n,\tau}$ then satisfy
\begin{align*}
 & (B^i_{n,\tau})^c = \left\{x \mid
  \what{C}^{-i}_\tau(x) = C_{\tau\opt}(x) \right\}\\
 & \supset \bigg\{x \in \mc{X} \mid
  \norms{\what{f}_{-i}(x) - f(x)} \le \eta,\\
&  \hspace{6mm} |\tau - \tau\opt| \le \epsilon/4,
  x \not\in D_{\tau\opt,\epsilon}\bigg\}.
\end{align*}

As by assumption the sets $\mc{X}_\epsilon$ on which
$\what{f}^{-i}$ uniformly converges have $P_X(\mc{X}_\epsilon) \le
\epsilon$, the convergence~\eqref{eqn:locally-uniform-logreg-convergence}
yields that if $\tau = \tau(n) \to \tau\opt$, then
for any fixed $m \in \N$,
\begin{equation*}
  \lim_{n \to \infty} P_X\bigg(
  \bigcup_{i = 1}^m B^i_{n,\tau}\bigg) \to 0.
\end{equation*}
Once again, this must occur for any sequence
$m(n)$ growing slowly enough to $\infty$.

\subsection{Proof of Lemma~\ref{lemma:consistency-of-delta-quantile}}
\label{sec:proof-consistency-of-delta-quantile}

We show the argument in a few steps, first showing
that $S^i_{1-\alpha}$ and $\quant_{1-\alpha}(P^i)$ are quite close,
then using Assumption~\ref{assumption:quantiles-by-environments}
to show that the $1 - \delta$ quantile of $\quant_{1-\alpha}(P^i)$
converges.

First, we leverage Lemma~\ref{lemma:quantiles-from-bl}. Recalling
that in Algorithm~\ref{alg:nested-multi-env},
the residuals $R_j^i = \tau(X_j^i, Y_j^i, \what{C}^{-i})
= \inf\{\tau \mid Y_j^i \in \what{C}^{-i}_\tau(X_j^i)\}$,
we see that the empirical distribution
$\what{P}_R^i = \frac{1}{n_i} \sum_{j = 1}^{n_i} 1_{R^i_j}$
satisfies
\begin{equation*}
  \bl{\what{P}_R^i - \law(\tau(X, Y, C) \mid P^i)} \cas 0
\end{equation*}
by Assumption~\ref{assumption:convergence-of-confidences}.
In particular, the assumption that
$\tau(X, Y, C)$ has a density under $(X, Y) \sim P^i$ in a neighborhood
of
\begin{equation*}
  \quant_{1 - \alpha}(P^i)
  \defeq \inf \{\tau \mid P^i(\tau(X, Y, C)
  \le \tau) \ge 1 - \alpha\}
\end{equation*}
then guarantees, via Lemma~\ref{lemma:quantiles-from-bl} and the continuous
mapping theorem, that as $n_i \to \infty$ the quantile $S^i_{1 - \alpha} =
\quantplus[\alpha](\{R_j^i\}_{j=1}^{n_i})$ satisfies
\begin{equation*}
  S^i_{1 - \alpha} - \quant_{1 - \alpha}(P^i)
  \cas 0.
\end{equation*}

As an immediate consequence, we obtain
\begin{equation*}
  \max_{i \le m} |\quant_{1 - \alpha}(P^i) - S^i_{1 - \alpha}|
  \cas 0
\end{equation*}
for any fixed $m$, and hence a sequence $m(n)$ growing
slowly enough as $m(n) \to \infty$
as $n \to \infty$.
From this convergence,
an application of the triangle inequality
gives that if $\law(\quant_{1-\alpha}(P^\env))$ denotes the
induced probability law over $\quant_{1-\alpha}(P^\env)$ by
sampling $\env \in \environments$ and
$\law(\{S^i_{1-\alpha}\}_{i=1}^m)$ denotes the empirical
law of the $S^i$, then
\begin{equation*}
  \bl{\law(\quant_{1 - \alpha}(P^\env))
    - \law(\{S^i_{1-\alpha}\}_{i=1}^{m(n)})} \cas 0
\end{equation*}
as $n \to \infty$. Combining
Assumption~\ref{assumption:quantiles-by-environments}
and Lemma~\ref{lemma:quantiles-from-bl}
yields that
\begin{equation*}
  \quantplus[\delta]\left(\{S^i_{1-\alpha}\}_{i=1}^m\right)
  \cas q(\delta),
\end{equation*}
where $q(\delta)$ in the assumption is the unique
$1 - \delta$ quantile of $\quant_{1-\alpha}(P^\env)$
over random $\env$.

\section{Performance of Multi-environment Jackknife+ Quantile}\label{app:jkquantile}

Algorithm~\ref{alg:multi-env-quantile} presents a less conservative
jackknife algorithm than Algorithm~\ref{alg:multi-env}.
We apply Algorithms \ref{alg:multi-env},  \ref{alg:multi-env-split}, and \ref{alg:multi-env-quantile} to the neurochemical sensing data introduced in Section \ref{exp:neurochemical_sensing}. We set $\alpha = 0.05$, and the split ratio to be 0.5 for Algorithm \ref{alg:multi-env-split}. During each experiment, we vary the values of $\delta$, and record the empirical $1-\alpha$, $1- \delta$, and set length. We repeat the experiment 100 times, and display the results in Figure \ref{fig:vary_delta_with_quantile}.
Although Algorithm~\ref{alg:multi-env-quantile} outputs less conservative confidence intervals than the other two, it fails to provide valid coverage except when the input $1 - \delta$ is large.

\begin{center}
  \algbox{
    \label{alg:multi-env-quantile}
    \textbf{Multi-environment Jackknife+ Quantile:} the regression case
  }{
    \textbf{Input:} samples $\{X^i_j, Y^i_j\}_{j = 1}^{n_i}$,
    $i = 1, \ldots, m$, confidence levels $\alpha, \delta$ \\
    \textbf{For} $i = 1, \ldots, m$,

    \hspace{1em} \textbf{set}
    $\what{f}_{-i}$ to the leave-one-out predictor~\eqref{eqn:loo-predictor}

    \hspace{1em} \textbf{construct} residuals
    \begin{equation*}
      R^i_j = |Y_j^i - \what{f}_{-i}(X_j^i)|,
      ~~ j = 1, \ldots, n_i,
    \end{equation*}
    \hspace{1em} and quantiles
    \begin{equation*}
      S^i_{1 - \alpha}
      = \widehat{q}_{n_i, \alpha}^{+} \left(R_1^i, R_2^i,
      \ldots, R_{n_i}^i\right)
    \end{equation*}
    \textbf{Return} confidence interval mapping
    \begin{align*}
      \what{C}_{m, \alpha, \delta}(x) :=
      \Big[& \widehat{q}_{m, \delta}^{-}\left(\what{f}_{-i}(x) - 
        \{S^i_{1-\alpha}\}_{i=1}^m\right),\\
       & \widehat{q}_{m, \delta}^{+}\left(\what{f}_{-i}(x) + 
        \{S^i_{1-\alpha}\}_{i=1}^m\right)
        \Big].
    \end{align*}
  }
\end{center}

\begin{figure}
\centering
\includegraphics[width=9cm]{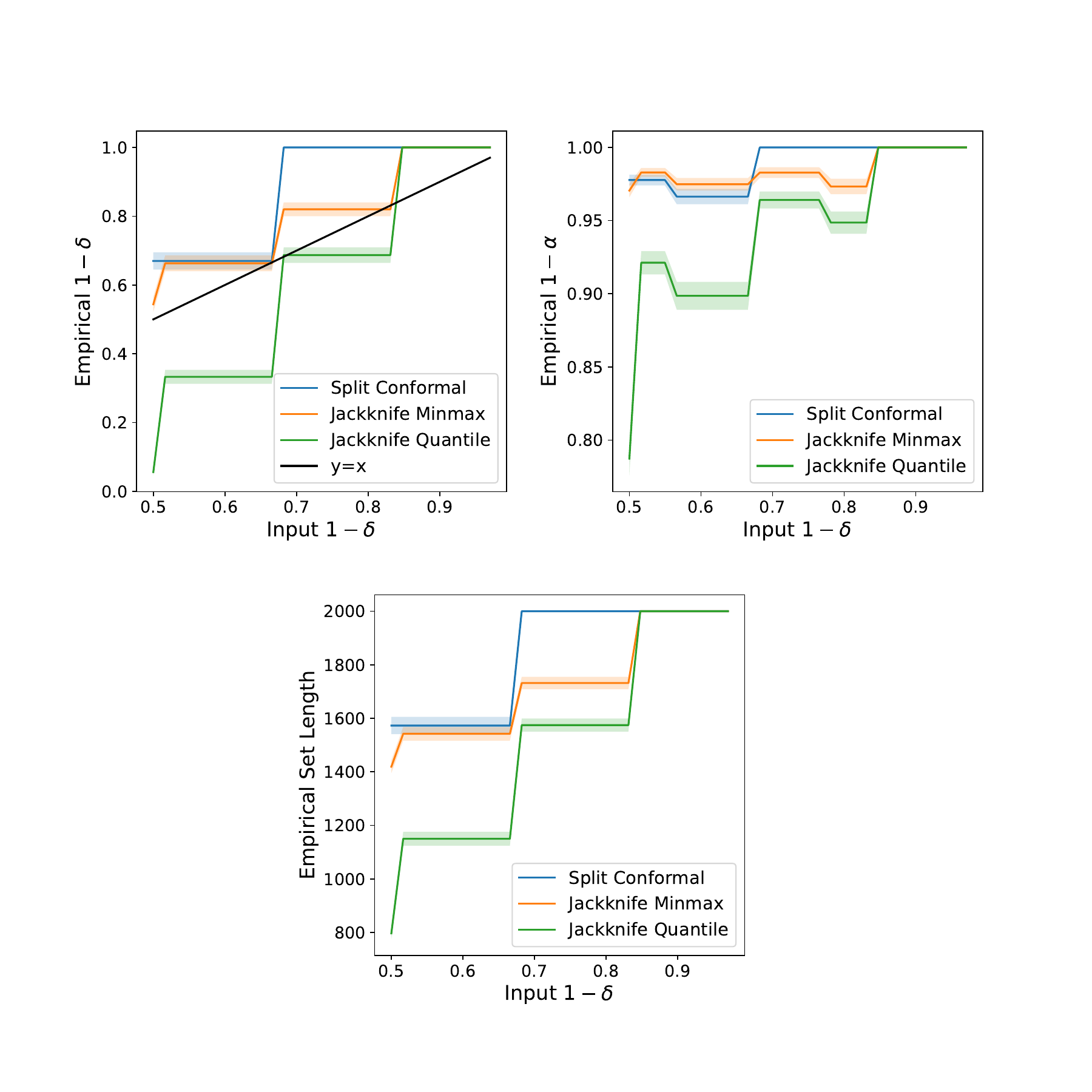}
\vspace{-1cm}
\caption{\label{fig:vary_delta_with_quantile} Influence of input $\delta$ on the performance of Algorithms \ref{alg:multi-env},  \ref{alg:multi-env-split}, and \ref{alg:multi-env-quantile} applied to the neurochemical sensing data,
  with $\alpha = .05$. The plots show the empirical $1-\delta$, empirical $1-\alpha$, and empirical set length for both the split conformal and jackknife-minmax algorithms with various input $\delta$.}
\end{figure}

\end{appendix}

\begin{acks}[Acknowledgments]
This work began when SG was a student at Stanford. JCD and SG were partially
supported by the National Science Foundation grant RI-2006777 and Office of
Naval Research grant N00014-22-1-2669. PS and KJ were partially funded by the National Science Foundation grant 
DMS-2113426. PS was also partially funded by the Eric and Wendy Schmidt Fund for Strategic Innovation. 
\end{acks}

\bibliographystyle{imsart-number} 
\bibliography{bib}

\end{document}